\documentclass[twoside,11pt]{article}

%

\usepackage{thmtools} 
\usepackage{amssymb, amsmath, mathtools} 
\usepackage{graphicx, float} 
\usepackage{chngcntr} 
\usepackage{bbm} 
\usepackage{indentfirst} 
\usepackage{microtype} 
\usepackage{tikz}
\usetikzlibrary{matrix}
\usepackage{graphicx}
\usepackage{booktabs} 
\usepackage{xparse}   
\NewDocumentCommand{\rot}{O{45} O{1em} m}{\makebox[#2][l]{\rotatebox{#1}{#3}}}%
\setlength{\tabcolsep}{3pt}
\usepackage{tabularx}

\usepackage{xcolor} 
\usepackage{subcaption}
\usepackage{enumerate}
\usepackage{comment}
\usepackage{thmtools} 
\usepackage{algorithm, algorithmicx, algpseudocode}
\usepackage{cuted}
\usepackage{arydshln}

\usepackage{jmlr2e}

\graphicspath{ {../figures/} }


\renewcommand{\P}{\mathbb{P}} 
\newcommand{\E}{\mathbb{E}} 
\newcommand{\Var}{\mathrm{Var}} 
\newcommand{\Cov}{\mathrm{Cov}} 
\newcommand{\Cor}{\mathrm{Cor}} 
\newcommand\indep{\protect\mathpalette{\protect\independenT}{\perp}} \def\independenT#1#2{\mathrel{\rlap{$#1#2$}\mkern2mu{#1#2}}} 
\newcommand{\1}{\mathbbm{1}} 
\newcommand{\R}{\mathbb{R}} 
\newcommand{\N}{\mathbb{N}} 
\renewcommand{\O}{\mathcal{O}} 
\newcommand{\iid}{\stackrel{i.i.d.}{\sim}} 
\DeclareMathOperator*{\argmin}{arg\,min} 
\DeclareMathOperator*{\argmax}{arg\,max} 
\DeclarePairedDelimiter\norm{\lVert}{\rVert} 
\newcommand\myeq{\mkern2.5mu{=}\mkern2.5mu} 
\renewcommand\mid{\mkern4mu{|}\mkern4mu} 

\newcommand{\F}{\ensuremath{{\mathcal F}}}
\renewcommand{\H}{\ensuremath{{\mathcal H}}}
\newcommand{\A}{\ensuremath{{\mathcal A}}}


\usepackage{lastpage}
\jmlrheading{23}{2022}{1-\pageref{LastPage}}{5/21; Revised 10/22}{10/22}{21-0585}{Domagoj \'{C}evid, Loris Michel, Jeffrey N\"{a}f, Peter B\"{u}hlmann and Nicolai Meinshausen}

\ShortHeadings{Distributional Random Forests}{\'{C}evid, Michel, N\"{a}f, B\"{u}hlmann and Meinshausen}

\firstpageno{1}

\begin{document}

\title{Distributional Random Forests: Heterogeneity Adjustment and Multivariate Distributional Regression}

\author{\name Domagoj \'{C}evid \email cevid@stat.math.ethz.ch
\AND
\name Loris Michel \email michel@stat.math.ethz.ch
\AND
\name Jeffrey N\"{a}f \email naef@stat.math.ethz.ch
\AND
\name Peter B\"{u}hlmann \email buhlmann@stat.math.ethz.ch
\AND
\name Nicolai Meinshausen \email meinshausen@stat.math.ethz.ch\\
\addr Seminar f\"{u}r Statistik\\
ETH Z\"{u}rich\\
8092 Z\"{u}rich, Switzerland
}

\editor{Mladen Kolar}

\maketitle

\begin{abstract}
Random Forest \citep{breiman2001random} is a successful and widely used regression and classification algorithm. Part of its appeal and reason for its versatility is its (implicit) construction of a kernel-type weighting function on training data, which can also be used for targets other than the original mean estimation. We propose a novel forest construction for multivariate responses based on their joint conditional distribution, independent of the estimation target and the data model. It uses a new splitting criterion based on the MMD distributional metric, which is suitable for detecting heterogeneity in multivariate distributions. The induced weights define an estimate of the full conditional distribution, which in turn can be used for arbitrary and potentially complicated targets of interest. The method is very versatile and convenient to use, as we illustrate on a wide range of examples. The code is available as \texttt{Python} and \texttt{R} packages \textbf{\texttt{drf}}.
\end{abstract}

\begin{keywords} causality, distributional regression, fairness, Maximal Mean Discrepancy, Random Forests, two-sample testing
\end{keywords}

\section{Introduction}

In practice, one often encounters heterogeneous data, whose distribution is not constant, but depends on certain covariates. For example, data can be collected from several different sources, its distribution might differ across certain subpopulations or it could even change with time, etc. Inferring valid conclusions about a certain target of interest from such data can be very challenging as many different aspects of the distribution could potentially change. As an example, in medical studies, the effectiveness of a certain treatment might not be constant throughout the population but depend on certain patient characteristics such as age, race, gender, or medical history. Another issue could be that different patient groups were not equally likely to receive the same treatment in the observed data.

Obviously, pooling all available data together can result in invalid conclusions. On the other hand, if for a given test point of interest one only considers similar training data points, i.e.\ a small homogeneous subpopulation, one may end up with too few samples for accurate statistical estimation. In this paper, we propose a method based on the Random Forest algorithm \citep{breiman2001random} which in a data-adaptive way determines for any given test point which training data points are relevant for it. This in turn can be used for drawing valid conclusions or for accurately estimating any quantity of interest.

Let $\bold{Y}=(Y_1, Y_2, \ldots, Y_d) \in \mathbb{R}^{d}$ be a multivariate random variable representing the data of interest, but whose joint distribution is heterogeneous and depends on some subset of a potentially large number of covariates $\bold{X}=(X_1, X_2, \ldots, X_p) \in \mathbb{R}^{p}$. Throughout the paper, vector quantities are denoted in bold. We aim to estimate a certain target object $\tau(\bold{x})$ that depends on the conditional distribution $\P(\bold{Y} \mid \bold{X} \myeq \bold{x}) = \P(\bold{Y} \mid X_1 \myeq x_1, \ldots, X_p \myeq x_p)$, where $\bold{x} = (x_1, \ldots, x_p)$ is an arbitrary point in $\mathbb{R}^{p}$. The estimation target $\tau(\bold{x})$ can range from simple quantities, such as the conditional expectations $\E[f(\bold{Y}) \mid \bold{X}]$ \citep{breiman2001random} or quantiles $Q_\alpha[f(\bold{Y}) \mid \bold{X}]$ \citep{meinshausen2006quantile} for some function $f:\R^d \to \R$, to some more complicated aspects of the conditional distribution $\P(\bold{Y} \mid \bold{X} \myeq \bold{x})$, such as conditional copulas or conditional independence measures. Given the observed data $\{(\bold{x}_i,\bold{y}_i)\}_{i=1}^n$, the most straightforward way of estimating $\tau(\bold{x})$ nonparametrically would be to consider only the data points in some neighborhood $\mathcal{N}_{\bold{x}}$ around $\bold{x}$, e.g.\ by considering the $k$ nearest neighbors according to some metric. However, such methods typically suffer from the curse of dimensionality even when $p$ is only moderately large: for a reasonably small neighborhood, such that the distribution $\P(\bold{Y}\mid\bold{X}\in \mathcal{N}_\bold{x})$ is close to the distribution $\P(\bold{Y}\mid\bold{X}\myeq\bold{x})$, the number of training data points contained in $\mathcal{N}_{\bold{x}}$ will be very small, thus making the accurate estimation of the target $\tau(\bold{x})$ difficult. The same phenomenon occurs with other methods which locally weight the training observations such as kernel methods \citep{silverman1986density}, local MLE \citep{fan1998local} or weighted regression \citep{cleveland1979robust} even for the relatively simple problem of estimating the conditional mean $\E[\bold{Y}\mid\bold{X}\myeq\bold{x}]$ for fairly small $p$. For that reason, more importance should be given to the training data points $(\bold{x}_i, \bold{y}_i)$ for which the response distribution $\P(\bold{Y}\mid\bold{X}\myeq\bold{x}_i)$ at point $\bold{x}_i$ is similar to the target distribution $\P(\bold{Y}\mid\bold{X}\myeq\bold{x})$, even if $\bold{x}_i$ is not necessarily close to $\bold{x}$ in every component. 

In this paper, we propose the Distributional Random Forest (DRF) algorithm which estimates the multivariate conditional distribution $\P(\bold{Y}\mid\bold{X}\myeq\bold{x})$ in a locally adaptive fashion. This is done by repeatedly dividing the data points in the spirit of the Random Forest algorithm \citep{breiman2001random}: at each step, we split the data points into two groups based on some feature $X_j$ in such a way that the distribution of $\bold{Y}$ for which $X_j \leq l$, for some level $l$, differs the most compared to the distribution of $\bold{Y}$ when $X_j > l$, according to some distributional metric. One can use any multivariate two-sample test statistic, provided it can detect a wide variety of distributional changes. As the default choice, we propose a criterion based on the Maximal Mean Discrepancy (MMD) statistic \citep{gretton2007kernel} with many interesting properties. 
This splitting procedure partitions the data such that the distribution of the multivariate response $\bold{Y}$ in the resulting leaf nodes is as homogeneous as possible, thus defining neighborhoods of relevant training data points for every $\bold{x}$. Repeating this many times with randomization induces a weighting function $w_{\bold{x}}(\bold{x}_i)$ as in \citet{lin2002random, lin2006random}, described in detail in Section \ref{method_description}, which quantifies the relevance of each training data point $\bold{x}_i$ for a given test point $\bold{x}$. The conditional distribution is then estimated by an empirical distribution determined by these weights \citep{meinshausen2006quantile}. This construction is data-adaptive as it assigns more weight to the training points $\bold{x}_i$ that are closer to the test point $\bold{x}$ in the components which are more relevant for the distribution of $\bold{Y}$.

Our forest construction does not depend on the estimation target $\tau(\bold{x})$, but it rather estimates the conditional distribution $\P(\bold{Y} \mid \bold{\mathbf{X}=\mathbf{x}})$ directly and the induced forest weights can be used to estimate $\tau(\bold{x})$ in a second step. This approach has several advantages. First, only one DRF fit is required to obtain estimates of many different targets, which has a big computational advantage. Furthermore, since those estimates are obtained from the same forest fit, they are mutually compatible. For example, if the conditional correlation matrix $\{\Cor(Y_i,\, Y_j \mid \bold{X}\myeq\bold{x})\}_{i,j=1}^d$ is estimated componentwise using some other method, the resulting matrix might not be positive semidefinite, and as another example, the CDF estimates $\hat{\P}(\bold{Y} \leq \bold{y} \mid \bold{X}\myeq\bold{x})$ might not be monotone in $\bold{y}$, see Figure \ref{comparison}. Finally, it could be extremely difficult to tailor forest construction to some complex targets $\tau(\bold{x})$.
The induced weighting function can thus be used not only for obtaining simple distributional aspects such as, for example, the conditional quantiles, conditional correlations, or joint conditional probability statements, but also to obtain more complex objectives, such as conditional independence tests \citep{zhang2012kernel}, heterogeneous regression (see also Section 
\ref{sec: causality} for more details) \citep{kunzel2019metalearners, wager2018estimation} or semiparametric estimation by fitting a parametric model for $\bold{Y}$, having nonparametrically adjusted for $\bold{X}$ \citep{bickel1993efficient}. Representation of the conditional distribution via the weighting function has a great potential for applications in causality such as causal effect estimation or as a way of implementing do-calculus \citep{pearl2009causality} for finite samples, as we discuss in Section \ref{sec: causality}.

Therefore, DRF is used in two steps: in the first step, we obtain the weighting function $w_\bold{x}(\cdot)$ describing the conditional distribution $\P(\bold{Y}\mid\bold{X}\myeq\bold{x})$ in a target- and model-free way, which is then used as an input for the second step. Even if the method used in the second step does not directly support weighting of the training data points, one can easily resample the data set with the sampling probabilities equal to $\{w_\bold{x}(\bold{x}_i)\}_{i=1}^n$. This two-step approach is visualized in the following diagram:
\begin{center}
\hspace{2cm}
\begin{tikzpicture}
  \matrix (m) [matrix of math nodes,row sep=2em, column sep=12em,minimum width=2em]
  {
     \P(\bold{Y}\mid\bold{X}\myeq\bold{x}) & \hat{\P}(\bold{Y}\mid\bold{X}\myeq\bold{x}) \\
     \tau(\P) & \tau(\hat{\P}) \\};
  \path[-stealth]
    (m-1-1) edge node [above] {1) get $w_\bold{x}(\cdot)$ with DRF} (m-1-2)
    (m-1-1) edge [dashed] node [left] {objective} (m-2-1)
    (m-1-2) edge node [right]{2) compute from $w_\bold{x}(\cdot)$} (m-2-2)
    (m-2-2) edge [dashed] node [above] {induced estimator} (m-2-1);
\end{tikzpicture}
\end{center}

\subsection{Related work and our contribution}

Several adaptations of the Random Forest algorithm have been proposed for targets beyond the original one of the univariate conditional mean $\E[Y \mid \bold{\mathbf{X}\myeq\mathbf{x}}]$: for survival analysis \citep{hothorn2006survival}, conditional quantiles \citep{meinshausen2006quantile}, density estimation \citep{pospisil2018rfcde}, CDF estimation \citep{hothorn2017transformation} or heterogeneous treatment effects \citep{wager2018estimation}. Almost all such methods use the weights induced by the forest, as described in Section \ref{method_description}, rather than averaging the estimates obtained per tree. This view of Random Forests as a powerful adaptive nearest neighbor method is well known and dates back to \citet{lin2002random, lin2006random}. It was first used for targets beyond the conditional mean in \citet{meinshausen2006quantile}, where the original forest construction with univariate $Y$ was used \citep{breiman2001random}. However, the univariate response setting considered there severely restricts the number of interesting targets $\tau(\bold{x})$ and DRF can thus be viewed as an important generalization of this approach to the multivariate setting.

In order to be able to perform certain tasks or to achieve a better accuracy, many forest-based methods adapt the forest construction by using a custom splitting criterion tailored to their specific target, instead of relying on the standard CART criterion.
In \citet{zeileis2008model} and \citet{hothorn2017transformation}, a parametric model for the response $\bold{Y} \mid \bold{\mathbf{X}\myeq\mathbf{x}} \sim f(\theta(\bold{x}), \cdot)$ is assumed and recursive splitting is performed based on a permutation test which uses the user-provided score functions. Similarly, \citet{athey2019generalized} estimate certain univariate targets for which there exist corresponding score functions defining the local estimating equations. The data is split so that the estimates of the target in resulting child nodes differ the most. This is different, though, to the target-free splitting criterion of DRF, which splits so that the distribution of $\bold{Y}$ in child nodes is as different as possible.

Since the splitting step is extensively used in the algorithm, its complexity is crucial for the overall computational efficiency of the method, and one often needs to resort to approximating the splitting criterion \citep{pospisil2018rfcde, athey2019generalized} to obtain good computational run time. We propose a splitting criterion based on a fast random approximation of the MMD statistic \citep{gretton2012kernel, zhao2015fastmmd}, which is commonly used in practice for two-sample testing as it is able to detect any change in the multivariate distribution of $\bold{Y}$ with good power \citep{gretton2007kernel}. DRF with the MMD splitting criterion also has interesting theoretical properties as shown in Section \ref{sec: theory} below.

The multivariate response case has not received much attention in the Random Forest literature. Most of the existing forest-based methods focus on either a univariate response $Y$ or on a certain univariate target $\tau(\bold{x})$. One interesting line of work considers density estimation \citep{pospisil2018rfcde} and uses aggregation of the CART criteria for different response transformations. Another approach \citep{kocev2007ensembles, segal2011multivariate, ishwaran2014randomforestsrc} is based on aggregating standard univariate CART splitting criteria for $Y_1, \ldots, Y_d$ and targets only the conditional mean of the responses, a task which could also be solved by separate regression fits for each $Y_i$. In order to capture any change in the distribution of the multivariate response $\bold{Y}$, one needs to not only consider the marginal distributions for each component $Y_i$, but also to determine whether their dependence structure changes, see e.g.\ Figure \ref{fig: conditional-independence}.

There is an increasing number of methods that nonparametrically estimate the joint multivariate conditional distribution $\P(\bold{Y} \mid \bold{X}\myeq\bold{x})$ in the statistics and machine learning literature. In addition to a few simple classical methods such as $k$-nearest neighbors and kernel regression, there exist methods based on normalizing flows such as Inverse Autoregressive Flow \citep{kingma2016improving} or Masked Autoregressive Flow \citep{papamakarios2017masked} and also conditional variants of several popular generative models such as Conditional Generative Adversarial Networks \citep{mirza2014conditional} or Conditional Variational Autoencoder \citep{sohn2015learning}. The focus of these methods is more on the settings with large response dimension $d$ and small covariate dimension $p$, such as image or text generation. Another interesting and related line of research focuses on estimating the conditional mean embedding (CME), as described e.g., in \cite{CMEinDynamicalSystems, kernelmeanembeddingreview, CMEinGraphicalModels, OurapproachtoCME}, rather than estimating the conditional distribution directly. CMEs generalize the concept of embedding (marginal) probability distributions into a Reproducing Kernel Hilbert Space (RKHS) to the conditional case. Interestingly, DRF with the MMD-based splitting criterion can also be viewed as a method for estimating the CME, as discussed in Section \ref{sec: theory} below. This viewpoint provides a natural connection between the Random Forest and kernel embedding literature.
A comparison of DRF with the methods for distributional estimation listed above can be found in Section \ref{sec: benchmarks}.

Our contribution, resulting in the proposal of the Distributional Random Forest (DRF), can be summarized as follows: First, we introduce the idea of forest construction based on sequential multivariate two-sample test statistics. It does not depend on a particular estimation target and is completely nonparametric, which makes its implementation and usage very simple and universal. Not only does it not require additional user input such as the log-likelihoods or score functions, but it can be used even for complicated targets for which there is no obvious forest construction. Furthermore, it has a computational advantage as only a single forest fit is needed for producing estimates of many different targets that are additionally compatible with each other.
Second, we propose an MMD-based splitting criterion with good statistical and computational properties, for which we also derive interesting theoretical results in Section \ref{sec: theory}. It underpins our implementation, which we provide as \texttt{R} and \texttt{Python} packages \texttt{drf}.
Finally, we show on a broad range of examples in Section \ref{sec: applications} how many different statistical estimation problems, some of which not being easily tractable by existing forest-based methods, can be cast to our framework, thus illustrating the usefulness and versatility of DRF.

\section{Method} \label{method_description}
In this section we describe the details of the Distributional Random Forest (DRF) algorithm. We closely follow the implementations of the \texttt{grf} \citep{athey2019generalized} and \texttt{ranger} \citep{wright2015ranger} \texttt{R}-packages. A detailed description of the method and its implementation and the corresponding pseudocode can be found in the Appendix \ref{appendix: impdetails}.

\subsection{Forest Building} 

The trees are grown recursively in a model-free and target-free way as follows: 
For every parent node $P$, we determine how to best split it into two child nodes of the form $C_L = \{X_j \leq l\}$ and $C_R = \{X_j > l\}$, where the variable $X_j$ is one of the randomly chosen splitting candidates and $l$ denotes its level based on which we perform the splitting. The split is chosen such that we maximize a certain (multivariate) two-sample test statistic
\begin{equation} \label{eq: splitcrit}
\mathcal{D}\left(\{\bold{y}_i\mid \bold{x}_i \in C_L\} \,,\, \{\bold{y}_i \mid \bold{x}_i \in C_R\}\right),
\end{equation}
which measures the difference of the empirical distributions of the data $\bold{Y}$ in the two resulting child nodes $C_L$ and $C_R$. Therefore, in each step we select the candidate predictor $X_j$ which seems to affect the distribution of $\bold{Y}$ the most, as measured by the metric $\mathcal{D}(\cdot,\cdot)$. Intuitively, in this way we ensure that the distribution of the data points in every leaf of the resulting tree is as homogeneous as possible, which helps mitigate the bias caused by pooling the heterogeneous data together. A related idea can be found in GRF \citep{athey2019generalized}, where one attempts to split the data so that the resulting estimates $\hat{\tau}_L$ and $\hat{\tau}_R$, obtained respectively from data points in $C_L$ and $C_R$, differ the most:
\begin{equation}\label{eq: GRF split}
\frac{n_Ln_R}{n_P^2} \left(\hat{\tau}_L - \hat{\tau}_R\right)^2,
\end{equation}
where we write $n_P = |\{i \mid \bold{x}_i \in P\}|$ and $n_L, n_R$ are defined analogously.

One could construct the forest using any metric $\mathcal{D}(\cdot,\cdot)$ for empirical distributions. However, in order to have a good accuracy of the overall method, the corresponding two-sample test using $\mathcal{D}(\cdot,\cdot)$ needs to have a good power for detecting any kind of change in distribution, which is a difficult task in general, especially for multivariate data \citep{bai1996effect, szekely2004testing}. Another very important aspect of the choice of distributional metric $\mathcal{D}(\cdot, \cdot)$ is the computational efficiency; one needs to be able to sequentially compute the values of $\mathcal{D}\left(\{\bold{y}_i\mid \bold{x}_i \in C_L\} \,,\, \{\bold{y}_i \mid \bold{x}_i \in C_R\}\right)$ for every possible split very fast for the overall algorithm to be computationally feasible, even for moderately large data sets. Below, we propose a splitting criterion based on the MMD two-sample test statistic \citep{gretton2007kernel} which has both good statistical and computational properties.

In contrast to other forest-based methods, we do not use any information about our estimation target $\tau$ in order to find the best split of the data, which comes with a certain trade-off. On one hand, it is sensible that tailoring the splitting criterion to the target should improve the estimation accuracy; for example, some predictors might affect the conditional distribution of $\bold{Y}$, but not necessarily the estimation target $\tau$ and splitting on such predictors unnecessarily reduces the number of training points used for estimating $\tau$. On the other hand, our approach has multiple benefits: it is easier to use as it does not require any user input such as the likelihood or score functions and it can also be used for very complicated targets for which one could not easily adapt the splitting criterion. Furthermore, only one DRF fit is necessary for producing estimates of many different targets, which has both computational advantage and the practical advantage that the resulting estimates are mutually compatible (see e.g.\ Figure \ref{fig: functionals}). 

Interestingly, sometimes it could even be beneficial to split based on a predictor which does not affect the target of estimation, but which affects the conditional distribution. This is illustrated by the following toy example. Suppose that for a bivariate response $(Y_1, Y_2)$ we are interested in estimating the slope of the linear regression of $Y_2$ on $Y_1$ conditionally on $p=30$ predictors $\bold{X}$, i.e. our target is $\tau(\bold{x}) = \Cov(Y_1, Y_2 \mid \bold{X}\myeq\bold{x})/\Var(Y_1 \mid \bold{X}\myeq\bold{x})$. This is one of the main use cases for GRF and its variant which estimates this target is called Causal Forest \citep{wager2018estimation, athey2019generalized}. Let us assume that the data has the following distribution:
\begin{equation} \label{eq: toy_example}
\P\left(\begin{bmatrix}Y_1 \\ Y_2\end{bmatrix} \,\,\middle|\,\, \bold{X}\myeq\bold{x}\right) \sim N\left(\begin{bmatrix} x_1 \\ x_1 \end{bmatrix}, \begin{bmatrix}\sigma^2 &0\\ 0 &\sigma^2\end{bmatrix}\right) \qquad \bold{X} \sim  N(\bold{0}, I_p),
\end{equation}
i.e. $X_1$ affects only the mean of the responses, while the other $p-1$ predictors have no effect. In Figure \ref{fig: toy_example} we illustrate the distribution of the data when $n=300, p=30, \sigma=0.2$, together with the DRF and GRF splitting criteria. The true value of the target is $\tau(\bold{x})=0$, but when $\sigma$ is not too big, the slope estimates $\hat{\tau}$ on pooled data will be closer to $1$. Therefore, the difference of $\hat{\tau}_L$ and $\hat{\tau}_R$ between the induced slope estimates for a candidate split, which is used for splitting criterion \eqref{eq: GRF split} of GRF, might not be large enough for us to decide to split on $X_1$, or the resulting split might be too unbalanced. This results in worse forest estimates for this toy example, see Figure \ref{fig: toy_example}.

\begin{figure}[h]
    \centering
    \hspace*{0cm}
    \includegraphics[width=0.95\textwidth]{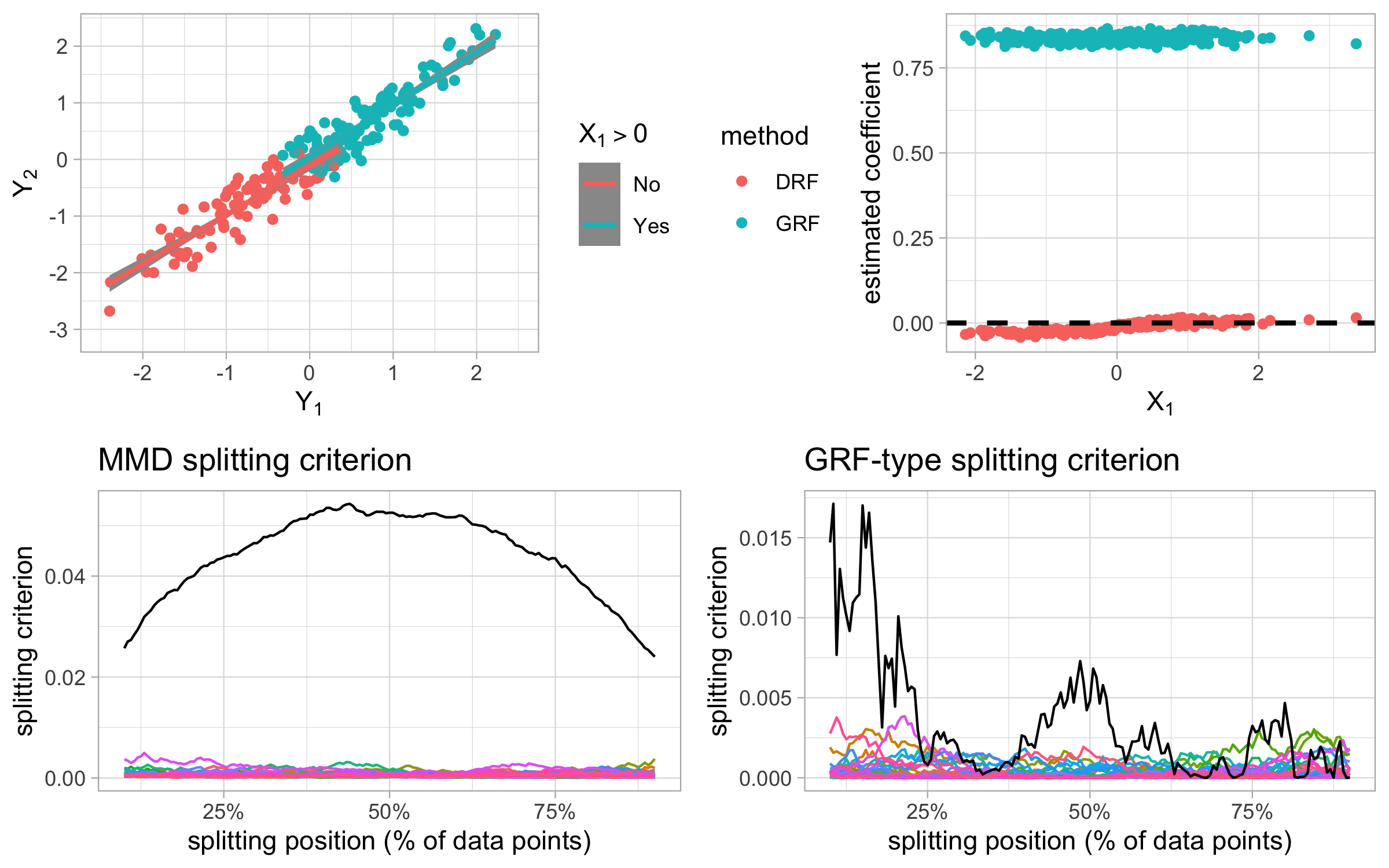}
    \caption{Top left: Illustration of data distribution for the toy example \eqref{eq: toy_example} when $n=300,\, p=30$. Bottom: The corresponding MMD \eqref{eq: MMD splitcrit} (left) and GRF \eqref{eq: GRF split} splitting criteria (right) at the root node. The curves of different colors correspond to different predictors, with $X_1$ denoted in black. Top right: Comparison of the estimates of DRF and Causal Forest \citep{athey2019generalized} which respectively use those splitting criteria. Test points were randomly generated from the same distribution as the training data. Black dashed line indicates the correct value of the target quantity.}
    \label{fig: toy_example}
\end{figure}

\subsection{Weighting Function}
\label{sec: weighting function}
Having constructed our forest, just as the standard Random Forest \citep{breiman2001random} can be viewed as the weighted nearest neighbor method \citep{lin2002random}, we can use the induced weighting function to estimate the conditional distribution at any given test point $\bold{x}$ and thus any other quantity of interest $\tau(\bold{x})$. This approach is commonly used in various forest-based methods for obtaining predictions, see e.g., \citet{hothorn2017transformation, pospisil2018rfcde, athey2019generalized}.

Suppose that we have built $N$ trees $\mathcal{T}_1, \ldots, \mathcal{T}_N$. Let $\mathcal{L}_k(\bold{x})$ be the set of the training data points which end up in the same leaf as $\bold{x}$ in the tree $\mathcal{T}_k$. The weighting function $w_{\bold{x}}(\bold{x}_i)$ is defined as the average of the corresponding weighting functions per tree \citep{lin2006random}:
\begin{equation} \label{eq: weighting function}
w_{\bold{x}}(\bold{x}_i) = \frac{1}{N} \sum_{k=1}^N \frac{\1\left(\bold{x}_i \in \mathcal{L}_k(\bold{x})\right)}{|\mathcal{L}_k(\bold{x})|}.
\end{equation}
The weights are positive and add up to $1$: $\sum_{i=1}^n w_{\bold{x}}(\bold{x}_i) = 1$. In the case of equally sized leaf nodes, the assigned weight to a training point $\bold{x}_i$ is proportional to the number of trees where the test point $\bold{x}$ and $\bold{x}_i$ end up in the same leaf node. This shows that forest-based methods can in general be viewed as adaptive nearest neighbor methods. The sets $\mathcal{L}_k(\bold{x})$ of DRF will contain data points $(\bold{x}_i, \bold{y}_i)$ such that $\P(\bold{Y}\mid \bold{X}=\bold{x}_i)$ is close to $\P(\bold{Y}\mid \bold{\mathbf{X}=\mathbf{x}})$, thus removing bias due to heterogeneity of $\bold{Y}$ caused by $\bold{X}$. On the other hand, since the trees are constructed randomly and are thus fairly independent \citep{breiman2001random}, the leaf sets $\mathcal{L}_k(\bold{x})$ will be different enough so that the induced weights $w_{\bold{x}}(\bold{x}_i)$ are not concentrated on a small set of data points, which would lead to high estimation variance. Such good bias-variance tradeoff properties of forest-based methods are also implied by their asymptotic properties \citep{biau2012analysis, wager2014asymptotic}, even though this is a still active area of research and not much can be shown rigorously.

One can estimate the conditional distribution  $\P(\bold{Y}\mid\bold{X}=\bold{x})$ from the weighting function by using the corresponding empirical distribution:
\begin{equation} \label{eq: distributional estimate}
\hat{\P}(\bold{Y} \mid \bold{X}=\bold{x}) = \sum_{i=1}^n w_{\bold{x}}(\bold{x}_i)\cdot\delta_{\bold{y}_i},
\end{equation}
where $\delta_{\bold{y}_i}$ is the point mass at $\bold{y}_i$.

\paragraph{Two-step approach using weights.}
The weighting function $w_{\bold{x}}(\bold{x}_i)$ can directly be used for any target $\tau(\bold{x})$ in a second step and not just for estimating the conditional distribution. For example, the estimated conditional joint CDF is given by 
\begin{equation}\label{eq: CDF estimate}
\hat{F}_{\bold{Y}\mid\bold{X}\myeq\bold{x}}(\bold{t}) = \hat{\P}(Y_1 \leq t_1, \ldots, Y_d \leq t_d \mid \bold{X}\myeq\bold{x}) = \sum_{i=1}^n w_\bold{x}(\bold{x}_i) \1(\cap_{j=1}^d \{(\bold{y}_i)_j \leq t_j\}).
\end{equation}

It is important to point out that using the induced weighting function for locally weighted estimation is different than the approach of averaging the noisy estimates obtained per tree \citep{wager2018estimation}, originally used in standard Random Forests \citep{breiman2001random}. Even though the two approaches are equivalent for conditional mean estimation, the former approach is often much more efficient for more complex targets \citep{athey2019generalized}, since the number of data points in a single leaf is very small, leading to large variance of the estimates.

For the univariate response, the idea of using the induced weights for estimating targets different than the original target of conditional mean considered in \citet{breiman2001random} dates back to Quantile Regression Forests (QRF) \citep{meinshausen2006quantile}, where a lot of emphasis is put on the quantile estimation, as the number of interesting targets is quite limited in the univariate setting. In the multivariate case, on the other hand, many interesting quantities such as, for example, conditional quantiles, conditional correlations or various conditional probability statements can easily be directly estimated from the weights.

By using the weights as an input for some other method, we can accomplish some more complicated objectives, such as conditional independence testing, causal effect estimation, semiparametric learning, time series prediction or tail-index estimation in extreme value analysis. As an example, suppose that our data $\bold{Y}$ come from a certain parametric model, where the parameter $\theta$ is not constant, but depends on $\bold{X}$ instead, i.e. $\bold{Y}\mid\bold{\mathbf{X}=\mathbf{x}} \sim f(\theta(\bold{x}), \cdot)$, see also \citet{zeileis2008model}. One can then estimate the parameter $\theta(\bold{x})$ by using weighted maximum likelihood estimation: $$\hat{\theta}(\bold{x}) = \argmax_{\theta \in \Theta} \sum_{i=1}^n w_\bold{x}(\bold{x}_i) \log f(\theta, \bold{y}_i).$$
Another example is heterogeneous regression, where we are interested in the regression fit of an outcome $Y \in \R$ on certain predicting variables $\bold{W}\in \R^s$ conditionally on some event $\{\bold{X} = \bold{x}\}$. This can be achieved by weighted regression of $Y$ on $\bold{W}$, where the weights $w_\bold{x}(\bold{x}_i)$ assigned to each data point $(\bold{w}_i, y_i)$ are obtained from DRF with the multivariate response $(Y, \bold{W}) \in \R^{s+1}$ and predictors $\bold{X} \in \R^p$, for an illustration see Section \ref{sec: causality}.

\begin{figure}
    \centering
    \includegraphics[width=0.95\textwidth]{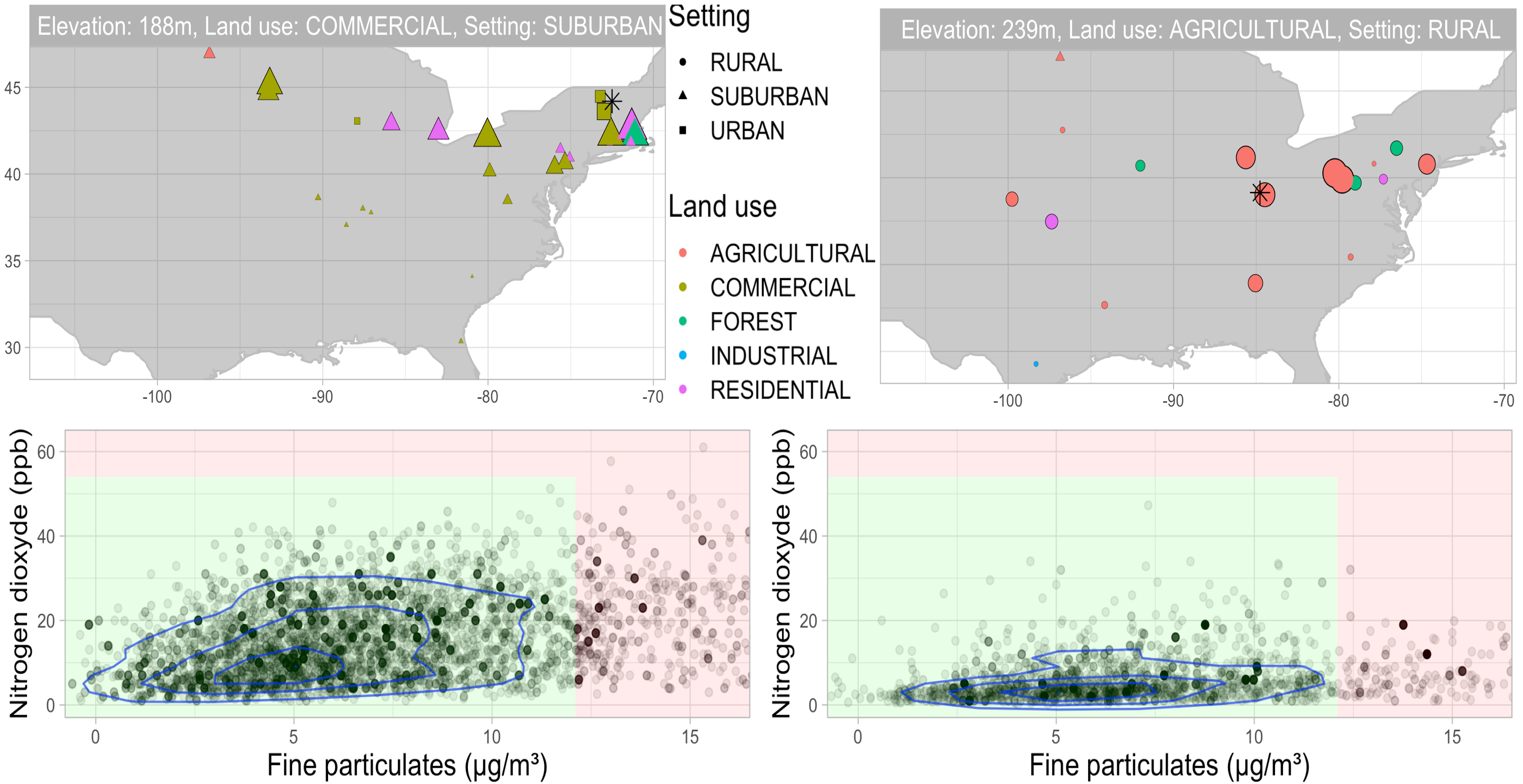}
    \caption{Top: the characteristics of the important training sites, for a fixed test site whose position is indicated by a black star and whose characteristics are indicated in the title. The total weight assigned corresponds to the symbol size. Bottom: estimated joint conditional distribution of two pollutants NO$_2$ and PM$2.5$, where the weights correspond to the transparency of the data points. Green area corresponds to 'Good' air quality category ($\text{AQI} \leq 50$).}
    \label{air_data}
\end{figure}

The weighting function of DRF is illustrated on the air quality data in Figure \ref{air_data}. Five years ($2015-2019$) of air pollution measurements were obtained from the US Environmental Protection Agency (EPA) website. Six main air pollutants (nitrogen dioxide (NO$_2$), carbon monoxide (CO), sulphur dioxide (SO$_2$), ozone (O$_3$) and coarse and fine particulate matter (PM$10$ and PM$2.5$)) that form the air quality index (AQI) were measured at many different measuring sites in the US for which we know the longitude, latitude, elevation, location setting (rural, urban, suburban) and how the land is used within a $1/4$ mile radius. Suppose we would want to know the distribution of the pollutant measurements at some new, unobserved, measurement site. We train DRF with the measurements (intraday maximum) of the two pollutants PM$2.5$ and NO$_2$ as the responses, and the site longitude, latitude, elevation, land use and location settings as the predictors and choose two decommissioned measurement sites as test points. For each test point we obtain the weights to all training measurements. We further combine the weights for all measurements corresponding to the same site. The top row illustrates for a given test site, whose characteristics are indicated in the plot title, how much weight in total is assigned to the measurements from a specific training site. We see that the important sites share many characteristics with the test site and that DRF determines the relevance of each characteristic in a data-adaptive way. The bottom row shows the corresponding estimates of the joint conditional distribution of the pollutants (we choose $2$ of them for visualization purposes), where the transparency of each training point reflects the assigned weight. One can clearly see how the estimated pollution levels are larger for the suburban site than for the rural site. The forest weights can be used, for example, for estimating the joint density (whose contours can be seen in the plot) or for estimating the probability that the AQI is below a certain value by summing the weights in the corresponding region of space.

\subsection{Distributional Metric}

In order to determine the best split of a parent node $P$, i.e. such that the distributions of the responses $\bold{Y}$ in the resulting child nodes $C_L$ and $C_R$ differ the most, one needs a good distributional metric $\mathcal{D}(\cdot,\cdot)$ (see Equation \eqref{eq: splitcrit}) which can detect change in distribution of the response $\bold{Y}$ when additionally conditioning on an event $\{X_j > l\}$. Testing equality of distributions from the corresponding samples is an old problem in statistics, called two-sample testing problem. For univariate data, many good tests exist such as Wilcoxon rank test \citep{wilcoxon1946individual}, Welch's t-test \citep{welch1947generalization}, Wasserstein two-sample testing \citep{ramdas2017wasserstein}, Kolmogorov-Smirnov test \citep{massey1951kolmogorov} and many others, but obtaining an efficient test for multivariate distributions has proven to be quite challenging due to the curse of dimensionality \citep{friedman1979multivariate, baringhaus2004new}. 

Additional requirement for the choice of distributional metric $\mathcal{D}(\cdot,\cdot)$ used for data splitting is that it needs to be computationally very efficient as splitting is used extensively in the algorithm. If we construct $N$ trees from $n$ data points and in each node we consider $\text{mtry}$ candidate variables for splitting, the complexity of the standard Random Forest algorithm \citep{breiman2001random} in the univariate case is $\O(N \times \text{mtry} \times n \log n)$ provided our splits are balanced. It uses the CART splitting criterion, given by: 
\begin{equation} \label{eq: CART}
\frac{1}{n_P}\left(\sum_{\bold{x}_i \in C_L} (y_i - \overline{y}_L)^2 + \sum_{\bold{x}_i \in C_R} (y_i - \overline{y}_R)^2 \right),
\end{equation}
where $\overline{y}_L = \tfrac{1}{n_L}\sum_{\bold{x}_i \in C_L} y_i$ and $\overline{y}_R$ is defined analogously. This criterion has an advantage that not only it can be computed in $\O(n_P)$ complexity, but this can be done for all possible splits $\{X_j \leq l\}$ as cutoff level $l$ varies, since updating the splitting criterion when moving a single training data point from one child node to the other requires only $\O(1)$ computational steps (most easily seen by rewriting the CART criterion as in \eqref{eq: CART equivalent}).

If the time complexity of evaluating the DRF splitting criterion \eqref{eq: splitcrit} for a single splitting candidate $X_j$ and all cutoffs $l$ of interest (usually taken to range over all possible values) is at least $n^c$ for some $c>1$, say $\O(f(n))$ for some function $f:\R \to \R$, then by solving the recursive relation we obtain that the overall complexity of the method is given by $\O(N \times \text{mtry} \times f(n))$ \citep{akra1998solution}, which can be unfeasible even for moderately large $n$ if $f$ grows too fast.

The problem of sequential two-sample testing is also central to the field of change-point detection \citep{wolfe1984nonparametric, brodsky2013nonparametric}, with the slight difference that in the change-point problems the distribution is assumed to change abruptly at certain points in time, whereas for our forest construction we only are interested in finding the best split of the form $\{X_j \leq l\}$ and the conditional distribution $\P(\bold{Y}\mid \{\bold{X}\in P\} \cap\{X_j \leq l\})$ usually changes gradually with $l$. The testing power and the computational feasibility of the method play a big role in change-point detection as well. However, the state-of-the-art change-point detection algorithms \citep{li2015scan, matteson2014nonparametric} are often too slow for our purpose as sequential testing is done $\O(N \times \text{mtry} \times n)$ times for forest construction, much more frequently than in change-point problems.

\subsubsection{MMD splitting criterion}
\label{sec: MMD splitcrit}

Even though DRF could in theory be constructed with any distributional metric $\mathcal{D}(\cdot, \cdot)$, as a default choice we propose splitting criterion based on the Maximum Mean Discrepancy (MMD) statistic \citep{gretton2007kernel}. Let $(\mathcal{H}, \langle\cdot,\cdot\rangle_\mathcal{H})$ be the RKHS of real-valued functions on $\R^d$ induced by some positive-definite kernel $k$, and let $\varphi:\R^d \to \mathcal{H}$ be the corresponding feature map satisfying that $k(\bold{u}, \bold{v}) = \langle\varphi(\bold{u}), \varphi(\bold{v})\rangle_\mathcal{H}$.

The MMD statistic $\mathcal{D}_{\text{MMD}(k)}\left(U, V\right)$ for kernel $k$ and two samples $U = \{\bold{u}_1, \ldots, \bold{u}_{|U|}\}$ and $V = \{\bold{v}_1, \ldots, \bold{v}_{|V|}\}$ is given by:
\begin{equation} \label{eq: MMD}
\mathcal{D}_{\text{MMD}(k)}\left(U, V\right) = \frac{1}{|U|^2}\sum_{i,j=1}^{|U|} k(\bold{u}_i, \bold{u}_j) + \frac{1}{|V|^2}\sum_{i,j=1}^{|V|} k(\bold{v}_i, \bold{v}_j)
 - \frac{2}{|U||V|}\sum_{i=1}^{|U|}\sum_{j=1}^{|V|} k(\bold{u}_i, \bold{v}_j).
\end{equation} 
MMD compares the similarities, described by the kernel $k$, within each sample with the similarities across samples and is commonly used in practice for two-sample testing. It is based on the idea that one can assign to each distribution $\mathcal{P}$ its embedding $\mu(\mathcal{P})$ into the RKHS $\mathcal{H}$, which is the unique element of $\mathcal{H}$ given by 
\begin{equation} \label{eq: RKHS embedding}
\mu(\mathcal{P}) = \E_{\bold{Y} \sim \mathcal{P}}[\varphi(\bold{Y})].
\end{equation}
The MMD two-sample statistic \eqref{eq: MMD} can then equivalently be written as the squared distance between the embeddings of the empirical distributions with respect to the RKHS norm $\norm{\cdot}_\mathcal{H}$:
\begin{equation} \label{eq: MMD embedding}
\mathcal{D}_{\text{MMD}(k)}\left(U, V\right) = \norm*{\mu\left(\frac{1}{|U|}\sum_{i=1}^{|U|} \delta_{\bold{u}_i}\right) - \mu\left(\frac{1}{|V|}\sum_{i=1}^{|V|} \delta_{\bold{v}_i}\right)}_\mathcal{H}^2,
\end{equation}
recalling that $\delta_{\bold{y}}$ is the point mass at $\bold{y}$. 

As the sample sizes $|U|$ and $|V|$ grow, the MMD statistic \eqref{eq: MMD embedding} converges to its population version, which is the squared RKHS distance between the corresponding embeddings of the data-generating distributions of $U$ and $V$. Since the embedding map $\mu$ is injective for a characteristic kernel $k$, we see that MMD is able to detect any difference in the distribution. 
Even though the power of the MMD two sample test also deteriorates as the data dimensionality grows, since the testing problem becomes intrinsically harder \citep{reddi2014decreasing}, it still has good empirical power compared to other multivariate two-sample tests for a wide range of $k$ \citep{gretton2012kernel}.

\paragraph{Fast random splitting criterion approximation.} The $\O((|U|+|V|)^2)$ complexity for computing $\mathcal{D}_{\text{MMD}(k)}(U, V)$ from \eqref{eq: MMD} is nevertheless too large for many applications. For that reason, several fast approximations of MMD have been suggested in the literature \citep{gretton2012kernel, NIPS2012_4727, zaremba2013b, NIPS2015_5685, NIPS2016_6148}. As already mentioned, the complexity of the distributional metric $\mathcal{D}(\cdot,\cdot)$ used for DRF is crucial for the overall method to be computationally efficient, since the splitting step is used extensively in the forest construction. We therefore propose splitting based on an MMD statistic computed with an approximate kernel $\tilde{k}$, which is also a fast random approximation of the original MMD statistic \citep{zhao2015fastmmd}. 

Bochner's theorem (see e.g.\ \citet[Theorem 6.6]{wendland_2004}) gives us that any bounded shift-invariant kernel can be written as 
\begin{equation} \label{eq: bochner}
k(\bold{u}, \bold{v}) = \int_{\R^d} e^{i\boldsymbol{\omega}^T(\bold{u}-\bold{v})}d\nu(\boldsymbol{\omega}),
\end{equation}
i.e.\ as a Fourier transform of some measure $\nu$. Therefore, by randomly sampling the frequency vectors $\boldsymbol{\omega}_1, \ldots, \boldsymbol{\omega}_B$ from normalized $\nu$, we can approximate our kernel $k$ by another kernel $\tilde{k}$ (up to a scaling factor) as follows: 
$$k(\bold{u}, \bold{v}) = \int_{\R^d} e^{i\boldsymbol{\omega}^T(\bold{u}-\bold{v})}d\nu(\boldsymbol{\omega}) \approx \frac{1}{B} \sum_{b=1}^B e^{i\boldsymbol{\omega}_b^T(\bold{u}-\bold{v})} = \tilde{k}(\bold{u}, \bold{v}),$$
where we define $\tilde{k}(\bold{u}, \bold{v}) = \langle \bold{\widetilde{\varphi}}(\bold{u}), \bold{\widetilde{\varphi}}(\bold{v})\rangle_{\mathbb{C}^B}$ as the kernel function with the feature map given by $$\bold{\widetilde{\varphi}}(\bold{u}) = \frac{1}{\sqrt{B}}\left(\tilde{\varphi}_{\boldsymbol{\omega}_1}(\bold{u}), 
\ldots, \tilde{\varphi}_{\boldsymbol{\omega}_B}(\bold{u})\right)^T = \frac{1}{\sqrt{B}}\left(e^{i\boldsymbol{\omega}_1^T\bold{u}}, 
\ldots, e^{i\boldsymbol{\omega}_B^T\bold{u}}\right)^T,$$
which is a random vector consisting of the Fourier features $\widetilde{\varphi}_{\boldsymbol{\omega}}(\bold{u}) = e^{i\boldsymbol{\omega}^T\bold{u}} \in \mathbb{C}$ \citep{rahimi2008random}.
Such kernel approximations are frequently used in practice for computational efficiency \citep{rahimi2009weighted, le2013fastfood}. As a default choice of $k$ we take the Gaussian kernel with bandwidth $\sigma$, since in this case we have a convenient expression for the measure $\nu$ and we sample $\boldsymbol{\omega}_1, \ldots, \boldsymbol{\omega}_B \sim N_{d}(\bold{0}, \sigma^{-2}I_{d})$. The bandwidth $\sigma$ is chosen as the median pairwise distance between all training responses $\{\bold{y}_i\}_{i=1}^n$, commonly referred to as the 'median heuristic' \citep{gretton2012optimal}. 

From the representation of MMD via the distribution embeddings \eqref{eq: MMD embedding}, we can obtain that MMD two-sample test statistic $\mathcal{D}_{\text{MMD}(\tilde{k})}$ using the approximate kernel $\tilde{k}$ is given by
\begin{equation*}
\mathcal{D}_{\text{MMD}(\tilde{k})}\left(\{\bold{u}_i\}_{i=1}^{|U|}, \{\bold{v}_i\}_{i=1}^{|V|}\right) = \frac{1}{B}\sum_{b=1}^B\left\lvert \frac{1}{|U|}\sum_{i=1}^{|U|} \tilde{\varphi}_{\boldsymbol{\omega}_b}(\bold{u}_i) -  \frac{1}{|V|}\sum_{i=1}^{|V|} \tilde{\varphi}_{\boldsymbol{\omega}_b}(\bold{v}_i) \right\rvert^2.
\end{equation*}
Interestingly, $\mathcal{D}_{\text{MMD}(\tilde{k})}$ is not only an MMD statistic on its own, but can also be viewed as a random approximation of the original MMD statistic $\mathcal{D}_{\text{MMD}(k)}$ \eqref{eq: MMD} using kernel $k$; by using the kernel representation \eqref{eq: bochner}, it can be written as
\begin{equation*}
\mathcal{D}_{\text{MMD}(k)}\left(\{\bold{u}_i\}_{i=1}^{|U|}, \{\bold{v}_i\}_{i=1}^{|V|}\right) = \int_{\R^d}\left\lvert \frac{1}{|U|}\sum_{i=1}^{|U|} \tilde{\varphi}_{\boldsymbol{\omega}}(\bold{u}_i) -  \frac{1}{|V|}\sum_{i=1}^{|V|} \tilde{\varphi}_{\boldsymbol{\omega}}(\bold{v}_i) \right\rvert^2 d\nu(\boldsymbol{\omega}).
\end{equation*}

Finally, our DRF splitting criterion $\mathcal{D}(\cdot, \cdot)$ \eqref{eq: splitcrit} is then taken to be the (scaled) MMD statistic $\tfrac{n_Ln_R}{n_P^2}\mathcal{D}_{\text{MMD}(\tilde{k})}\left(\{\bold{y}_i\mid \bold{x}_i \in C_L\} \,,\, \{\bold{y}_i \mid \bold{x}_i \in C_R\}\right)$ with the approximate random kernel $\tilde{k}$ used instead of $k$, which can thus be conveniently written as:
\begin{align} \label{eq: MMD splitcrit}
\frac{1}{B}\sum_{b=1}^B \frac{n_Ln_R}{n_P^2} \left\lvert \frac{1}{n_L}\sum_{\bold{x}_i \in C_L} \tilde{\varphi}_{\boldsymbol{\omega}_b}(\bold{y}_i) -  \frac{1}{n_R}\sum_{\bold{x}_i \in C_R} \tilde{\varphi}_{\boldsymbol{\omega}_b}(\bold{y}_i) \right\rvert^2,
\end{align}
where we recall that $n_P = |\{i \mid \bold{x}_i \in P\}|$ and $n_L, n_R$ are defined analogously.
The additional scaling factor $\tfrac{n_Ln_R}{n_P^2}$ in \eqref{eq: MMD splitcrit} occurs naturally and compensates the increased variance of the test statistic for unbalanced splits; it also appears in the GRF \eqref{eq: GRF split} and CART (see representation \eqref{eq: CART equivalent}) splitting criteria.

The main advantage of the splitting criterion based on $\mathcal{D}_{\text{MMD}(\tilde{k})}$ is that by using the representation \eqref{eq: splitcrit} it can be easily computed for every possible splitting level $l$ in $\O(Bn_P)$ complexity, whereas the MMD statistic $\mathcal{D}_{\text{MMD}(k)}$ using kernel $k$ would require $\O(n_P^2)$ computational steps, which makes the overall complexity of the algorithm $\O\left(B \times N \times \text{mtry} \times n \log n\right)$ instead of much slower $\O\left(N \times \text{mtry} \times n^2\right)$.

We do not use the same approximate random kernel $\tilde{k}$ for different splits; for every parent node $P$ we resample the frequency vectors $\{\omega_b\}_{b=1}^B$ defining the corresponding feature map $\tilde{\varphi}$. Using different $\tilde{k}$ at each node might help to better detect different distributional changes. Furthermore, having different random kernels for each node agrees well with the randomness of the Random Forests and helps making the trees more independent. Since the MMD statistic $\mathcal{D}_{\text{MMD}(\tilde{k})}$ used for our splitting criterion is not only an approximation of $\mathcal{D}_{\text{MMD}(k)}$, but is itself an MMD statistic, it inherits good power for detecting any difference in distribution of $\bold{Y}$ in the child nodes for moderately large data dimensionality $d$, even when $B$ is reasonably small. One could even consider changing the number of random Fourier features $B$ at different levels of the tree, as $n_P$ varies, but for simplicity we take it to be fixed.

\paragraph{Relationship to CART.}
There is some similarity of our MMD-based splitting criterion \eqref{eq: MMD splitcrit} with the standard variance reduction CART splitting criterion \eqref{eq: CART} when $d=1$, which can be rewritten as:
\begin{equation} \label{eq: CART equivalent}
\frac{n_Ln_R}{n_P^2}\left(\frac{1}{n_L}\sum_{\bold{x}_i \in C_L} y_i - \frac{1}{n_R}\sum_{\bold{x}_i \in C_R} y_i \right)^2.
\end{equation}
The derivation can be found in Appendix \ref{appendix: proofs}.
From this representation, we see that the CART splitting criterion \eqref{eq: CART} is also equivalent to the GRF splitting criterion \eqref{eq: GRF split} when our target is the univariate conditional mean $\tau(\bold{x}) = \E[Y \mid \bold{X}\myeq\bold{x}]$ which is estimated for $C_L$ and $C_R$ by the sample means $\hat{\tau}_L = \overline{y}_L$ and $\hat{\tau}_R = \overline{y}_R$.
Therefore, as it compares the means of the univariate response $Y$ in the child nodes, the CART criterion can only detect changes in the response mean well, which is sufficient for prediction of $Y$ from $\bold{X}$, but might not be suitable for more complex targets. Similarly, for multivariate applications, aggregating the marginal CART criteria \citep{kocev2007ensembles, segal2011multivariate} across different components $Y_i$ of the response can only detect changes in the means of their marginal distributions. However, it is possible in the multivariate case that the pairwise correlations or the variances of the responses change, while the marginal means stay (almost) constant. For an illustration on simulated data, see Figure \ref{fig: gaussian_copula}. Additionally, aggregating the splitting criteria over $d$ components of the response $\bold{Y}$ can reduce the signal size if only the distribution of a few components change. Our MMD-based splitting criterion \eqref{eq: MMD splitcrit} is able to avoid such difficulties as it implicitly inspects all aspects of the multivariate response distribution.

If one takes a trivial kernel $k_{\text{id}}(y_i, y_j) = y_i y_j$ with the identity feature map $\varphi_{\text{id}}(y) = y$, the distributional embedding \eqref{eq: RKHS embedding} is given by $\mu(\mathcal{P}) = \E_{Y\sim\mathcal{P}}[ Y]$ and thus the corresponding splitting criterion based on $\mathcal{D}_{\text{MMD}(k_{\text{id}})}$ \eqref{eq: MMD embedding} is exactly equal to the CART splitting criterion \eqref{eq: CART}, which can be seen from its equivalent representation \eqref{eq: CART equivalent}.
Interestingly, Theorem \ref{thm: MMD_CART_equivalence} in Section \ref{sec: theory} shows that the MMD splitting criterion with general kernel $k$ can also be viewed as the abstract version of the CART criterion in the RKHS $\mathcal{H}$ corresponding to $k$ \citep{fan2010cw}, with the response variable being the feature map $\varphi(\bold{Y}) \in \mathcal{H}$. Therefore, DRF with the MMD splitting criterion can also be viewed as a forest-based method for estimation of the conditional embedding, which further justifies the proposed method. 
In Section \ref{sec: theory} below, we use this relationship to derive interesting theoretical properties of DRF with the MMD splitting criterion.


\section{Theoretical Results}
\label{sec: theory}

In this section we first use the properties of the kernel mean embedding in order to relate DRF with the MMD splitting criterion to an abstract version of the standard Random Forest with the CART splitting criterion \citep{breiman2001random}, where the response is taking values in the corresponding RKHS. This representation reveals that DRF with the MMD splitting criterion can be viewed as a Random Forest estimator of the conditional mean embedding (CME) \citep{OurapproachtoCME}, similarly as the standard Random Forest estimates the conditional mean. This relationship is further exploited to adapt the existing theoretical results from the Random Forest literature to show that our estimate \eqref{eq: distributional estimate} of the conditional distribution of the response is consistent with respect to the MMD metric for probability measures and with a good rate. Finally, we show that this implies consistency of the induced DRF estimates for a range interesting targets $\tau(\bold{x})$, such as conditional CDFs or quantiles. The proofs of all results can be found in the Appendix \ref{appendix: proofs}.

\subsection{Casting DRF as a Random Forest in an RKHS}
Recalling the notation from above, let $\left(\mathcal{H}, \langle\cdot,\cdot\rangle_{\mathcal{H}}\right)$ be the Reproducing kernel Hilbert space induced by the positive definite kernel $k:\R^d\times\R^d \to \R$ and let $\varphi:\R^d \to \mathcal{H}$ be its corresponding feature map. The kernel embedding function $\mu:\mathcal{M}_{b}(\R^d) \to \mathcal{H}$
maps any bounded signed Borel measure $\mathcal{P}$ on $\R^d$ to an element $\mu(\mathcal{P}) \in \mathcal{H}$ defined by
\begin{equation}
\label{eq: kernel embedding}
\mu(\mathcal{P}) = \int_{\R^d}\varphi(\bold{y})\,d\mathcal{P}(\bold{y}),
\end{equation}
see also \eqref{eq: RKHS embedding}. Boundedness of $k$ ensures that $\mu$ is indeed defined on all of $\mathcal{M}_{b}(\R^d)$, while continuity of $k$ ensures that $\mathcal{H}$ is separable \citep{hilbertspacebook}. 


By considering the kernel embedding $\mu(\cdot)$ and using its linearity, the embedding of the distributional estimate $\mu(\hat{\P}(\bold{Y}\mid\bold{X}\myeq\bold{x}))$ of DRF \eqref{eq: distributional estimate} can be written as the average of the embeddings of the empirical distributions of $\bold{Y}$ in the leaves containing $\bold{x}$ over all trees:
\begin{equation} \label{eq: embedding estimate}
\mu(\hat{\P}(\bold{Y}\mid\bold{X}\myeq\bold{x})) = \frac{1}{N}\sum_{k=1}^N \mu\left( \frac{1}{|\mathcal{L}_k(\bold{x})|}\sum_{\bold{x}_i \in \mathcal{L}_k(\bold{x})} \delta_{\bold{y}_i}\right) = \frac{1}{N}\sum_{k=1}^N \frac{1}{|\mathcal{L}_k(\bold{x})|}\sum_{\bold{x}_i \in \mathcal{L}_k(\bold{x})} \mu(\delta_{\bold{y}_i}).
\end{equation}
This is analogous to the prediction of the response for the standard univariate Random Forest, but where we average the embeddings $\mu(\delta_{\bold{y}_i}) = \varphi(\bold{y}_i) \in \mathcal{H}$ instead of the response values $y_i$ themselves.

Furthermore, one can relate the MMD splitting criterion to the original CART criterion \eqref{eq: CART}, which measures the mean squared prediction error for splitting a certain parent node $P$ into children $C_L$ and $C_R$. On one hand, from Equation \eqref{eq: CART equivalent} we see that the CART criterion also measures the squared distance between the response averages $\tfrac{1}{n_L}\sum_{\bold{x}_i \in C_L} y_i$ and $\tfrac{1}{n_R}\sum_{\bold{x}_i \in C_R} y_i$ in the child nodes, but on the other hand, Equation \eqref{eq: MMD embedding} shows that the MMD splitting criterion measures the RKHS distance between the embeddings of the empirical response distributions in $C_L$ and $C_R$. This is summarized in the following theorem, which not only shows that the MMD splitting criterion can be viewed as the abstract CART criterion in the RKHS $\mathcal{H}$ \citep{fan2010cw}, but also that DRF with the MMD splitting criterion can be viewed asymptotically as a greedy minimization of the average squared MMD distance between our estimate $\hat{\P}(\bold{Y}\mid\bold{X}\myeq\bold{x})$ and the truth $\P(\bold{Y}\mid\bold{X}\myeq\bold{x})$:

\begin{restatable}{theorem}{MMDCART}\label{thm: MMD_CART_equivalence}
For any split of a parent node $P$ into child nodes $C_L$ and $C_R$, let $\hat{\P}_\text{split}(\bold{x}) = \sum_{j\in\{L, R\}}\1(\bold{x}\in C_j) \tfrac{1}{n_j} \sum_{\bold{x}_i \in C_j} \delta_{\boldsymbol{y}_i}$ denote the resulting estimate of the distribution $\P(\bold{Y} \mid \bold{X}\myeq\bold{x})$ when $\bold{x} \in P$. Then the MMD splitting criterion can be viewed as the version of the CART criterion \eqref{eq: CART} on $\mathcal{H}$:
\begin{align*}
&\argmax_{\text{split}} \frac{n_Ln_R}{n_P^2}\mathcal{D}_{\text{MMD}(k)}\left(\{\bold{y}_i \mid \bold{x}_i \in C_L\} , \{\bold{y}_i\mid \bold{x}_i\in C_R\}\right) \\
&=\argmin_{\text{split}} \frac{1}{n_P}\sum_{\bold{x}_i \in P} \norm*{\mu(\delta_{\bold{y}_i}) - \mu(\hat{\P}_{\text{split}}(\bold{x}_i))}_\mathcal{H}^2.
\end{align*}
\normalsize
Moreover, for any node $P$ and any fixed distributional estimator $\hat{\P}(\bold{Y}\mid\bold{X}\myeq\bold{x})$, we have: 
\begin{align*}
&\frac{1}{n_P}\sum_{\bold{x}_i \in P} \norm*{\mu(\delta_{\bold{y}_i}) - \mu(\hat{\P}(\bold{Y}\mid\bold{X}\myeq\bold{x}_i))}_\mathcal{H}^2 \\ &= V_P + \E\left[\norm{\mu(\hat{\P}(\bold{Y}\mid\bold{X})) - \mu(\P(\bold{Y} \mid \bold{X}))}_\mathcal{H}^2 \mid \bold{X} \in P\right] + \O_{p}(n^{-1/2}),
\end{align*}
\normalsize
where $V_P = \E\left[\norm{\mu(\delta_\bold{Y}) - \mu(\P(\bold{Y} \mid \bold{X}))}_\mathcal{H}^2\mid \bold{X} \in P\right]$ is a deterministic term not depending on the estimates $\hat{\P}(\bold{Y}\mid\bold{X}\myeq\bold{x})$.
\end{restatable}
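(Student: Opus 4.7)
The plan is to lift both statements into the RKHS $\mathcal{H}$ by treating $\varphi(\bold{y}_i) = \mu(\delta_{\bold{y}_i})$ as a bounded $\mathcal{H}$-valued random element (boundedness of $k$ makes $\norm{\varphi(\bold{y})}_\mathcal{H} \leq \sqrt{k(\bold{y},\bold{y})}$ uniformly bounded), and then to transcribe the classical CART variance-decomposition and the usual bias--variance--cross-term decomposition into this Hilbert-space setting.

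For the first claim, I would start from the identity $\mu(\hat{\P}_{\text{split}}(\bold{x}_i)) = \tfrac{1}{n_j}\sum_{\bold{x}_\ell \in C_j}\varphi(\bold{y}_\ell)$ whenever $\bold{x}_i \in C_j$, which follows from the linearity of $\mu$. Writing $\bar{\varphi}_L,\bar{\varphi}_R,\bar{\varphi}_P$ for the sample means of the feature maps in $C_L, C_R, P$, the Hilbert-space Pythagorean identity gives
\begin{equation*}
\sum_{\bold{x}_i \in P}\norm{\varphi(\bold{y}_i) - \bar{\varphi}_P}_\mathcal{H}^2 = \sum_{\bold{x}_i \in C_L}\norm{\varphi(\bold{y}_i) - \bar{\varphi}_L}_\mathcal{H}^2 + \sum_{\bold{x}_i \in C_R}\norm{\varphi(\bold{y}_i) - \bar{\varphi}_R}_\mathcal{H}^2 + \frac{n_L n_R}{n_P}\norm{\bar{\varphi}_L - \bar{\varphi}_R}_\mathcal{H}^2.
\end{equation*}
The left-hand side does not depend on the split, so $\argmin$ of the within-child sum of squares (which is $n_P$ times the RHS of the first equality in the theorem) equals $\argmax$ of the last term; dividing by $n_P$ and using the embedding representation \eqref{eq: MMD embedding} to recognize $\norm{\bar{\varphi}_L - \bar{\varphi}_R}_\mathcal{H}^2 = \mathcal{D}_{\text{MMD}(k)}(\{\bold{y}_i\mid\bold{x}_i\in C_L\},\{\bold{y}_i\mid\bold{x}_i\in C_R\})$ delivers the claimed equivalence.

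For the second claim, set $M(\bold{x}) = \mu(\P(\bold{Y}\mid\bold{X}\myeq\bold{x}))$ and $\hat{M}(\bold{x}) = \mu(\hat{\P}(\bold{Y}\mid\bold{X}\myeq\bold{x}))$, and apply the pointwise identity
\begin{equation*}
\norm{\varphi(\bold{y}_i) - \hat{M}(\bold{x}_i)}_\mathcal{H}^2 = \norm{\varphi(\bold{y}_i) - M(\bold{x}_i)}_\mathcal{H}^2 + \norm{M(\bold{x}_i) - \hat{M}(\bold{x}_i)}_\mathcal{H}^2 + 2\langle\varphi(\bold{y}_i)-M(\bold{x}_i),\,M(\bold{x}_i)-\hat{M}(\bold{x}_i)\rangle_\mathcal{H}.
\end{equation*}
Averaging over $\bold{x}_i\in P$: the first term is an empirical mean of i.i.d.\ bounded summands with conditional expectation $V_P$, so equals $V_P + \mathcal{O}_p(n^{-1/2})$ by a Hilbert-space Chebyshev/CLT argument; the second term is an empirical mean with conditional expectation $\E[\norm{\mu(\hat{\P}(\bold{Y}\mid\bold{X})) - \mu(\P(\bold{Y}\mid\bold{X}))}_\mathcal{H}^2 \mid \bold{X}\in P]$, again with $\mathcal{O}_p(n^{-1/2})$ fluctuation; the cross term has conditional mean zero, since $\hat{M}$ is fixed (hence independent of the $\bold{y}_i$'s given the $\bold{x}_i$'s) and $\E[\varphi(\bold{Y}_i)-M(\bold{x}_i)\mid\bold{X}_i=\bold{x}_i] = 0$, and its variance is $\mathcal{O}(n_P^{-1})$ by Cauchy--Schwarz combined with the uniform bound on $\norm{\varphi}_\mathcal{H}$.

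The one step needing genuine care is controlling the cross term: it is mean-zero only because $\hat{\P}$ is fixed (equivalently, obtained from data independent of the sample in $P$), so in the honest/sample-split regime typical for random-forest analyses this is unproblematic, but without such independence one can produce examples where the cross term contributes at first order. The remaining technicalities—existence of the Bochner integrals defining $M(\bold{x})$, measurability of $M$, and applicability of the $\mathcal{H}$-valued law of large numbers—are all standard consequences of $k$ being bounded, continuous, and $\mathcal{H}$ being separable, so I would invoke them without further comment.
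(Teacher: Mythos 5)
Your proposal is correct and follows essentially the same route as the paper: the first claim is the $\R$-valued CART/Pythagorean decomposition transcribed verbatim into $\mathcal{H}$ via the feature map, and the second is a bias--variance--cross-term decomposition where the cross term vanishes in conditional expectation because $\hat{\P}$ is fixed (the paper also uses Lemma~\ref{equivalence} plus property (C5) for exactly this step), followed by a CLT/Chebyshev argument for the $\O_p(n^{-1/2})$ fluctuation. The only detail you smooth over is the random denominator $n_P$: the paper explicitly rewrites $\tfrac{1}{n_P}\sum_{\bold{x}_i\in P}$ as $\tfrac{n}{n_P}\cdot\tfrac{1}{n}\sum_i(\cdot)\1\{\bold{x}_i\in P\}$, uses $n_P\sim\mathrm{Binomial}(\pi,n)$ and $n/n_P=1/\pi+o_p(1)$, and applies the CLT at fixed sample size $n$; your ``conditionally i.i.d.\ given $n_P$'' argument is a legitimate alternative but needs that small bookkeeping step spelled out to land at the stated rate.
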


In conclusion, from the above results we see that by applying the kernel embedding \eqref{eq: kernel embedding}, we can shift the perspective to the RKHS $\mathcal{H}$ and view DRF as the analogue of the original Random Forest for estimation of the CME $\mu(\P(\bold{Y}\mid\bold{X}=\bold{x})) = \E[\varphi(\bold{Y})\mid\bold{X}\myeq\bold{x}]$ in an abstract Hilbert space $\mathcal{H}$. Like some traditional CME estimators \citep{CMEinDynamicalSystems, CMEinGraphicalModels, kernelmeanembeddingreview,OurapproachtoCME} it is also of the form
\begin{align}\label{formwewant}
    \mu(\hat{\P}(\bold{Y}\mid\bold{X}\myeq\bold{x})) = \sum_{i=1}^n w_{\bold{x}}(\bold{x}_i)\cdot k(\bold{y}_i, \cdot).
\end{align}

It can be shown that, since DRF produces nonnegative weights that sum to one in \eqref{formwewant}, there is one-to-one correspondence between the resulting estimate in $\mathcal{H}$ and the empirical probability distribution in $\mathcal{M}_{b}(\R^d)$. Thus DRF can be seen as a CME estimator through \eqref{formwewant}, or directly as an estimator for the conditional distribution through \eqref{eq: distributional estimate}. By contrast, other CME estimators of the form \eqref{formwewant} have weights that are unconstrained and can be negative. Finding an appropriate distribution on $\R^d$ for a given mean-embedding (sometimes referred to as ``distributional inverse image problem'', see e.g. \citet{inverseproblem1, kernelmeanembeddingreview}) is not straightforward in general. For certain tasks, such as sampling from the estimated conditional distribution $\hat{\P}(\bold{Y}\mid\bold{X}\myeq\bold{x})$ or obtaining the plug-in estimates of some target functionals $\tau(\P(\bold{Y}\mid\bold{X}\myeq\bold{x}))$, this is crucial.

\subsection{Convergence of Conditional Distribution Estimates}

As we have seen, DRF can be viewed as the abstract version of the standard Random Forest when the response takes value in an RKHS. In principle, one could thus derive properties of DRF by adapting any existing theoretical result from the literature to the RKHS case. However, a lot of care is needed for making the results rigorous in this abstract setup, as many useful properties of $\R$ need not hold for infinite-dimensional $\mathcal{H}$. This section is inspired by the results from \citet{wager2018estimation}. 

We suppose that the forest construction satisfies the following properties, which significantly facilitate the theoretical considerations of the method and ensure that our forest estimator is well behaved, as stated in \citet{wager2018estimation}:
\begin{itemize}
    \item[\textbf{(P1)}] (\textit{Data sampling}) The bootstrap sampling with replacement, usually used in forest-based methods, is replaced by a subsampling step, where for each tree we choose a random subset of size $s_n$ out of $n$ training data points. We consider $s_n$ going to infinity with $n$, with the rate specified below. 
    \item[\textbf{(P2)}] (\textit{Honesty}) The data used for constructing each tree is split into two partOn the pitfalls of Gaussian scoring for causal discoverys; the first is used for determining the splits and the second for populating the leaves and thus for estimating the response.
    \item[\textbf{(P3)}] (\textit{$\alpha$-regularity}) Each split leaves at least a fraction $0 < \alpha \leq 0.2$ of the available training sample on each side. Moreover, the trees are grown until every leaf contains between $\kappa$ and $2\kappa - 1$ observations, for some fixed tuning parameter $\kappa \in \N$. 
    \item[\textbf{(P4)}] (\textit{Symmetry}) The (randomized) output of a tree does not depend on the ordering of the training samples.
    \item[\textbf{(P5)}] (\textit{Random-split}) At every split point, the probability that the split occurs along the feature $X_j$ is bounded below by $\pi/p$, for some $\pi > 0$ and for all $j=1,\ldots,p$.
\end{itemize}
The validity of the above properties are easily ensured by the forest construction used.

From Equation \eqref{eq: embedding estimate}, the prediction of DRF for a given test point $\mathbf{x}$ can be viewed as an element of $\mathcal{H}$. If we denote the $i$-th training observation by $\mathbf{Z}_i=(\mathbf{x}_i, \mu(\delta_{\mathbf{y}_i})) \in \R^p \times \mathcal{H}$, then by \eqref{eq: embedding estimate} we estimate the embedding of the true conditional distribution $\mu(\P(\bold{Y}\mid\bold{X}\myeq\bold{x}))$ by the average of the corresponding estimates per tree: 
\begin{equation*}\label{rewriting}
    \mu(\hat{\P}(\mathbf{Y} \mid \mathbf{X} \myeq \mathbf{x})) = \frac{1}{N} \sum_{j=1}^N T(\bold{x}; \varepsilon_j, \mathcal{Z}_j),
\end{equation*}
where $\mathcal{Z}_k$ is a random subset of $\{\mathbf{Z}_i\}_{i=1}^n$ of size $s_n$ chosen for constructing the $j$-th tree $\mathcal{T}_j$ and $\varepsilon_j$ is a random variable capturing all randomness in growing $\mathcal{T}_j$, such as the choice of the splitting candidates. $T(\bold{x}; \varepsilon, \mathcal{Z})$ denotes the output of a single tree: i.e. the average of the terms $\mu(\delta_{\bold{Y}_i})$ over all data points $\bold{Z}_i$ contained in the leaf $\mathcal{L}(\bold{x})$ of the tree constructed from $\varepsilon$ and $\mathcal{Z}$.

Since one can take the number of trees $N$ to be arbitrarily large, we consider an ``idealized'' version of our estimator, as done in \citet{wager2017estimation}, which we denote as $\hat{\mu}_n(\mathbf{x})$:
\begin{equation}\label{finalestimator}
    \hat{\mu}_n(\mathbf{x}) = \binom{n}{s_n}^{-1}  \sum_{i_1 < i_2 < \ldots < i_{s_n}} \E_{\varepsilon} \,\,T(\mathbf{x}; \varepsilon; \{\bold{Z}_{i_1}, \ldots, \bold{Z}_{i_{s_n}}\}),
\end{equation}
where the sum is taken over all $\binom{n}{s_n}$ possible subsets of $\{\mathbf{Z}_{i}\}_{i=1}^n$.\sloppy We have that $\mu(\hat{\P}(\bold{Y}\mid\bold{X}\myeq\bold{x})) \to \hat{\mu}_n(\mathbf{x})$ as $N\to \infty$, while keeping the other variables constant, and thus we assume for simplicity that those two quantities are the same.

Our main result shows that, under similar assumptions as in \citet{wager2017estimation}, the embedding of our conditional distribution estimator $\hat{\mu}_n(\bold{x}) = \mu(\hat{\P}(\mathbf{Y} \mid \mathbf{X} \myeq \mathbf{x}))$ consistently estimates $\mu(\bold{x}) \coloneqq \mu(\P(\mathbf{Y} \mid \mathbf{X} \myeq \mathbf{x}))$ with respect to the RKHS norm with a certain rate:


\begin{restatable}{theorem}{consistency}\label{thm: consistency}
Suppose that our forest construction satisfies properties \textbf{(P1)}--\textbf{(P5)}. Assume additionally that $k$ is a bounded and continuous kernel and that we have a random design with $\bold{X}_1,\ldots, \bold{X}_n$ independent and identically distributed on $[0,1]^p$ with a density bounded away from $0$ and infinity. If the subsample size $s_n$ is of order $n^\beta$ for some $0  < \beta < 1$, the mapping
$$\mathbf{x} \mapsto \mu(\mathbf{x})=\E[ \mu(\delta_\bold{Y}) \mid \mathbf{X} \myeq \mathbf{x}] \in \mathcal{H},$$
is Lipschitz and $ \sup_{\mathbf{x} \in [0,1]^p} \E[ \|\mu(\delta_\bold{Y})\|_\mathcal{H}^2\mid \mathbf{X} \myeq \mathbf{x}] < \infty$, we obtain the consistency w.r.t. the RKHS norm:
\begin{equation}\label{eq: rate} 
\norm{\hat{\mu}_n(\mathbf{x})- \mu(\mathbf{x})}_\mathcal{H} = \mathcal{O}_{p}\left(n^{-\gamma} \right),
\end{equation}
for $\gamma = \frac{1}{2} \min\left( 1- \beta, \frac{\log((1-\alpha)^{-1})}{\log(\alpha^{-1})} \frac{\pi}{p} \cdot \beta \right)$.
\end{restatable}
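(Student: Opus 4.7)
My plan is to leverage the equivalence established in Theorem~\ref{thm: MMD_CART_equivalence}, which identifies DRF with a standard honest random forest applied to the Hilbert-valued pseudo-responses $\varphi(\bold{Y}_i) \in \H$, and then to port the analysis of \citet{wager2018estimation} into this abstract setting. The starting point is the bias--variance decomposition
\begin{equation*}
\E\norm*{\hat{\mu}_n(\bold{x}) - \mu(\bold{x})}_\H^2 \leq 2\,\E\norm*{\hat{\mu}_n(\bold{x}) - \E\hat{\mu}_n(\bold{x})}_\H^2 + 2\norm*{\E\hat{\mu}_n(\bold{x}) - \mu(\bold{x})}_\H^2,
\end{equation*}
after which the two contributions are bounded separately and balanced to recover the stated range for $\gamma$.

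For the variance, expression \eqref{finalestimator} displays $\hat{\mu}_n(\bold{x})$ as a complete symmetric U-statistic of order $s_n$ with a Hilbert-valued kernel $T$. Continuity of $k$ makes $\H$ separable, so $T$ is Bochner-measurable; expanding $\norm{\cdot}_\H^2$ through the inner product (equivalently, summing the scalar Hoeffding variance bound for U-statistics over an orthonormal basis of $\H$) yields $\E\norm{\hat{\mu}_n(\bold{x}) - \E\hat{\mu}_n(\bold{x})}_\H^2 \leq (s_n/n)\,\E\norm{T(\bold{x};\varepsilon,\mathcal{Z})}_\H^2$. Boundedness of $k$ forces $\norm{\varphi(\bold{y})}_\H = \sqrt{k(\bold{y},\bold{y})}$ to be uniformly bounded, so $T$ is bounded in RKHS norm and the variance term is $O(s_n/n) = O(n^{\beta-1})$.

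For the bias, honesty \textbf{(P2)} decouples the leaf $\mathcal{L}(\bold{x})$ from the responses used to populate it, so conditional on the splitting subsample $\E[T(\bold{x};\varepsilon,\mathcal{Z})\mid\mathcal{L}(\bold{x})] = \E[\mu(\bold{X}')\mid \bold{X}' \in \mathcal{L}(\bold{x})]$; the Lipschitz hypothesis on $\bold{x}\mapsto\mu(\bold{x})$ combined with Jensen then bounds $\norm{\E\hat{\mu}_n(\bold{x}) - \mu(\bold{x})}_\H$ by a constant times $\E[\mathrm{diam}\,\mathcal{L}(\bold{x})]$. The leaf-diameter lemma of \citet[Lemma~2]{wager2018estimation} is purely geometric, depending only on \textbf{(P1)}, \textbf{(P3)}, \textbf{(P5)} and the density being bounded away from $0$ and $\infty$, and hence carries over verbatim; it delivers a diameter bound in probability of order $s_n^{-\frac{\log(1-\alpha)}{\log\alpha}\cdot\frac{\pi}{p}}$ up to logarithmic factors. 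Plugging $s_n = n^\beta$ into both terms and taking the square root yields the announced rate for $\gamma$.

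The main obstacle is not conceptual but bookkeeping in the abstract setting: every classical ingredient must survive the passage from $\R$ to the infinite-dimensional separable Hilbert space $\H$. Separability of $\H$ (from continuity of $k$), boundedness of $\norm{\varphi(\bold{Y})}_\H$ (from boundedness of $k$, reinforced by $\sup_{\bold{x}}\E[\norm{\mu(\delta_\bold{Y})}_\H^2\mid\bold{X}\myeq\bold{x}]<\infty$), and Lipschitz continuity of the Hilbert-valued map $\bold{x}\mapsto\mu(\bold{x})$ together let us reduce every Hilbert-valued inequality to a scalar counterpart, while the leaf-diameter argument, being oblivious to the form of the response, applies without modification. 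A minor subtlety is upgrading the in-probability diameter bound to an expectation bound on the bias, which is immediate because $T$ is uniformly bounded in RKHS norm on the compact covariate domain $[0,1]^p$.
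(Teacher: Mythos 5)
Your proposal follows essentially the same route as the paper: control the deviation $\hat{\mu}_n(\mathbf{x})-\E[\hat{\mu}_n(\mathbf{x})]$ via a Hilbert-valued Hoeffding decomposition of the complete $U$-statistic (paper's Lemma on variance bounds), control the bias $\E[\hat{\mu}_n(\mathbf{x})]-\mu(\mathbf{x})$ via honesty, Lipschitzness of $\mu$, and Wager--Athey's geometric leaf-diameter lemma, and then combine the two. One small slip to flag: the leaf-diameter lemma (paper's Lemma~\ref{lemma2}) gives $\E[\mathrm{diam}\,\mathcal{L}(\mathbf{x})] \precsim s_n^{-\frac{1}{2}\frac{\log((1-\alpha)^{-1})}{\log(\alpha^{-1})}\frac{\pi}{p}}$, with the dominant contribution coming from the tail probability $d\,(s/(2\kappa-1))^{-\frac{1}{2}C_\alpha\pi/p}$ of the ``bad'' event times a constant (not the diameter threshold itself), so the exponent you quote, $s_n^{-C_\alpha\pi/p}$, is missing the factor $\tfrac12$; with that correction the bias is $O\bigl(n^{-\beta C_\alpha\pi/(2p)}\bigr)$ and the advertised range for $\gamma$ falls out exactly as in the paper.
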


\begin{remark}
The rate in \eqref{eq: rate} is analogous to the one from \citet{wager2018estimation}, who used it further to derive the asymptotic normality of the Random Forest estimator in $\R$. Unfortunately, this alone is not enough to establish asymptotic normality of $(\hat{\mu}_n(\mathbf{x})- \mu(\mathbf{x}))/\sigma_n$ as an element of $\mathcal{H}$. To do so, one needs to prove a functional central limit theorem with a Gaussian limiting process in the Hilbert space ${\cal H}$. This then allows to deduce asymptotic normality of smooth real-valued functionals. We will provide the detailed derivations in future work.
\end{remark}

\subsection{Convergence of the Induced Estimates}
The above result shows that DRF estimate $\hat{\P}(\bold{Y}\mid\bold{X}\myeq\bold{x})$ converges fast to the truth $\P(\bold{Y}\mid\bold{X}\myeq\bold{x})$ in the MMD distance, i.e.\ the RKHS distance between the corresponding embeddings. Even though this is interesting on its own, ultimately we want to relate this result to estimation of certain distributional targets $\tau(\bold{x}) = \tau(\P(\bold{Y}\mid\bold{X}\myeq\bold{x})).$

For any $f \in \mathcal{H}$, we have that the DRF estimate of the target $\tau(\bold{x}) = \E[f(\bold{Y})\mid\bold{X}\myeq\bold{x}]$ equals the dot product $\langle f, \hat{\mu}_n(\mathbf{x}) \rangle_\mathcal{H}$ in the RKHS:
\begin{equation*}
    \langle f , \hat{\mu}_n(\mathbf{x}) \rangle_\mathcal{H} = \left\langle f,\, \int_{\R^d} \varphi(\mathbf{y}) d \hat{\P}(\mathbf{y} \mid \mathbf{X} \myeq \mathbf{x})\right\rangle_\mathcal{H} =  \int_{\R^d} f(\mathbf{y}) \, d\hat{\P}(\mathbf{y} \mid \mathbf{X} \myeq \mathbf{x}) = \sum_{i=1}^n w_{\mathbf{x}}(\mathbf{x}_i) f(\mathbf{y}_i),
\end{equation*}
where we recall the weighting function $w_{\mathbf{x}}(\cdot)$ induced by the forest \eqref{eq: weighting function}.
Therefore, the consistency result \eqref{eq: rate} in Theorem \ref{thm: consistency} directly implies that 
\begin{equation} \label{eq: target consistency}
\sum_{i=1}^n w_{\mathbf{x}}(\mathbf{x}_i) f(\mathbf{y}_i) = \langle f, \hat{\mu}_n(\mathbf{x}) \rangle_\mathcal{H}  \stackrel{p}{\to} \langle f, \mu(\mathbf{x}) \rangle_\mathcal{H} = \E[f(\bold{Y})\mid\bold{X}\myeq\bold{x}] \quad \text{ for any $f \in \mathcal{H}$},
\end{equation}
i.e. that the DRF consistently estimates the targets of the form $\tau(\bold{x}) = \E[f(\bold{Y})\mid\bold{X}\myeq\bold{x}]$, for $f \in \mathcal{H}$. From \eqref{eq: rate} we also obtain the rate of convergence when $s_n \asymp n^\beta$:
\begin{equation*}
\left|\sum_{i=1}^n w_{\mathbf{x}}(\mathbf{x}_i) f(\mathbf{y}_i) - \E[f(\bold{Y})\mid\bold{X}\myeq\bold{x}]\right| =  \mathcal{O}_{p}\left( n^{-\gamma} \norm{f}_\mathcal{H}\right), 
\end{equation*}
for $\gamma$ as in Theorem \ref{thm: consistency}.
When $k$ is continuous, it is well known that all elements of $\mathcal{H}$ are continuous, see e.g.\ \citet{hilbertspacebook}. Under certain assumptions on the kernel and its input space, holding for several popular kernels, (e.g.\ the Gaussian kernel) \citep{optimalestimationofprobabilitymeasures}, we can generalize the convergence result \eqref{eq: target consistency} to any bounded and continuous function  $f:\R^d \to \R$, as the convergence of measures $\hat{\P}(\bold{Y}\mid\bold{X}\myeq\bold{x}) \to \P(\bold{Y}\mid\bold{X}\myeq\bold{x})$ in the MMD metric will also imply their weak convergence, i.e. $k$ metrizes weak convergence \citep{optimalestimationofprobabilitymeasures, simon2018kernel, simon2020metrizing}:

\begin{corollary} \label{cor: metrizing weak convergence}
Assume that one of the following two sets of conditions holds:
\begin{itemize}
    \item[(a)] The kernel $k$ is bounded, (jointly) continuous and has 
    \begin{align}\label{characteristiccondapp}
        \int \int k(\mathbf{x},\mathbf{y}) d\mathcal{P}(\mathbf{x}) d\mathcal{P}(\mathbf{y}) > 0 \  \ \forall \mathcal{P} \in \mathcal{M}_b(\R^d)\setminus \{0\}.
    \end{align}
    Moreover, $\mathbf{y} \mapsto k(\mathbf{y}_0, \mathbf{y})$ is vanishing at infinity, for all $\mathbf{y}_0 \in \R^d$.
    \item[(b)] The kernel $k$ is bounded, shift-invariant, (jointly) continuous and $\nu$ in the Bochner representation in \eqref{eq: bochner} is supported on all of $\R^d$. Moreover, $\mathbf{Y}$ takes its values almost surely in a closed and bounded subset of $\R^d$.
\end{itemize}
Then, under the conditions of Theorem \ref{thm: consistency}, we have for any bounded and continuous function $f:\R^d \to \R$ that DRF consistently estimates the target $\tau(\bold{x})=\E[f(\bold{Y})\mid\bold{X}\myeq\bold{x}]$ for any $\bold{x} \in [0,1]^p$:
$$\sum_{i=1}^n w_{\mathbf{x}}(\mathbf{x}_i) f(\mathbf{y}_i) \,\stackrel{p}{\to}\, \E[f(\bold{Y})\mid\bold{X}\myeq\bold{x}].$$
\end{corollary}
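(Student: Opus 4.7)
The plan is to reduce the statement to the fact that, under either assumption set, the MMD metric generated by $k$ metrizes weak convergence of probability measures on the relevant domain, and then to combine this with Theorem~\ref{thm: consistency} via a subsequence argument.

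First I would rewrite both sides using the induced weights. By construction of the distributional estimate \eqref{eq: distributional estimate}, $\sum_{i=1}^n w_{\bold{x}}(\bold{x}_i) f(\bold{y}_i) = \int f \, d\hat{\P}(\bold{Y}\mid\bold{X}\myeq\bold{x})$, while $\E[f(\bold{Y})\mid\bold{X}\myeq\bold{x}] = \int f \, d\P(\bold{Y}\mid\bold{X}\myeq\bold{x})$. Theorem~\ref{thm: consistency} together with the identification $\|\hat{\mu}_n(\bold{x}) - \mu(\bold{x})\|_\H = \mathcal{D}_{\mathrm{MMD}(k)}^{1/2}(\hat{\P}(\bold{Y}\mid\bold{X}\myeq\bold{x}), \P(\bold{Y}\mid\bold{X}\myeq\bold{x}))$ gives $\mathcal{D}_{\mathrm{MMD}(k)}(\hat{\P}_n, \P) \to 0$ in probability, where I abbreviate $\hat{\P}_n = \hat{\P}(\bold{Y}\mid\bold{X}\myeq\bold{x})$ and $\P = \P(\bold{Y}\mid\bold{X}\myeq\bold{x})$. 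So the task is to show that, for $f$ bounded and continuous, $\mathcal{D}_{\mathrm{MMD}(k)}(\hat{\P}_n, \P) \to 0$ in probability implies $\int f \, d\hat{\P}_n \to \int f \, d\P$ in probability.

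Next I would verify the metrization of weak convergence. Under case (a), the conditions I need are exactly those of the standard result (\citet{simon2018kernel,simon2020metrizing}): the kernel is bounded, continuous, characteristic in the strong sense \eqref{characteristiccondapp}, and $y \mapsto k(y_0, y)$ vanishes at infinity; these together imply that MMD metrizes weak convergence on the full space of Borel probability measures on $\R^d$. Under case (b), since $\bold{Y}$ is supported almost surely on a closed bounded set $K \subset \R^d$, both $\P$ and the empirical measures $\hat{\P}_n$ are supported on $K$ (as the $\bold{y}_i$ are training responses). The shift-invariant kernel with spectral measure $\nu$ of full support is characteristic by \eqref{eq: bochner} and the injectivity of the Fourier transform, so $\mu$ restricted to probability measures on $K$ is injective; tightness being automatic on the compact $K$, any MMD-convergent sequence is weakly relatively compact, and uniqueness of the limit via injectivity of $\mu$ forces weak convergence. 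Thus in both cases, MMD-convergence to a fixed $\P$ on the supporting class of measures implies weak convergence.

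Finally I would transfer this from almost-sure to in-probability convergence via the standard subsubsequence device. Given any subsequence of $\{\hat{\P}_n\}$, convergence in probability of $\mathcal{D}_{\mathrm{MMD}(k)}(\hat{\P}_n, \P)$ to $0$ yields a further subsubsequence along which $\mathcal{D}_{\mathrm{MMD}(k)}(\hat{\P}_{n_k}, \P) \to 0$ almost surely. By the metrization just established, $\hat{\P}_{n_k} \Rightarrow \P$ weakly almost surely, whence $\int f \, d\hat{\P}_{n_k} \to \int f \, d\P$ almost surely by the portmanteau theorem for bounded continuous $f$. Since every subsequence admits such a further a.s.-convergent subsubsequence, the full sequence $\int f \, d\hat{\P}_n$ converges in probability to $\int f \, d\P = \E[f(\bold{Y})\mid\bold{X}\myeq\bold{x}]$, which is the claim.

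The main obstacle is the metrization step, especially in case (a), where one must appeal to the nontrivial fact that boundedness, continuity, the positivity condition \eqref{characteristiccondapp}, and the vanishing-at-infinity property of $k$ together ensure that MMD metrizes weak convergence on all of $\mathcal{M}_b(\R^d)$ rather than only the weaker statement that it is a metric; in case (b) the compact support reduces this to injectivity of $\mu$ plus automatic tightness, which is considerably more direct. The remaining arguments — rewriting integrals against the empirical weighted measure, and the subsubsequence passage from a.s.\ to in-probability convergence — are routine.
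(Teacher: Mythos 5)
Your proposal is correct and follows essentially the same structure as the paper's proof: reduce the statement to MMD-convergence via Theorem~\ref{thm: consistency} and the identity $\langle f, \hat{\mu}_n(\bold{x})\rangle_\H = \sum_i w_{\bold{x}}(\bold{x}_i) f(\bold{y}_i)$, establish that MMD metrizes weak convergence under (a) and (b), then transfer via the subsequence/subsubsequence device (which the paper isolates as Lemma~\ref{asplemma}) and the Portmanteau theorem. The only difference is in how case (b) is handled: the paper invokes two theorems of Sriperumbudur et al.\ directly (characteristicness on a compact subspace, then metrization there), whereas you unpack a self-contained argument — automatic tightness on the compact support, injectivity of $\mu$ from the full-support spectral measure, and continuity of the embedding under weak convergence on the compact set to identify subsequential limits. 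Your version is slightly more elementary and makes the role of compactness explicit, at the cost of having to verify the continuity-of-$\mu$ step that is otherwise buried in the cited theorem; both routes are sound.
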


Recalling the Portmanteau Lemma on separable metric spaces, see e.g.\ \citet[Chapter 11]{dudley}, this has several other interesting consequences, such as the consistency of CDF and quantile estimates; Let $F_{\mathbf{Y}\mid\bold{X}\myeq\bold{x}}(\cdot)$ be the conditional CDF of $\mathbf{Y}$ and for any index $1\leq i\leq d$, let $F_{Y_i\mid\bold{X}\myeq\bold{x}}(\cdot)$ be the conditional CDF of $Y_i$ and $F_{Y_i\mid\bold{X}\myeq\bold{x}}^{-1}(\cdot)$ its generalized inverse, i.e. the quantile function. Let $\hat{F}_{Y_i\mid\bold{X}\myeq\bold{x}}(\cdot)$ and $\hat{F}_{Y_i\mid\bold{X}\myeq\bold{x}}^{-1}(\cdot)$ be the corresponding DRF estimates via weighting function \eqref{eq: CDF estimate}. Then we have the following result:



\begin{corollary}\label{cor: cdfresult}
Under the conditions of Corollary \ref{cor: metrizing weak convergence}, for any $1 \leq i \leq d$, we have 
\begin{align*}
    \hat{F}_{\mathbf{Y} \mid\bold{X}\myeq\bold{x}}(\mathbf{t}) \, &\stackrel{p}{\to}\, F_{\mathbf{Y}\mid\bold{X}\myeq\bold{x}}(\mathbf{t}) \\
    \hat{F}_{Y_i\mid\bold{X}\myeq\bold{x}}^{-1}(t) \, &\stackrel{p}{\to}\, F_{Y_i\mid\bold{X}\myeq\bold{x}}^{-1}(t),
\end{align*}
for all points of continuity $\mathbf{t} \in \R^d$ and $t \in \R$ of $F_{\mathbf{Y}\mid\bold{X}\myeq\bold{x}}(\cdot)$ and $F_{Y_i\mid\bold{X}\myeq\bold{x}}^{-1}(\cdot)$ respectively.
\end{corollary}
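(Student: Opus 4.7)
The plan is to deduce both claims from Corollary \ref{cor: metrizing weak convergence}, which guarantees that $\sum_{i=1}^n w_\mathbf{x}(\mathbf{x}_i) f(\mathbf{y}_i) \stackrel{p}{\to} \E[f(\mathbf{Y}) \mid \mathbf{X} \myeq \mathbf{x}]$ for every bounded continuous $f: \R^d \to \R$. This is precisely weak convergence (in probability) of the random measure $\hat{\P}(\mathbf{Y}\mid\mathbf{X}\myeq\mathbf{x})$ to $\P(\mathbf{Y}\mid\mathbf{X}\myeq\mathbf{x})$, so both statements reduce to Portmanteau-type implications, carefully adapted to the stochastic setting.

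For the CDF statement at a continuity point $\mathbf{t}$, I would sandwich the indicator $\1(\mathbf{y} \leq \mathbf{t})$ between bounded continuous functions. For each $\epsilon > 0$, pick $g_\epsilon, h_\epsilon : \R^d \to [0,1]$ with $g_\epsilon(\mathbf{y}) \leq \1(\mathbf{y} \leq \mathbf{t}) \leq h_\epsilon(\mathbf{y})$, both agreeing with $\1(\cdot \leq \mathbf{t})$ outside a thin boundary layer of width $\epsilon$ around $\partial\{\mathbf{y} \leq \mathbf{t}\}$ (e.g.\ piecewise-linear tent interpolations along each coordinate). Since the weights are nonnegative and sum to one, this yields $\sum_i w_\mathbf{x}(\mathbf{x}_i) g_\epsilon(\mathbf{y}_i) \leq \hat{F}_{\mathbf{Y}\mid\mathbf{X}\myeq\mathbf{x}}(\mathbf{t}) \leq \sum_i w_\mathbf{x}(\mathbf{x}_i) h_\epsilon(\mathbf{y}_i)$, and Corollary \ref{cor: metrizing weak convergence} gives convergence in probability of the two bounds to $\E[g_\epsilon(\mathbf{Y})\mid\mathbf{X}\myeq\mathbf{x}]$ and $\E[h_\epsilon(\mathbf{Y})\mid\mathbf{X}\myeq\mathbf{x}]$ respectively. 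Both of these bracket $F_{\mathbf{Y}\mid\mathbf{X}\myeq\mathbf{x}}(\mathbf{t})$ and differ from it by at most the $\P(\cdot\mid\mathbf{X}\myeq\mathbf{x})$-measure of the boundary strip, which tends to $0$ as $\epsilon \to 0$ by continuity of the CDF at $\mathbf{t}$. A standard $\epsilon/3$-argument then closes the gap.

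For the quantile, I first observe that the identical sandwich argument, applied to the univariate indicator $\1(y_i \leq t)$ viewed as a function of $\mathbf{y} \in \R^d$, produces the marginal CDF convergence $\hat{F}_{Y_i\mid\mathbf{X}\myeq\mathbf{x}}(t) \stackrel{p}{\to} F_{Y_i\mid\mathbf{X}\myeq\mathbf{x}}(t)$ at every continuity point $t$ of $F_{Y_i\mid\mathbf{X}\myeq\mathbf{x}}$. To lift this to the quantile, set $q = F_{Y_i\mid\mathbf{X}\myeq\mathbf{x}}^{-1}(t)$ and fix $\epsilon > 0$. Continuity of the quantile at $t$ is equivalent to $F_{Y_i\mid\mathbf{X}\myeq\mathbf{x}}$ being strictly increasing at $q$, so $F_{Y_i\mid\mathbf{X}\myeq\mathbf{x}}(q - \epsilon) < t < F_{Y_i\mid\mathbf{X}\myeq\mathbf{x}}(q + \epsilon)$, and after perturbing $\epsilon$ infinitesimally I may also take $q \pm \epsilon$ to be continuity points of $F_{Y_i\mid\mathbf{X}\myeq\mathbf{x}}$ (a monotone function has at most countably many discontinuities). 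The marginal CDF convergence then gives, with probability tending to one, $\hat{F}_{Y_i\mid\mathbf{X}\myeq\mathbf{x}}(q - \epsilon) < t < \hat{F}_{Y_i\mid\mathbf{X}\myeq\mathbf{x}}(q + \epsilon)$, and monotonicity of the estimated CDF forces $q - \epsilon \leq \hat{F}_{Y_i\mid\mathbf{X}\myeq\mathbf{x}}^{-1}(t) \leq q + \epsilon$, as required.

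The only subtle point is bookkeeping: Corollary \ref{cor: metrizing weak convergence} delivers pointwise-in-$f$ convergence in probability, not a uniform random-weak-convergence statement, so both arguments must freeze the continuous approximants first (i.e.\ fix $\epsilon$), apply the corollary to those finitely many functions, and send $\epsilon \to 0$ only afterwards. Once that ordering is respected, both claims collapse to the textbook implications of weak convergence for CDFs and quantiles.
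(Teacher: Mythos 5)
Your proof is correct, but it takes a different route from the paper's. The paper reduces convergence in probability to almost-sure convergence along subsequences (its Lemma~\ref{asplemma}), notes that the MMD metrizes weak convergence under the hypotheses of Corollary~\ref{cor: metrizing weak convergence}, and then quotes the deterministic Portmanteau theorem and the deterministic quantile-convergence result (van der Vaart, Ch.~21) pointwise on the a.s.\ set for each fixed subsequence, finally converting back via the subsequence criterion. You instead stay entirely in the ``in probability'' regime: for the CDF you sandwich the indicator $\1(\mathbf{y}\le\mathbf{t})$ between bounded continuous approximants $g_\epsilon\le\1\le h_\epsilon$, feed the two fixed functions into Corollary~\ref{cor: metrizing weak convergence}, and then send $\epsilon\to 0$, using continuity of $F_{\mathbf{Y}\mid\mathbf{X}=\mathbf{x}}$ at $\mathbf{t}$ to control $F(\mathbf{t}\pm\epsilon\mathbf{1})-F(\mathbf{t})$; for the quantile you first derive marginal CDF convergence the same way and then run the classical monotone-inverse argument, correctly noting that $q\pm\epsilon$ may need to be perturbed to continuity points of the marginal CDF. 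The paper's subsequence trick is slightly more economical because it lets one invoke the classical deterministic theorems verbatim; your version unpacks Portmanteau explicitly and is more self-contained, at the cost of more $\epsilon$-bookkeeping. The one thing you flag yourself---freeze $\epsilon$ and the approximants before invoking the corollary, and only then send $\epsilon\to 0$---is exactly the point where a careless version would go wrong, and you handle it correctly.
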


\section{Applications and Numerical Experiments}
\label{sec: applications}
The goal of this section is to demonstrate the versatility and applicability of DRF for many practical problems. We show that DRF can be used not only as an estimator of the multivariate conditional distribution, but also as a two-step method to easily obtain out-of-the box estimators for various, and potentially complex, targets $\tau(\bold{x})$.

Our main focus lies on the more complicated targets which cannot be that straightforwardly approached by conventional methods. However, we also illustrate the usage of DRF for certain applications for which there already exist several well-established methods. Whenever possible in such cases, we compare the performance of DRF with the specialized, task-specific methods to show that, despite its generality, there is at most a very small loss of precision. However, we should point out that for many targets such as, that can not be written in a form of a conditional mean or a conditional quantile, for example, conditional correlation, direct comparison of the accuracy is not possible for real data, since no suitable loss function exists and the ground truth is unknown. Finally, we show that, in addition to directly estimating certain targets, DRF can also be a very useful tool for many different applications, such as causality and fairness.

Detailed descriptions of all competing methods, data sets and the corresponding analyses can be found in Appendix \ref{appendix: simulation details}, and some additional simulations can be found in the Appendix \ref{appendix: additional examples}. 

\subsection{Estimation of Conditional Multivariate Distributions}
\label{sec: benchmarks}
In order to provide good estimates for any target $\tau(\bold{x}) = \tau(\P(\bold{Y}\mid\bold{X}\myeq\bold{x}))$, our method needs to estimate the conditional multivariate distribution $\P(\bold{Y}\mid \bold{X}\myeq\bold{x})$ well. Therefore, we first investigate here the accuracy of the DRF estimate \eqref{eq: distributional estimate} of the full conditional distribution and compare its performance with the performance of several existing methods. 

In addition to a few simple methods such as the $k$-nearest neighbors or the kernel regression, which locally weight the training points, we also consider the CME estimator of \citet{OurapproachtoCME} and several advanced machine learning methods such as the Conditional Generative Adversarial Network (CGAN) \citep{mirza2014conditional, aggarwal2019benchmarking}, Conditional Variational Autoencoder (CVAE) \citep{sohn2015learning} and Masked Autoregressive Flow \citep{papamakarios2017masked}. It is worth mentioning that the focus in the machine learning literature has been more on applications where $d$ is very large (e.g.\ pixels of an image) and $p$ is very small (such as image labels). Even though some methods do not provide the estimated conditional distribution in a form as simple as DRF, one is still able to sample from the estimated distribution and thus perform any subsequent analysis and make fair comparisons between the methods. For the CME estimator we simply set the negative weights to zero and renormalize, such that the weights are nonnegative and sum to one.

\begin{figure}
    \includegraphics[width=1\linewidth]{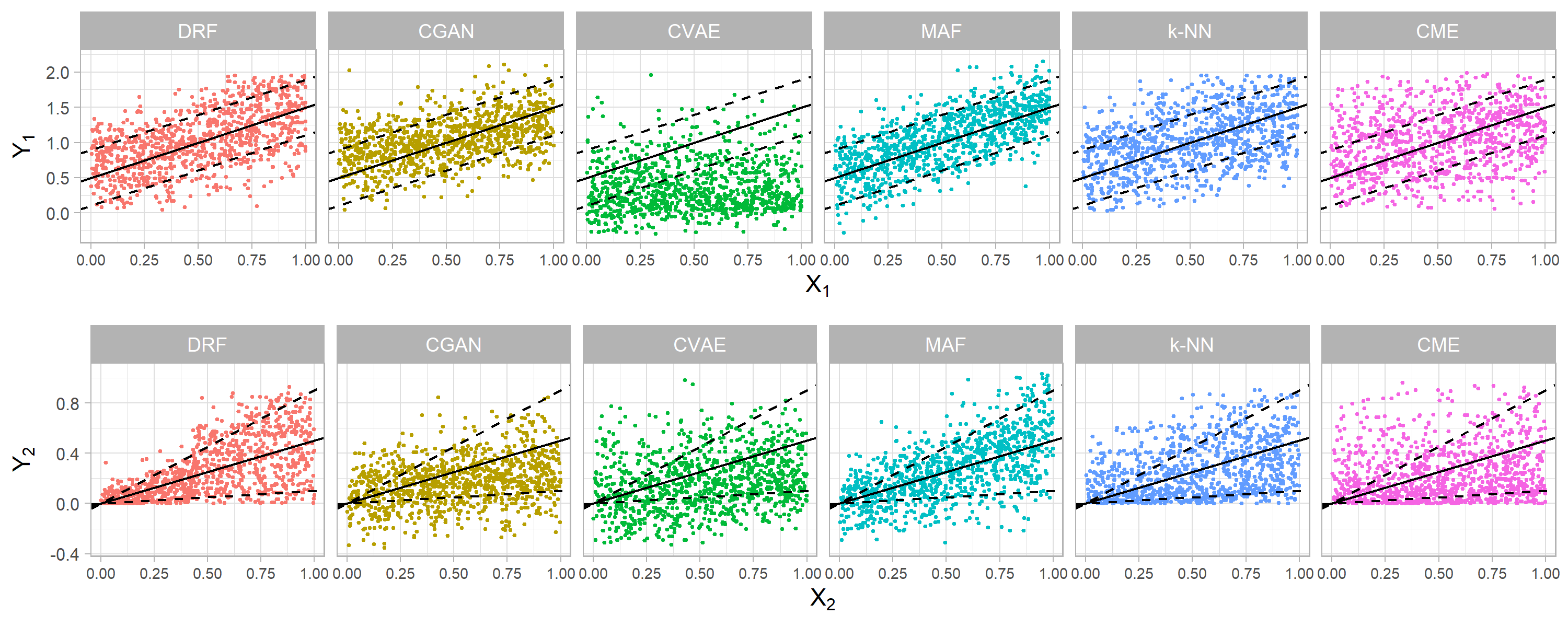}
    \caption{The illustration of the estimated joint conditional distribution obtained by different methods for the toy example \eqref{eq: vignette example}. For $1000$ randomly generated test points $\bold{X}_{\text{test}} \sim U(0, 1)^p$ the top row shows the estimated distribution of the response component $Y_1$, whereas the bottom row shows the estimated distribution of $Y_2$. The $0.1$ and $0.9$ quantiles of the true conditional distribution are indicated by a dashed black line, whereas the conditional mean is shown as a black solid line.}
    \label{fig: distribution illustration}
\end{figure}

We first illustrate the estimated distributions by the above methods on a toy example where $n=1000, p=10, d=2$ and
\begin{equation} \label{eq: vignette example}
Y_1 \indep Y_2\mid \bold{X}\myeq\bold{x}, \quad Y_1 \mid \bold{X}\myeq\bold{x} \sim U(x_1, x_1 + 1), \quad Y_2 \mid \bold{X}\myeq\bold{x} \sim U(0, x_2), \quad \bold{X} \sim U(0,1)^p.
\end{equation}
In the above example $X_1$ affects the mean of $Y_1$, whereas $X_2$ affects the both mean and variance of $Y_2$, and $X_3,\ldots, X_p$ have no impact. The results can be seen in Figure \ref{fig: distribution illustration}. We see that, unlike some other methods, DRF is able to balance the importance of the predictors $X_1$ and $X_2$ and thus to estimate the distributions of $Y_1$ and $Y_2$ well.

One can do a more extensive comparison on a collection of real data sets. We use the benchmark data sets from the multi-target regression literature \citep{mtr} together with some additional ones created from the data sets described throughout this paper. The performance of DRF is compared with the performance of other existing methods for nonparametric estimation of multivariate distributions by using the Negative Log Predictive Density (NLPD) loss, which evaluates the logarithm of the induced multivariate density estimate \citep{quinonero2005evaluating}. As the number of test points grows to infinity, NLPD loss becomes equivalent to the average KL divergence between the estimated and the true conditional distribution and is thus able to capture how well one estimates the whole distribution, instead of only its mean.

In addition to the methods mentioned above, we also include some methods that are intended only for mean prediction, by assuming that the distribution of the response around its mean is homogeneous, i.e. that the conditional distribution $\P\left(\bold{Y} - \E[\bold{Y} \mid \bold{X}] \mid \bold{X}\myeq\bold{x}\right)$ does not depend on $\bold{x}$. This is fitted by regressing each component of $\bold{Y}$ separately on $\bold{X}$ and using the pooled residuals. We consider the standard nonparametric regression methods such as Random Forest \citep{breiman2001random}, XGBoost \citep{Chen:XGB}, and Deep Neural Networks \citep{goodfellow2016deep}.

\begin{table}
\centering
\begin{tabularx}{\textwidth}{lcccccccccccccc}
    \rot[0][3.5em]{}
    & \rot[45][1.8em]{jura}
    & \rot[45][1.8em]{slump}
    & \rot[45][1.8em]{wq}
    & \rot[45][1.8em]{enb} 
    & \rot[45][1.8em]{atp1d}
    & \rot[45][1.8em]{atp7d}
    & \rot[45][1.8em]{scpf} 
    & \rot[45][1.8em]{sf1} 
    & \rot[45][1.8em]{sf2} 
    & \rot[45][1.8em]{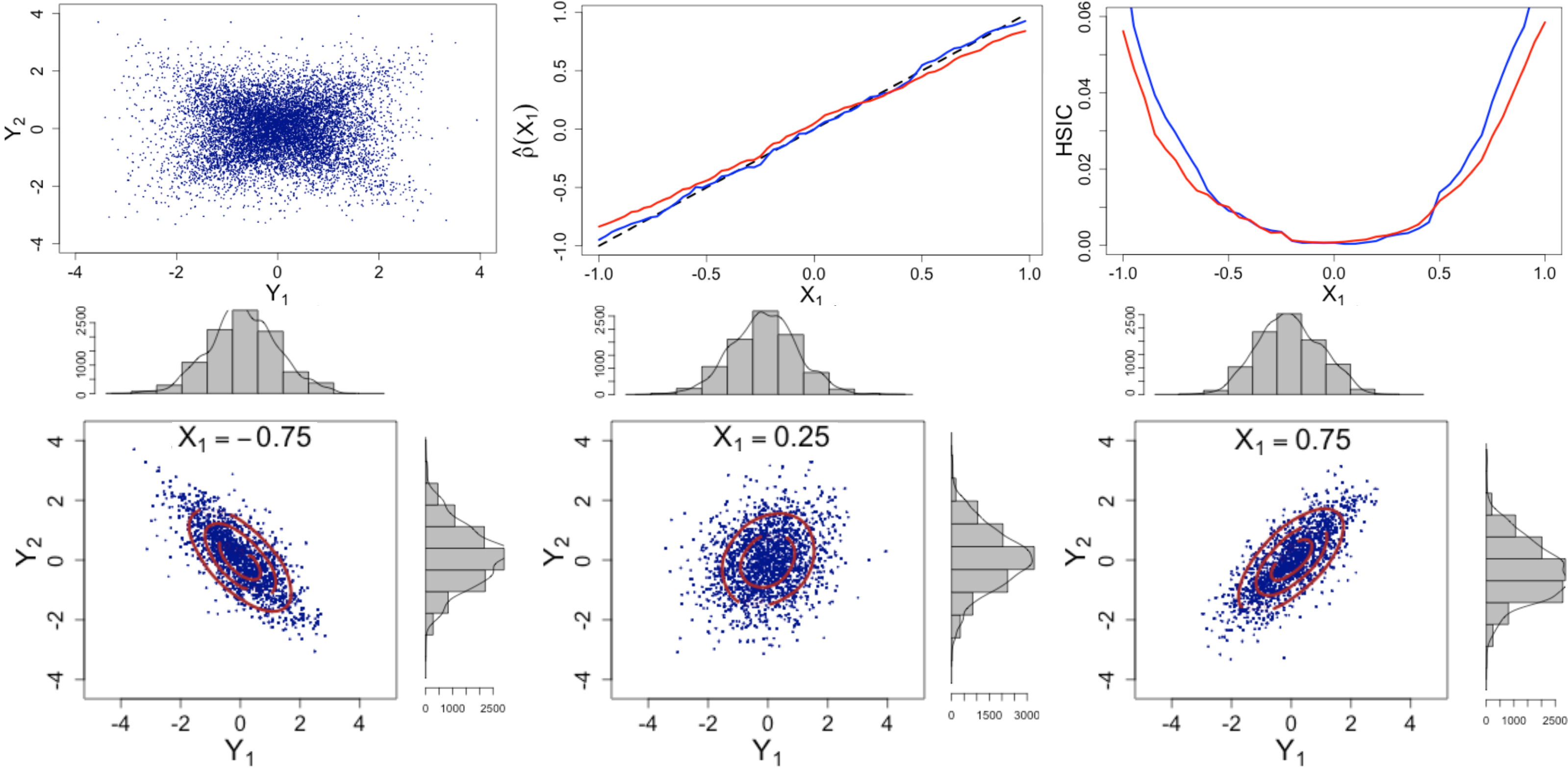} 
    & \rot[45][1.8em]{wage}
    & \rot[45][1.8em]{births1}
    & \rot[45][1.8em]{births2}
    & \rot[45][1.8em]{air} \\
    \bottomrule
    
    \multicolumn{1}{c|}{$n$} & 359 & 103 & 1K & 768 & 337 & 296 & 143 & 323 & 1K & 5K & 10K & 10K & 10K & 10K\\
    \multicolumn{1}{c|}{$p$} & 15 & 7  & 16 & 8 & 370 & 370 & 8 & 21 &22 & 10 & 73 & 23 & 24 & 15 \\
    \multicolumn{1}{c|}{$d$} & 3 & 3& 14 &2 & 6 & 6 & 3 & 3 & 6 & 2 & 2 & 2 &4 & 6\\
     \hline
    \multicolumn{1}{c|}{$\text{DRF}$} & 3.9 & \textbf{4.0} & \textbf{22.5} & 2.1 & 7.3 & \textbf{7.0} & \textbf{2.0} & -24.2 &  \textbf{-24.3} & \textbf{2.8}  & \textbf{2.8} & 2.5 & \textbf{4.2} & \textbf{8.5} \\
    \multicolumn{1}{c|}{$\text{CGAN}$} & 10.8 & 5.3 & 27.3 & 3.5 & 10.4 & 363 & 4.8 & 9.8 & 21.1 & 5.8 & 360 & \textbf{2.4} & {\small $>$1K}  & 11.8 \\ 
    \multicolumn{1}{c|}{$\text{CVAE}$} & 4.8& 37.8 & 36.8 & 2.6  & {\small $>$1K}  &  {\small $>$1K}  & 108.8 & 8.6 & {\small $>$1K} & 2.9 & {\small $>$1K} & {\small $>$1K}  & 49.7 & 9.6 \\ 
    \multicolumn{1}{c|}{$\text{MAF}$} & 4.6 & 4.5 & 23.9& 3.0  & 8.0 & 8.1 & 2.6 & 4.7 & 3.8 & 2.9 & 3.0 & 2.5 & {\small $>$1K} & 8.5 \\ 
    \multicolumn{1}{c|}{$\text{k-NN}$} & 4.5 & 5.0 & 23.4 & 2.4 & 8.8 & 8.6 & 4.1 & -22.4 & -19.7 & 2.9 & \textbf{2.8} & 2.7 & 4.4 & 8.8 \\
    \multicolumn{1}{c|}{$\text{kernel}$} & 4.1 & 4.2 & 23.0 & \textbf{2.0} & \textbf{6.6} & 7.1 & 2.9 & -23.0 & -20.6 & 2.8 & 2.9 & 2.6 & 4.3 & 8.4 \\
    \multicolumn{1}{c|}{$\text{RF}$} & 7.1 & 12.1 & 35.2 & 5.7 & 12.7 & 13.3 & 16.7 & 3.9 & 2.2 & 5.8 & 6.1 & 5.0 & 8.3 & 13.9 \\ 
    \multicolumn{1}{c|}{$\text{XGBoost}$} & 11.4 & 38.3 & 25.9 & 3.0 &  {\small $>$1K} &  {\small {\small $>$1K}} &  {\small $>$1K} & 0.3 & 1.6 & 3.5 & 2.9 &  {\small $>$1K} &  {\small $>$1K} & 12.8 \\ 
    \multicolumn{1}{c|}{$\text{DNN}$} & 4.0 & 4.2 & 23.3 & 2.6 & 8.6 & 8.7 & 2.6 & 2.3 & 2.2 & 2.9 & 3.0 & 2.6 & 5.4 & 8.6 \\ 
     \multicolumn{1}{c|}{$\text{CME}$} & \textbf{3.2} & 4.9 & 23.2 & 2.9 & 8.5 & 8.4 & 2.5 & \textbf{-24.4} & \textbf{-24.3} & 2.8 & 3.5 & 3.8 & 15.2 & 8.8 \\ 
         \hline
\end{tabularx}
\caption{
NLPD loss computed on out-of-sample observations for the estimated conditional distributions obtained by several different methods (corresponding to rows) for many real data sets (corresponding to columns). The best method is indicated in bold.}
\label{benchtable}
\end{table}

The results are shown in Table \ref{benchtable}. We see that DRF performs well for a wide range of sample size and problem dimensionality, especially in problems where $p$ is large and $d$ is moderately big. It does so without the need for any tuning or involved numerical optimization.

\subsection{Estimation of Statistical Functionals}

Because DRF represents the estimated conditional distribution $\hat{\P}(\bold{Y} \mid \bold{X}=\bold{x}) = \sum_{i} w_{\bold{x}}(\bold{x}_i)\cdot \delta_{\bold{y}_i}$ in a convenient form by using weights $w_{\bold{x}}(\bold{x}_i)$, a plug-in estimator $\tau(\hat{\P}(\bold{Y} \mid \bold{X}=\bold{x}))$ of many common real-valued statistical functionals $\tau(\P(\bold{Y} \mid \bold{X}=\bold{x})) \in \R$ can be easily constructed from $w_{\bold{x}}(\cdot)$.

\begin{figure}[h]
    \centering
    \includegraphics[width=1\linewidth]{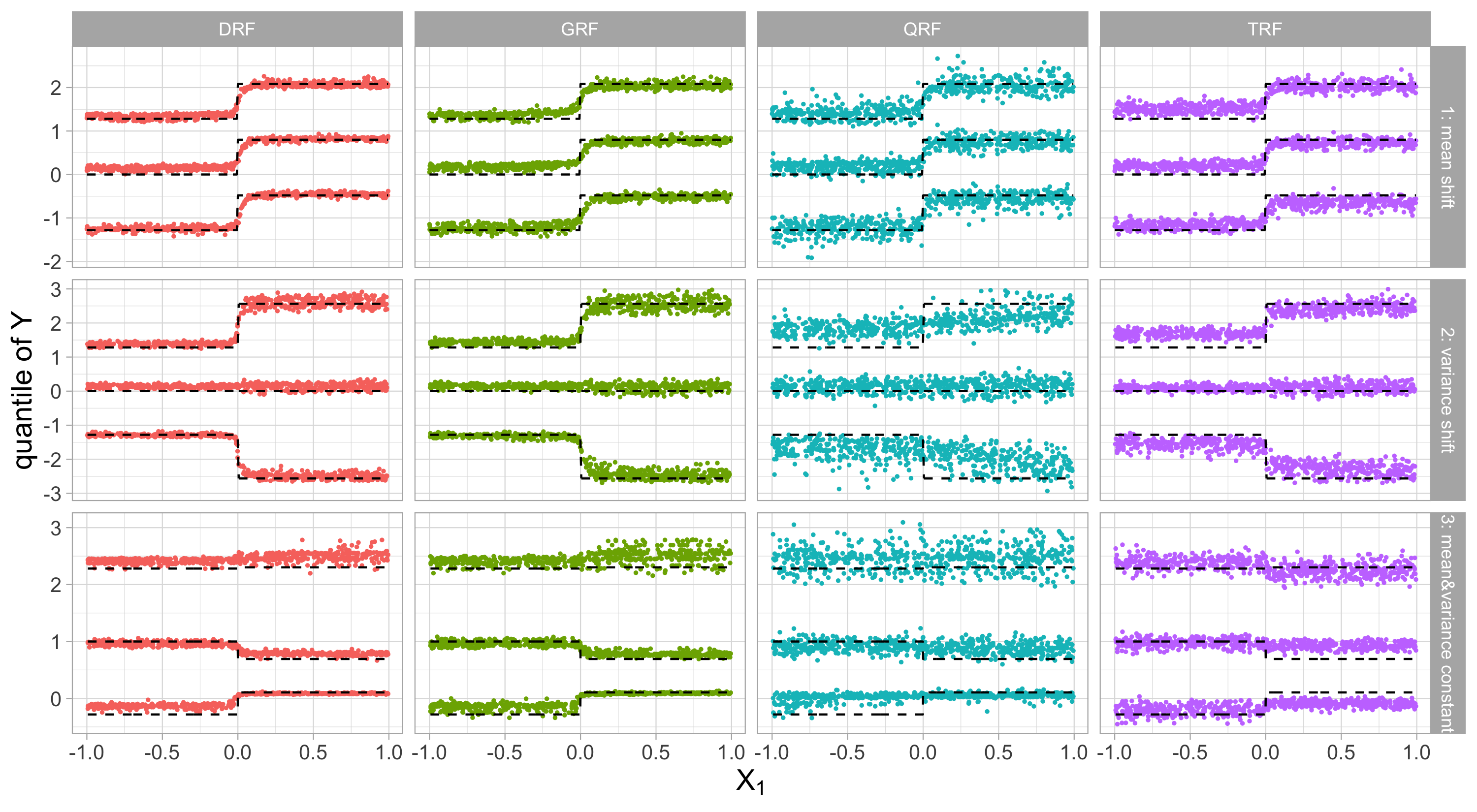}
    \caption{Scatter plot of predictions of the $0.1, 0.5$ and $0.9$ quantiles against $X_1$ for randomly generated $500$ test data points $\bold{X}_{\text{test}} \sim U(-1, 1)^p$. The true values of the quantiles are displayed by black dashed lines. The columns corresponds to different methods DRF (red), GRF (green), QRF (blue), TRF (purple). The rows correspond to different simulation scenarios. The first two are taken from \citet{athey2019generalized}.}
    \label{fig: univariate}
\end{figure}

We first investigate the performance for the classical problem of univariate quantile estimation on simulated data. We consider the following three data generating mechanisms with $p=40, n=2000$ and $\bold{X}_i \iid U(-1,1)^p$:
\begin{itemize}
    \item Scenario 1: $Y \sim N(0.8\cdot \1(X_1 > 0), 1)$ (mean shift based on $X_1$)
    \item Scenario 2: $Y \sim N(0, (1+\1(X_1>0))^2)$ (variance shift based on $X_1$)
    \item Scenario 3: $Y \sim \1(X_1\leq 0) \cdot N(1, 1) + \1(X_1 > 0) \cdot \text{Exp}(1)$ (distribution shift based on $X_1$, constant mean and variance)
\end{itemize}
The first two scenarios correspond exactly to the examples given in \citet{athey2019generalized}. 

In Figure \ref{fig: univariate} we can see the corresponding estimates of the conditional quantiles for DRF, Quantile Regression Forest (QRF) \citep{meinshausen2006quantile}, which uses the same forest construction with CART splitting criterion as the original Random Forest \citep{breiman2001random} but estimates the quantiles from the induced weighting function, Generalized Random Forests (GRF) \citep{athey2019generalized} with a splitting criterion specifically designed for quantile estimation and Transformation Forests (TRF) \citep{hothorn2017transformation}. We see that DRF is performing very well even compared to methods that are specifically tailored to quantile estimation. 

The multivariate setting is however more interesting, as one can use DRF to compute much more interesting statistical functionals $\tau(\bold{x})$. We illustrate this in Figure \ref{fig: functionals} for the air quality data set, described in Section \ref{sec: weighting function}. The left plot shows one value of the estimated multivariate CDF, specifically the estimated probability of the event that the air quality index (AQI) is at most $50$ at a given test site. This corresponds to the "Good" category and means that the amount of every air pollutant is below a certain threshold determined by the EPA. Such probability estimates can be easily obtained by summing the weights of the training points belonging to the event of interest. For both plots in Figures \ref{air_data} and \ref{fig: functionals}, we train the single DRF with the same set of predictor variables and take the three pollutants O$_3$, SO$_2$ and PM$2.5$ as the responses. In this way we still have training data from many different sites.

\begin{figure}
\centering
\includegraphics[width=1\textwidth]{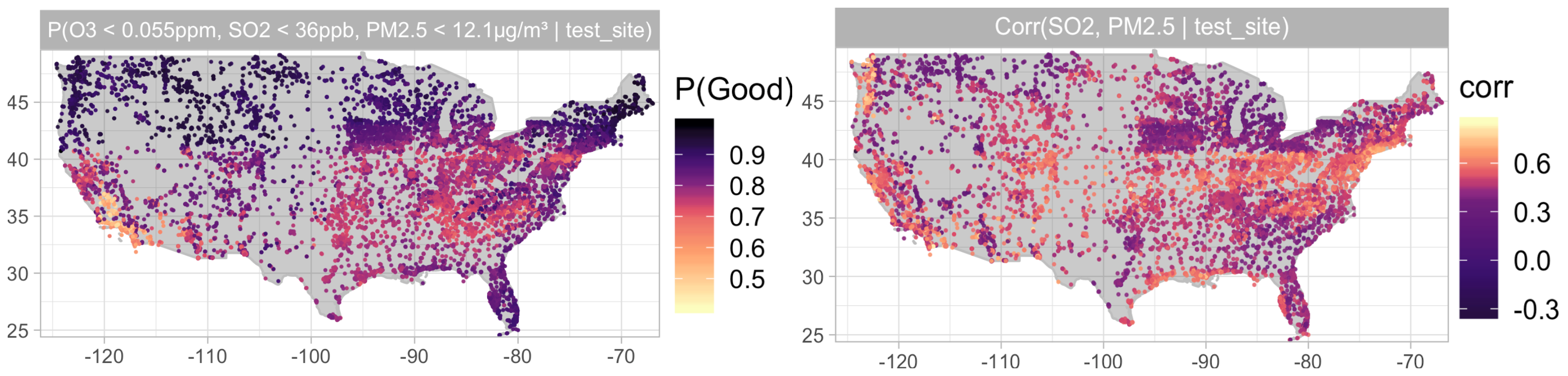}
\caption{Estimates of the probability $\P(\text{AQI} \leq 50 \mid \text{test site})$ (left) and the conditional correlation (right) derived from the DRF estimate of the multivariate conditional distribution.}
\label{fig: functionals}
\end{figure}

In order to investigate the accuracy of the conditional CDF obtained by DRF, we compare the estimated probabilities with estimates of the standard univariate classification forest \citep{breiman2001random} with the response $\1(\text{AQI} \leq 50)$. In the left plot of Figure \ref{comparison}, we can see that the DRF estimates of the $\P(\text{AQI} \leq 50 \mid {\mathbf{X}=\mathbf{x}})$ (also visualized in Figure \ref{fig: functionals}) are quite similar to the estimates of the classification forest predicting the outcome $\1(\text{AQI} \leq 50)$. Furthermore, the cross-entropy loss evaluated on the held-out measurements equals $0.4671$ and $0.4663$ respectively, showing almost no loss of precision. In general, estimating the simple functionals from the weights provided by DRF comes usually at a small to no loss compared to the classical methods specifically designed for this task. 

In addition to the classical functionals $\tau(\bold{x})$ in the form of an expectation $\mathbb{E}(f(\bold{Y})\mid \bold{X}=\bold{x})$ or a quantile $Q_{\alpha}(f(\bold{Y})\mid \bold{X}=\bold{x})$ for some function $f: \mathbb{R}^{d} \rightarrow \mathbb{R}$, which can also be computed by solving the corresponding one-dimensional problems, additional interesting statistical functionals with intrinsically multivariate nature that are not that simple to estimate directly are accessible by DRF, such as, for example, the conditional correlations $\Cor(Y_i,\, Y_j \mid \bold{X}\myeq\bold{x})$. As an illustration, the estimated correlation of the sulfur dioxide ($\text{SO}_2$) and fine particulate matter (PM2.5) is shown in the right plot of Figure \ref{fig: functionals}. The plot reveals also that the local correlation in many big cities is slightly larger than in its surroundings, which can be explained by the fact that the industrial production directly affects the levels of both pollutants.

\begin{figure}
\includegraphics[width=1\textwidth, height=3.5cm]{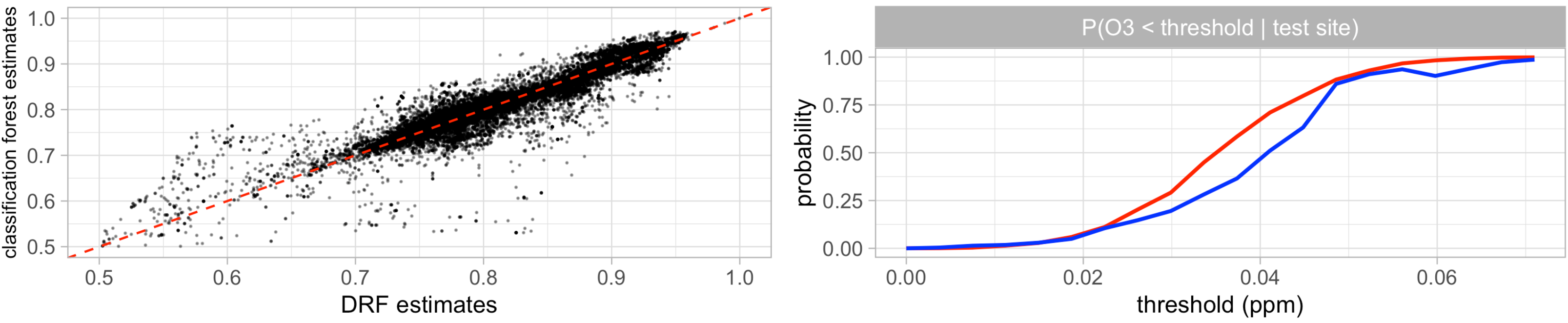}
\caption{Left: Comparison of the CDF estimates obtained by DRF (displayed also in the left plot of Figure \ref{fig: functionals}) and by the classification forest. Right: Example how the CDF estimated by using the classification forest (blue) need not be monotone, whereas the DRF estimates (red) are well-behaved.}
\label{comparison}
\end{figure}

A big advantage of the target-free forest construction of DRF is that all subsequent targets are computed from same the weighting function $w_\bold{x}$ obtained from a single forest fit. First, this is computationally more efficient, since we do not need for every target of interest to fit the method specifically tailored to it. For example, estimating the CDF with classification forests requires fitting one forest for each function value. Secondly and even more importantly, since all statistical functionals are plug-in estimates computed from the same weighting function, the obtained estimates are mathematically well-behaved and mutually compatible. For example, if we estimate $\Cor(Y_i, Y_j \mid \bold{X}\myeq\bold{x})$ by separately estimating the terms $\Cov(Y_i, Y_j \mid \bold{X}\myeq\bold{x})$, $\Var(Y_i \mid \bold{X}\myeq\bold{x})$, and $\Var(Y_j \mid \bold{X}\myeq\bold{x})$, one can not in general guarantee the estimate to be in the range $[-1, 1]$, but this is possible with DRF. Alternatively, the correlation or covariance matrices that are estimated entrywise are guaranteed to be positive semi-definite if one uses DRF. As an additional illustration, Figure \ref{comparison} shows that the estimated (univariate) CDF using the classification forest need not be monotone due to random errors in each predicted value, which can not happen with the DRF estimates.

\subsection{Conditional Copulas and Conditional Independence Testing} 
\label{sec: copulas}

One can use the weighting function not only to estimate certain functionals, but also to obtain more complex objects, such as, for example, the conditional copulas. The well-known Sklar's theorem \citep{sklar1959fonctions} implies that at a point $\textbf{x} \in \mathbb{R}^{p}$, the conditional CDF $\P(\bold{Y} \leq \bold{y} \mid \bold{X}=\bold{x}) = \P(Y_1 \leq y_1, \ldots, Y_d \leq y_d \mid \bold{X}=\bold{x})$ can be represented by a CDF $C_{\bold{x}}$ on $[0,1]^{d}$, the conditional copula at $\bold{x}$, and $d$ conditional marginal CDFs $F_{Y_i\mid\bold{X}\myeq\bold{x}}(y) = \P(Y_{i} \leq y \mid \bold{X}=\bold{x})$ for $1 \leq i \leq d$, as follows:
\begin{equation} \label{eq: copula decomposition}
\P(\bold{Y} \leq \bold{y}\mid  \bold{X}=\bold{x})  =
C_{\bold{x}}\left( F_{Y_1\mid\bold{X}\myeq\bold{x}}(y_1), \ldots, F_{Y_d\mid\bold{X}\myeq\bold{x}}(y_d) \right).
\end{equation}

Copulas capture the dependence of the components $Y_i$ by the joint distribution of the corresponding quantile levels of the marginal distributions: $F_{Y_i\mid\bold{X}\myeq\bold{x}}(Y_i) \in [0,1]$. Decomposing the full multivariate distribution to marginal distributions and the copula is a very useful technique used in many fields such as risk analysis or finance \citep{cherubini2004copula}. Using DRF enables us to estimate copulas conditionally, either by fitting certain parametric model or nonparametrically, directly from the weights.

\begin{figure}
\centering
\includegraphics[width=1\textwidth]{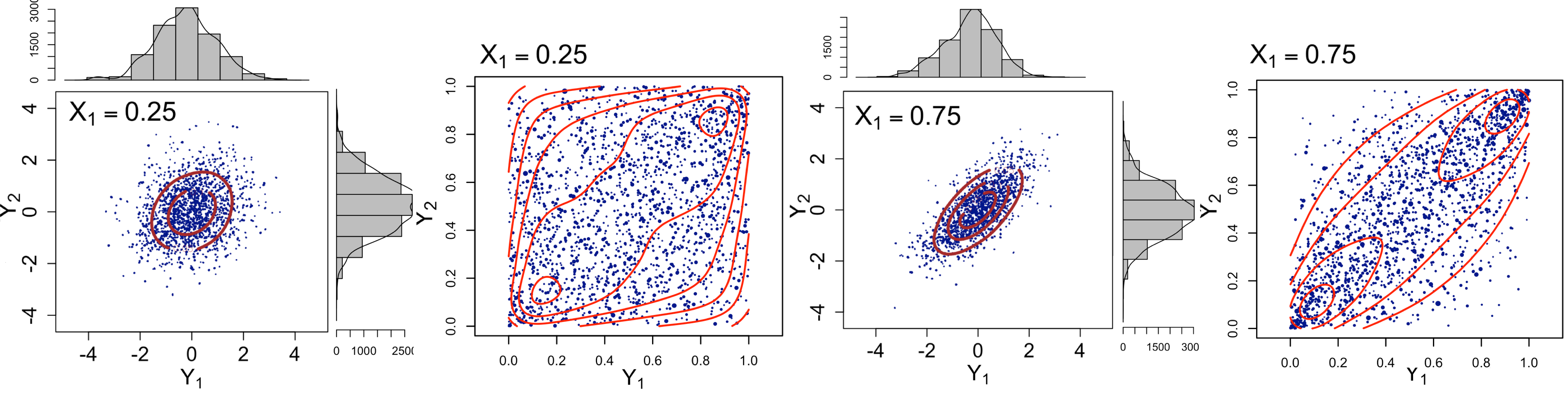}
\caption{Estimated conditional joint distribution of $(Y_1, Y_2)$ and conditional copulas obtained by DRF at different test points $\bold{x}$, where $x_1$ equals $0.25$ and $0.75$ respectively. The red lines are the contours of the true multivariate density function.} 
\label{fig: gaussian_copula}
\end{figure}

To illustrate this, consider an example where the $5$-dimensional $\bold{Y}$ is generated from the equicorrelated Gaussian copula $\bold{Y}= (Y_{1},\ldots,Y_{5}) \mid \bold{X}=\bold{x} \sim C^{\text{Gauss}}_{\rho(\bold{x})}$ conditionally on the covariates $\bold{X}$ with distribution $\mathbf{X}_i \iid U(0,1)^{p}$, where $p=30$ and $n\myeq 5000$. All $Y_i$ have a $N(0,1)$ distribution marginally, but their conditional correlation for $i \neq j$ is given by $\text{Cor}(Y_i, Y_j) = \rho(\bold{x}) = x_1$. Figure \ref{fig: gaussian_copula} shows that DRF estimates the full conditional distribution at different test points $\bold{x}$ quite accurately and thus we can obtain a good nonparametric estimate of the conditional copula as follows. First, for each component $Y_i$, we compute the corresponding marginal CDF estimate $\hat{F}_{Y_i\mid\bold{X}\myeq\bold{x}}(\cdot)$ from the weights. Second, we map each response $\bold{y}_i \to \bold{u}_i \coloneqq \left(\hat{F}_{Y_1\mid\bold{X}\myeq\bold{x}}\left((\bold{y}_i)_1\right), \ldots, \hat{F}_{Y_d\mid\bold{X}\myeq\bold{x}}\left((\bold{y}_i)_d\right)\right)$. The copula estimate is finally obtained from the weighted distribution $\sum_{i=1}^n w_{\bold{x}}(\bold{x}_i) \delta_{\bold{u}_i}$, from which we sample the points in Figure \ref{fig: gaussian_copula} in order to visualize the copula.


If we want to instead estimate the copula parametrically, we need to find the choice of parameters for a given model family which best matches the estimated conditional distribution, e.g.\ by weighted maximum likelihood estimation (MLE). For the above example, the correlation parameter of the Gaussian copula can be estimated by computing the weighted correlation with weights $\{w_\bold{x}(\bold{x}_i)\}_{i=1}^n$. 
The left plot in Figure \ref{fig: conditional-independence} shows the resulting estimates of the conditional correlation $\text{Cor}\left(Y_1, Y_2 \mid \bold{X}=\bold{x}\right)$ obtained from $\text{DRF}_{\text{MMD}}$, which uses the MMD splitting criterion \eqref{eq: MMD splitcrit} described in Section \ref{sec: MMD splitcrit}, and $\text{DRF}_{\text{CART}}$, which aggregates the marginal CART criteria \citep{kocev2007ensembles, segal2011multivariate}. We see that $\text{DRF}_{\text{MMD}}$ is able to detect the distributional heterogeneity and provide good estimates of the conditional correlation. On the other hand, $\text{DRF}_{\text{CART}}$ cannot detect the change in distribution of $\bold{Y}$ caused by $X_1$ that well. The distributional heterogeneity can not only occur in marginal distribution of the responses (a case extensively studied in the literature), but also in their interdependence structure described by the conditional copula $C_{\bold{x}}$, as one can see from decomposition \eqref{eq: copula decomposition}. Since $\text{DRF}_\text{MMD}$ relies on a distributional metric for its splitting criterion, it is capable of detecting any change in distribution \citep{gretton2007kernel}, whereas aggregating marginal CART criteria for $Y_1, \ldots, Y_d$ in $\text{DRF}_\text{CART}$ only captures the changes in the marginal means.

\begin{figure}
\centering
\includegraphics[width=1\textwidth]{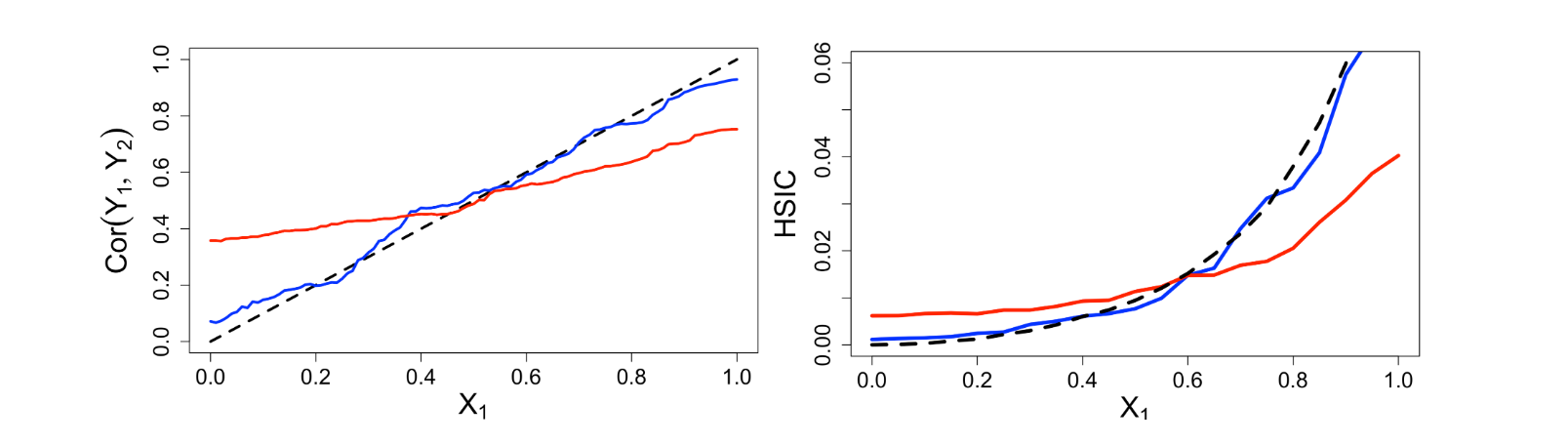}
\caption{Estimated conditional correlation of $Y_1$ and $Y_2$ (left) and estimated conditional dependence quantified by HSIC statistic (right), obtained by $\text{DRF}_{\text{MMD}}$ (blue) and $\text{DRF}_{\text{CART}}$ (red) respectively. For every test point, we set $X_j=0.5, j \neq 1$. Black dashed curve indicates the population values.}
\label{fig: conditional-independence}
\end{figure}

This is further illustrated for a related application of conditional independence testing, where we compute some dependence measure from the obtained weights. For example, we can test the independence $Y_1 \indep Y_2$ conditionally on the event $\bold{X}=\bold{x}$ by using the Hilbert Schmidt Independence Criterion (HSIC) \citep{HSIC}, which measures the difference between the joint distribution and the product of the marginal distributions. The right plot of Figure \ref{fig: conditional-independence} shows that the $\text{DRF}_{\text{MMD}}$ estimates are quite close to the population value of the HSIC, unlike the ones obtained by $\text{DRF}_\text{CART}$.

\subsection{Heterogeneous Regression and Causal Effect Estimation}
\label{sec: causality}

In this and the following section, we illustrate that, in addition to direct estimation of certain targets, DRF can also be a useful tool for complex statistical problems and applications, such as causality.

Suppose we would like to investigate the relationship between some (univariate) quantity of interest $Y$ and certain predictors $\bold{W}$ from heterogeneous data, where the change in distribution of $(\bold{W}, Y)$ can be explained by some other covariates $\bold{X}$. Very often in causality applications, $\bold{W}$ is a (multivariate) treatment variable, $Y$ is the outcome, which is commonly, but not necessarily, binary, and $\bold{X}$ is a set of observed confounding variables for which we need to adjust if we are interested in the causal effect of $\bold{W}$ on $Y$. This is illustrated by the following causal graph:
\begin{center}
\begin{tikzpicture}[
        > = stealth, 
        shorten > = 1pt, 
        auto,
        node distance = 3cm, 
        semithick 
    ]

    \tikzstyle{every state}=[
        draw = black,
        thick,
        fill = white,
        minimum size = 4mm
    ]

    \node[circle, draw=black] (W) at (0,0) {$\bold{W}$};
    \node[circle, draw=black] (X) at (2,1.3)  {$\boldsymbol{X}$};
    \node[circle, draw=black] (Y) at (4, 0) {$Y$};
    \path[->,line width=1.4pt] (W) edge node {} (Y);
    \path[->] (X) edge node {} (W);
    \path[->] (X) edge node {} (Y);
\end{tikzpicture}
\end{center}

The problem of nonparametric confounding adjustment is hard; not only can the marginal distributions of $Y$ and $\bold{W}$ be affected by $\bold{X}$, thus inducing spurious associations due to confounding, but the way how $\bold{W}$ affects $Y$ can itself depend on $\bold{X}$, i.e.\ the treatment effect might be heterogeneous. The total causal effect can be computed by using the adjustment formula \citep{pearl2009causality}:
\begin{align}
\E[Y\mid do(\bold{W}\myeq\bold{w})] &= \int \E[Y\mid do(\bold{W}\myeq\bold{w}), \bold{X}\myeq\bold{x}]\,\P(\bold{X}\myeq\bold{x}\mid do(\bold{W}\myeq\bold{w}))d\bold{x} \nonumber\\
&=\int \E[Y\mid \bold{W}\myeq\bold{w}, \bold{X}\myeq\bold{x}]\,\P(\bold{X}\myeq\bold{x})d\bold{x}. \label{eq: causal effect}
\end{align}
In general, implementing do-calculus for finite samples and potentially non-discrete data might not be straightforward and comes with certain difficulties. In this case, the standard approach would be to estimate the conditional mean $\E[Y\mid \bold{W}\myeq\bold{w}, \bold{X}\myeq\bold{x}]$ nonparametrically by regressing $Y$ on $(\bold{X}, \bold{W})$ with some method of choice and to average out the estimates over different $\bold{x}$ sampled from the observed distribution of $\bold{X}$. Using DRF for this approach is not necessary, but has an advantage that one can easily estimate the full interventional distribution $\P(Y\mid do(\bold{W}\myeq\bold{w}))$ and not only the interventional mean $\E[Y\mid do(\bold{W}\myeq\bold{w})]$.

Another way of computing the causal effect, which allows to add more structure to the problem, is explained in the following: We use DRF to first fit the forest with the multivariate response $(\bold{W}, Y)$ and the predictors $\bold{X}$. In this way, one can for any point of interest $\bold{x}$ obtain the joint distribution of $(\bold{W}, Y)$ conditionally on the event $\bold{X}\myeq\bold{x}$ and then the weights $\{w_\bold{x}(\bold{x}_i)\}_{i=1}^n$ can be used as an input for some regression method for regressing $Y$ on $\bold{W}$ in the second step. This conditional regression fit might be of an independent interest, but it can also be used for estimating the causal effect $\E[Y \mid do(\bold{W}\myeq\bold{w})]$ from \eqref{eq: causal effect}, by averaging the estimates $\E[Y \mid \bold{W}\myeq\bold{w}, \bold{X}\myeq\bold{x}]$ over $\bold{x}$, where $\bold{x}$ is sampled from the empirical observation of $\bold{X}$. 
In this way one can efficiently exploit and incorporate any prior knowledge of the relationship between $\bold{W}$ and $Y$, such as, for example, monotonicity, smoothness or that it satisfies a certain parametric regression model, without imposing any assumptions on the effect of $\bold{X}$ on $(\bold{W}, Y)$. Furthermore, one might be able to better extrapolate to the regions of space where $\P(\bold{W}\myeq\bold{w}, \bold{X}\myeq\bold{x})$ is small, compared to the standard approach which computes $\E[Y \mid \bold{W}\myeq\bold{w}, \bold{X}\myeq\bold{x}]$ directly, by regressing $Y$ on $(\bold{W}, \bold{X})$. Extrapolation is crucial for causal applications, since for computing $\E[Y\mid do(\bold{W}\myeq\bold{w})]$ we are interested in what would happen with $Y$ when our treatment variable $\bold{W}$ is set to be $\bold{w}$, regardless of the value achieved by $\bold{X}$. However, it can easily happen that for this specific combination of $\bold{X}$ and $\bold{W}$ there are very few observed data points, thus making the estimation of the causal effect hard \citep{pearl2009causality}.

\begin{figure}[h]
    \centering
    \includegraphics[width=1\linewidth]{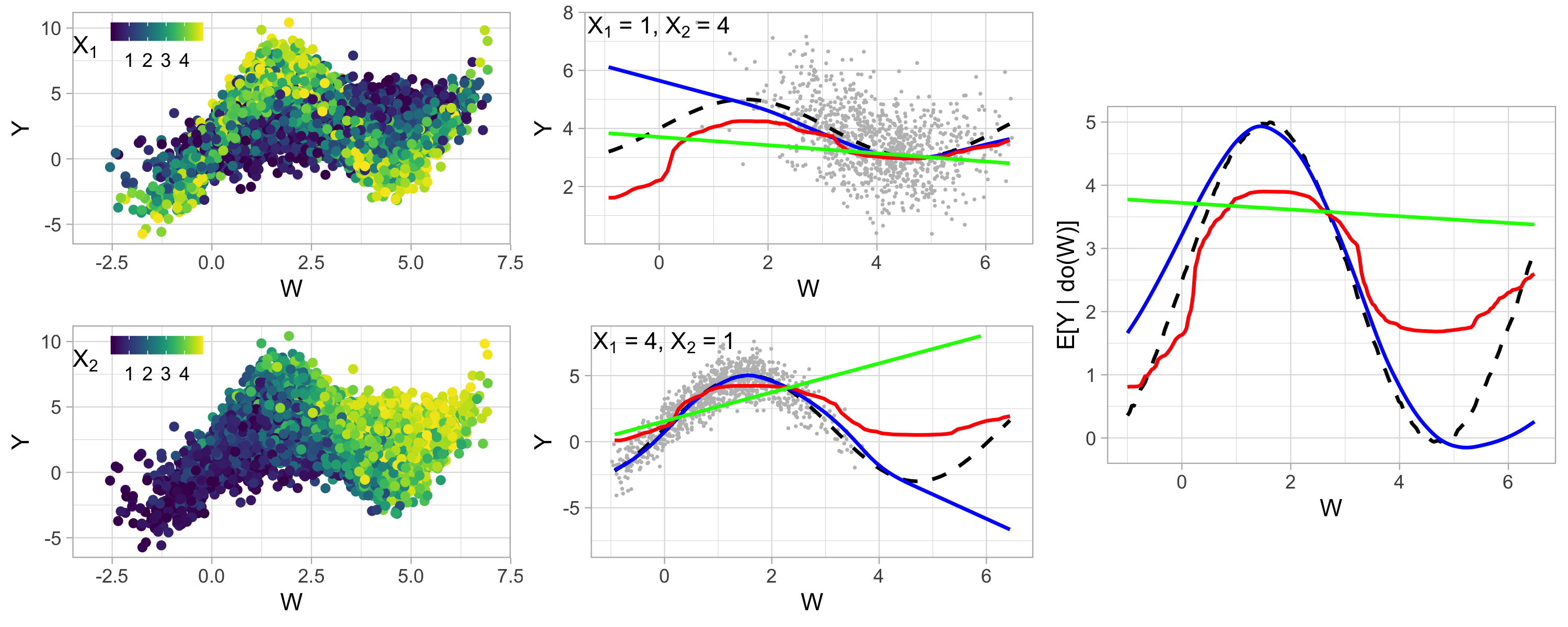}
    \caption{Left: Visualization of heterogeneous synthetic example \eqref{eq: causal example}. Middle: Gray points depict joint distribution of $(W, Y)$ conditionally on $\bold{X}\myeq\bold{x}$, for some choices of $\bold{x}$ indicated in the top left corner. Black curve indicates the true conditional mean $\E[Y\mid W\myeq w, \bold{X}\myeq\bold{x}]$, the blue curve represents the estimate obtained by DRF with response $(W, Y)$ and predictors $\bold{X}$ in combination with smoothing splines regression, the red curve represents the estimate obtained by standard Random Forest, whereas the green line shows the estimate of the Causal Forest \citep{athey2019generalized} which makes the linearity assumption and is thus misspecified. Right: The corresponding estimates for all the methods of the causal effect $\E[Y\mid do(W\myeq w)]$ computed from \eqref{eq: causal effect}. The true causal effect is denoted by a black dashed curve.}
    \label{fig: causal_effect}
\end{figure}

As an illustration, we consider the following synthetic data example, with continuous outcome $Y$, continuous univariate treatment $W$, $n=5000$ and $p=20$:
\begin{equation} \label{eq: causal example}
\bold{X} \sim U(0,5)^p,\quad W \mid \bold{X} \sim N(X_2, 1),\quad Y \mid \bold{X}, W \sim N(X_2 + X_1 \sin(W), 1).
\end{equation}
A visualization of the data can be seen on the left side of Figure \ref{fig: causal_effect}; treatment $W$ affects $Y$ nonlinearly, $X_2$ is a confounding variable that affects the marginal distributions of $Y$ and $W$ and $X_1$ makes the treatment effect heterogeneous. The middle part of Figure \ref{fig: causal_effect} shows the conditional regression fits, i.e.\ the estimates of $\E[Y\mid W\myeq w, \bold{X}\myeq\bold{x}]$ as $w$ varies and $\bold{x}$ is fixed. In general, the conditional regression fit is related to the concept of the conditional average treatment effect (CATE) as it quantifies the effect of $\bold{W}$ on $Y$ for the subpopulation for which $\bold{X}=\bold{x}$.
We see that combination of DRF with response $(Y, W)$ and predictors $\bold{X}$ with the smoothing splines regression of $Y$ on $W$ (blue curve) is more accurate than the estimates obtained by standard Random Forest \citep{breiman2001random} with response $Y$ and predictors $(W, \bold{X})$ (red curve). Furthermore, we see that the former approach can extrapolate better to regions with small number of data points, which enables us to better estimate the causal effect $\E[Y \mid do(W \myeq w)]$ from \eqref{eq: causal effect}, by averaging the corresponding estimates of $\E[Y\mid W\myeq w, \bold{X}\myeq\bold{x}]$ over observed $\bold{x}$, as shown in the right plot of Figure \ref{fig: causal_effect}. 

There exist many successful methods in the literature for estimating the causal effects and the (conditional) average treatment effects for a wide range of settings \citep{abadie2006large, chernozhukov2018double, wager2018estimation, kunzel2019metalearners}. However, some methods are not designed for the most general case and make certain modeling assumptions or are designed specifically for the (very common) case where the treatment variable is univariate or even binary. Due to its versatility, DRF can easily be used when the underlying assumptions of conventional methods are violated, when some additional structure is given in the problem or for the general, nonparametric, settings \citep{imbens2004nonparametric, ernest2015marginal, kennedy2017nonparametric}. Appendix \ref{appendix: additional examples} contains additional comparisons with some existing methods for causal effect estimation.

\subsubsection{Births data}

We further illustrate the applicability of DRF for causality-related problems on the natality data obtained from the Centers for Disease Control and Prevention (CDC) website, where we have information about all recorded births in the USA in 2018. We investigate the relationship between the pregnancy length and the birthweight, an important indicator of baby's health. Not only is this relationship complex, but it also depends on many different factors, such as parents' race, baby's gender, birth multiplicity (single, twins, triplets...) etc. In the left two plots of Figure \ref{birthweight} one can see the estimated joint distribution of birthweight and pregnancy length conditionally on many different covariates, as indicated in the plot. The black curves denote the subsequent regression fit, based on smoothing splines. In addition to the estimate of the mean, indicated by the solid curve, we also include the estimates of the conditional $0.1$- and $0.9$-quantiles, indicated by dashed curves, which is very useful in practice for determining whether a baby is large or small for its gestational age. Notice how DRF assigns less importance to the mother's race when the point of interest is a twin (middle plot), as in this case more weight is given to twin births, regardless of the race of the parents.

\begin{figure*}[h]
\centering
\includegraphics[width=0.9\textwidth]{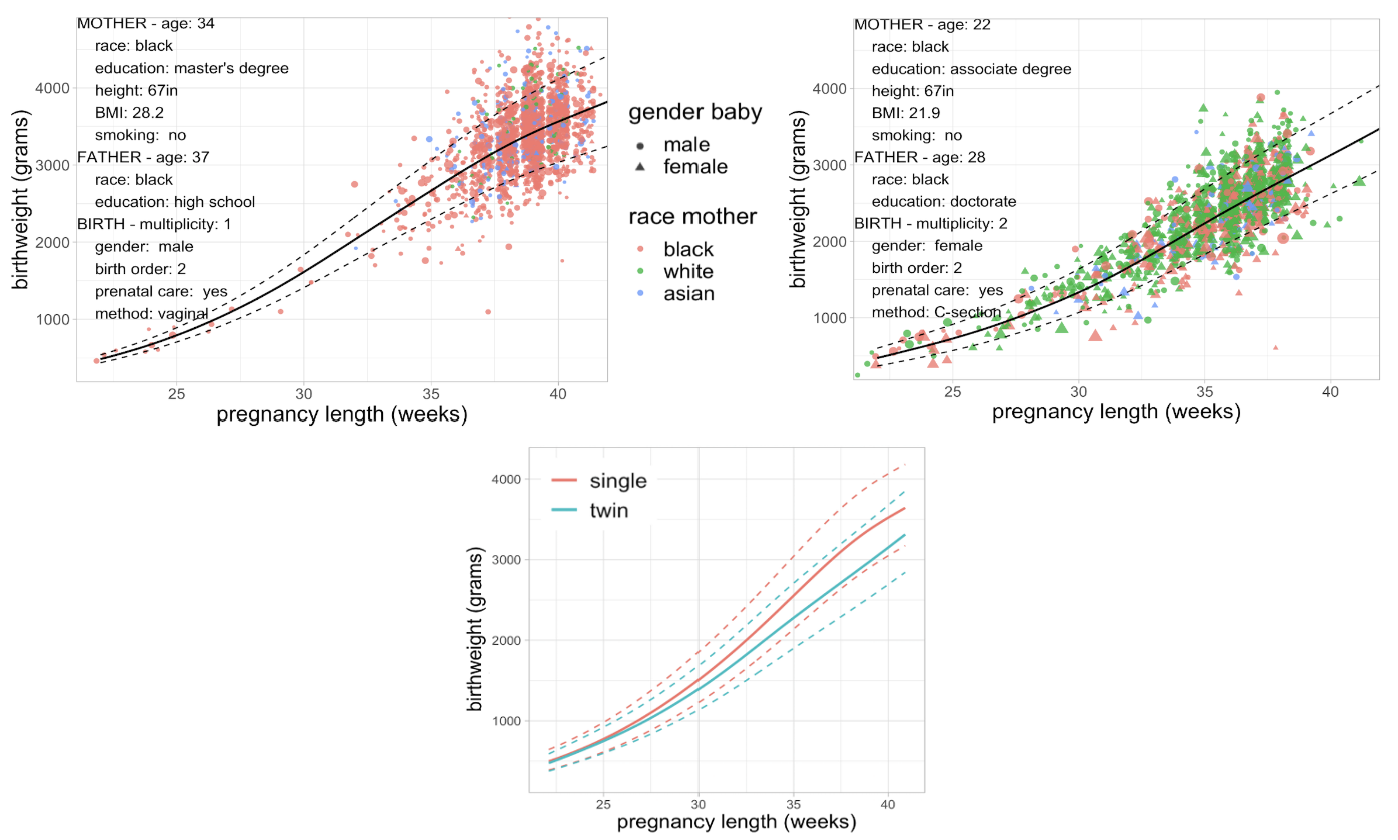}
\caption{Above: estimated relationship of pregnancy length and birthweight, conditionally on the criteria indicated in the upper left corner. Below: estimated interventional effect of twin birth on the birthweight for a fixed pregnancy length. In all plots the solid curves denote the estimated conditional mean and the dashed denote the estimated $0.1$ and $0.9$ quantiles.}
\label{birthweight}
\end{figure*}

Suppose now we would like to understand how a twin birth $T$ causally affects the birthweight $B$, but ignoring the obvious indirect effect due to shorter pregnancy length $L$. For example, sharing of resources between the babies might have some effect on their birthweight. We additionally need to be careful to adjust for other confounding variables $\bold{X}$, such as, for example, the parents' race, which can affect $B, T$ and $L$. We assume that this is represented by the following causal graph:
\begin{center}
\begin{tikzpicture}[
        > = stealth, 
        shorten > = 1pt, 
        auto,
        node distance = 3cm, 
        semithick 
    ]

    \tikzstyle{every state}=[
        draw = black,
        thick,
        fill = white,
        minimum size = 4mm
    ]

    \node[circle, draw=black] (T) at (0,0) {$T$};
    \node[circle, draw=black] (X) at (2,1)  {$\boldsymbol{X}$};
    \node[circle, draw=black] (L) at (2,-1) {$L$};
    \node[circle, draw=black] (B) at (4, 0) {$B$};
    \path[->] (T) edge node {} (L);
    \path[->,line width=1.4pt] (T) edge node {} (B);
    \path[dashed,->] (X) edge node {} (T);
    \path[dashed,->] (X) edge node {} (L);
    \path[dashed,->] (X) edge node {} (B);
    \path[->] (L) edge node {} (B);
\end{tikzpicture}
\end{center}
In order to answer the above question, we investigate the causal quantity $\P(B \mid do(T\myeq t, L\myeq l))$. Even though one cannot make such do-intervention in practice, this quantity describes the total causal effect if the birth multiplicity and the length of the pregnancy could be manipulated and thus for a fixed pregnancy length $l$, we can see the difference in birthweight due to $T$. We compute this quantity as above, by using DRF with subsequent regression fits, which has the advantage of better extrapolating to regions with small probability, such as long twin pregnancies (see the middle plot of Figure \ref{birthweight}). In the right plot of Figure \ref{birthweight} we show the mean and quantiles of the estimated interventional distribution and we see that, as one might expect, a twin birth causes smaller birthweight on average, with the difference increasing with the length of the pregnancy.

\subsection{Fairness}

Being able to compute different causal quantities with DRF could prove useful in a range of applications, including fairness \citep{kusner2017counterfactual}. We investigate the data on approximately $1$ million full-time employees from the 2018 American Community Survey by the US Census Bureau from which we have extracted the salary information and all covariates that might be relevant for salaries. In the bottom left plot of Figure \ref{fig: wage} one can see the distribution of hourly salary of men and women (on the logarithmic scale). The overall salary was scaled with working hours to account for working part-time and for the fact that certain jobs have different working hours. We can see that men are paid more in general, especially for the very high salaries. The difference between the median hourly salaries, a commonly used statistic in practice, amounts $17\%$ for this data set.

We would like to answer whether the observed gender pay gap in the data is indeed unfair, i.e.\ only due to the gender, or whether it can at least in part be explained by some other factors, such as age, job type, number of children, geography, race, attained education level and many others. Hypothetically, it could be, for example, that women have a preference for jobs that are paid less, thus causing the gender pay gap.

In order to answer this question, we assume that the data is obtained from the following causal graph, where $G$ denotes the gender, $W$ the hourly wage and all other factors are denoted by $\bold{X}$:
\begin{center}
\begin{tikzpicture}[
        > = stealth, 
        shorten > = 1pt, 
        auto,
        node distance = 3cm, 
        semithick 
    ]

    \tikzstyle{every state}=[
        draw = black,
        thick,
        fill = white,
        minimum size = 4mm
    ]

    \node[circle, draw=black] (G) at (0,0) {$G$};
    \node[circle, draw=black] (X) at (2,1)  {$\boldsymbol{X}$};
    \node[circle, draw=black] (W) at (4, 0) {$W$};
    \path[->,line width=1.4pt] (G) edge node {} (W);
    \path[->] (G) edge node {} (X);
    \path[->] (X) edge node {} (W);
\end{tikzpicture}
\end{center}
i.e.\ $G$ is a source node and $W$ is a sink node in the graph.
In order to determine the direct effect of the gender on wage that is not mediated by other factors, we would like to compute the distribution of the nested counterfactual $W(\text{male},\, \bold{X}(\text{female}))$, which is interpreted as the women's wage had they been treated in same way as men by their employers for determining the salary, but without changing their propensities for other characteristics, such as the choice of occupation \citep{chernozhukov2013inference}. Therefore, it can be obtained from the observed distribution as follows:
\begin{align}
\P\left(W(\text{male},\, \bold{X}(\text{female}))\right) &= \int \P\left(W( G\myeq\text{male},\, \bold{X}\myeq\bold{x})\right)\P(\bold{X}\myeq\bold{x}\mid G\myeq\text{female})d\bold{x} \nonumber\\
&= \int \P\left(W\mid G\myeq\text{male},\, \bold{X}\myeq\bold{x}\right)\P(\bold{X}\myeq\bold{x}\mid G\myeq\text{female})d\bold{x},
\label{eq: nested_counterfactual}
\end{align}
Put in the language of the fairness literature, it quantifies the unfairness when all variables $\bold{X}$ are assumed to be resolving \citep{kilbertus2017avoiding}, meaning that any difference in salaries directly due to factors $\bold{X}$ is not viewed as gender discrimination. For example, one does not consider unfair if people with low education level get lower salaries, even if the gender distribution in this group is not balanced.

\begin{figure}[h]
\includegraphics[width=1\textwidth]{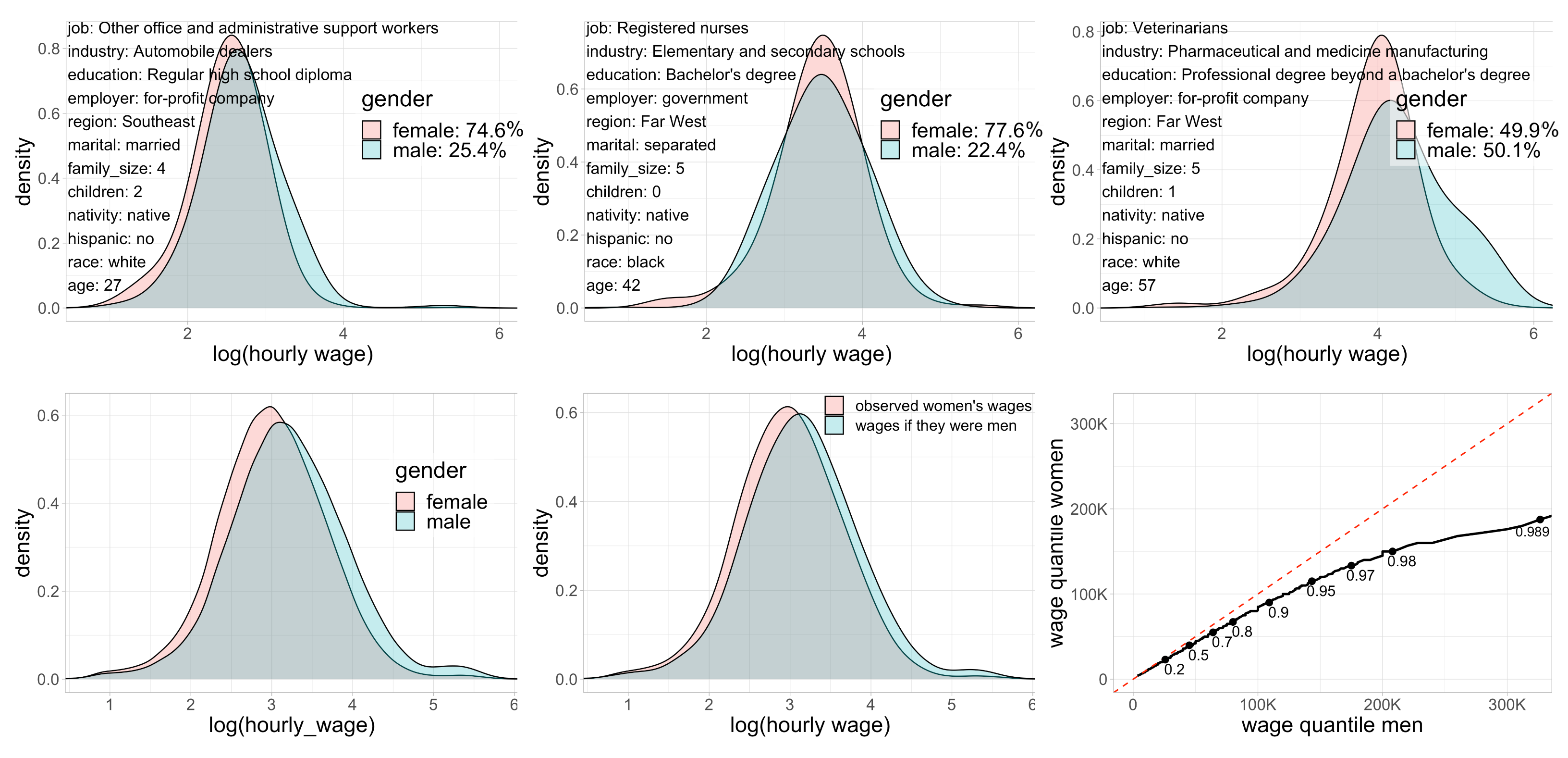}
\caption{Top row: Estimated joint distribution of wage and gender for some fixed values of other covariates $\bold{X}$ indicated in the top left part of each plot. Bottom row: observed overall distribution of salaries (left), estimated counterfactual distribution $\P\left(W(\text{male},\, \bold{X}(\text{female}))\right)$ of women's salaries (middle) and the quantile comparison of the counterfactual distribution of women's salaries and the observed distribution of men's salaries (right).}
\label{fig: wage}
\end{figure}

There are several ways how one can compute the distribution of $W(\text{male},\, \bold{X}(\text{female}))$ from \eqref{eq: nested_counterfactual} with DRF. The most straightforward option is to take $W$ as the response and $(G, \bold{X})$ as predictors in order to compute the conditional distribution $\P\left(W\mid G\myeq\text{male},\, \bold{X}\myeq\bold{x}\right)$. 
However, with this approach it could happen that for predicting $\P\left(W\mid G\myeq\text{male},\, \bold{X}\myeq\bold{x}\right)$ we also assign weight to training data points for which $G\myeq\text{female}$. This happens if in some trees we did not split on variable $G$, which is likely, for example, if $\P(G=\text{male} \mid \bold{X}\myeq\bold{x})$ is low. Using salaries of both genders to estimate the distribution of men's salaries might be an issue if our goal is to objectively compare how women and men are paid.

Another approach is to take $(W, G)$ as a multivariate response and $\bold{X}$ as the predictors for DRF and thus obtain joint distribution of $(W, G)$ conditionally on the event $\bold{X}\myeq\bold{x}$. In this way we can also quantify the gender discrimination of a single individual with characteristics $\bold{x}$ by comparing his/her salary to the corresponding quantile of the salary distribution of people of the opposite gender with the same characteristics $\bold{x}$ \citep{plevcko2019fair}. This is interesting because the distribution of salaries, and thus also the gender discrimination, can be quite different depending on other factors such as the industry sector or job type, as illustrated for a few choices of $\bold{x}$ in the top row of Figure \ref{fig: wage}.

Finally, by averaging the DRF estimates of $\P\left(W\mid\bold{X}\myeq\bold{x},\, G\myeq\text{male}\right)$, conveniently represented via the weights, over different $\bold{x}$ sampled from the distribution $\P(\bold{X}\mid G\myeq\text{female})$, we can compute the distribution of the nested counterfactual $W(\text{male},\, \bold{X}(\text{female}))$ \citep{chernozhukov2013inference}. In the middle panel in the bottom row of Figure \ref{fig: wage} a noticeable difference in the means, also called natural direct effect in the causality literature \citep{pearl2009causality}, is still visible between the observed distribution of women's salaries and the hypothetical distribution of their salaries had they been treated as men, despite adjusting for indirect effects of the gender via covariates $\bold{X}$. By further matching the quantiles of the counterfactual distribution $\P\left(W(\text{male},\, \bold{X}(\text{female}))\right)$ with the corresponding quantiles of the observed distribution of men's salaries in the bottom right panel of Figure \ref{fig: wage}, we can also see that the adjusted gender pay gap even increases for larger salaries. Median hourly wage for women is still $11\%$ lower than the median wage for the hypothetical population of men with exactly the same characteristics $\bold{X}$ as women, indicating that only a minor proportion of the actually observed hourly wage difference of $17\%$ can be explained by other demographic factors. 

\section{Conclusion}
We have shown that DRF is a flexible, general and powerful tool, which exploits the well-known properties of the Random Forest as an adaptive nearest neighbor method via the induced weighting function. Not only does it estimate multivariate conditional distributions well, but it constructs the forest in a model- and target-free way and is thus an easy to use out-of-the-box algorithm for many, potentially complex, learning problems in a wide range of applications, including also causality and fairness, with competitive performance even for problems with existing tailored methods.

\newpage

\appendix

\section{Implementation Details}\label{appendix: impdetails}

Here we present the implementation of the Distributional Random Forests (DRF) in detail. The code is available in the R-package \texttt{drf} and the Python package \texttt{drf}.
The implementation is based on the implementations of the R-packages \texttt{grf} \citep{athey2019generalized} and \texttt{ranger} \citep{wright2015ranger}. The largest difference is in the splitting criterion itself and the provided user interface. Algorithm \ref{pseudocode} gives the pseudocode for the forest construction and computation of the weighting function $w_\bold{x}(\cdot)$.

\begin{algorithm}[htp]
\caption{Pseudocode for Distributional Random Forest} \label{pseudocode}
\begin{algorithmic}[1]
\Procedure{BuildForest}{set of samples $\mathcal{S} = \{(\bold{x}_i, \bold{y}_i)\}_{i=1}^n$, number of trees $N$}
    \For{$i=1,\ldots, N$}
        \State $\mathcal{S}_\text{subsample}$ = \textsc{Subsample}($\mathcal{S}$)
        \State $\mathcal{S}_\text{build}, \mathcal{S}_\text{populate} \gets$ \textsc{SplitSamples}($\mathcal{S}_\text{subsample}$) \Comment{Honesty principle, see above}
        \State $\mathcal{T}_i \gets$  \textsc{CreateNewTree}($\mathcal{S}_\text{build}$) \Comment{Samples $\mathcal{S}_\text{build}$ used for building the tree}
        \State \textsc{BuildTree}(\textsc{RootNode}($\mathcal{T}_i$)) \Comment{Start recursion from the root node}
        \State \textsc{PopulateLeaves}($\mathcal{T}_i, \mathcal{S}_\text{populate})$ \Comment{Samples $\mathcal{S}_\text{populate}$ used for computing $w_\bold{x}(\cdot)$}
    \EndFor
    \State \textbf{return} $\mathcal{F} = \{\mathcal{T}_1, \ldots, \mathcal{T}_N\}$
\EndProcedure

\item[]

\Procedure{BuildTree}{current node $\mathcal{N}$} \Comment{Recursively constructs the trees} 
    \If{\textsc{StoppingCriterion}($\mathcal{N}$)} \Comment{E.g.\ if only a few samples left}
        \State \textbf{return}
    \EndIf
    \State $\mathcal{S} \gets$ \textsc{GetSamples}($\mathcal{N}$)
    \State $\mathcal{I} \gets$ \textsc{GetSplitVariables}() \Comment{Random set of candidate variables}
    \State $\mathcal{C}$ $\gets$  \textsc{InitializeSplits}() \Comment{Here we store info about candidate splits}
    \For{idx $\in \mathcal{I}$, level $l$} \Comment{$l$ iterates over all values of variable $X_{\text{idx}}$}
        \State $\mathcal{S}_L, \mathcal{S}_R \gets$ \textsc{ChildSamples}($\mathcal{S}, \text{idx}, l$) \Comment{Split samples based on $(\bold{x}_i)_\text{idx} \leq l$} 
        \State test statistic $v$ = \textsc{SplittingCriterion($\mathcal{S}_L, \mathcal{S}_R$)} \Comment{Two-sample test of choice} 
        \State \textsc{AddNewSplitCandidate}($\mathcal{C}$, $v$, $\mathcal{S}_L, \mathcal{S}_R, \text{idx}, l$) 
    \EndFor
    \State $\mathcal{S}_L, \mathcal{S}_R, \text{idx}, l \gets$ \textsc{FindBestSplit}($\mathcal{C}$) 
    \State $\mathcal{N}_L \gets$  \textsc{CreateNode}($\mathcal{S}_L$) \Comment{Create new node with set of samples $\mathcal{S}_L$} 
    \State $\mathcal{N}_R \gets$ \textsc{CreateNode}($\mathcal{S}_R$) \Comment{Create new node with set of samples $\mathcal{S}_R$}
    \State \textsc{BuildTree}($\mathcal{N}_L$),\quad \textsc{BuildTree}($\mathcal{N}_R$) \Comment{Proceed building recursively}
    \State \textsc{Children}($\mathcal{N}) \gets \mathcal{N}_L, \mathcal{N}_R$ 
    \State \textsc{Split}($\mathcal{N}) \gets \text{idx}, l$ \Comment{Store the split} 
    \State \textbf{return}
\EndProcedure

\item[]

\Procedure{GetWeights}{forest $\mathcal{F}$, test point $\bold{x}$} \Comment{Computes the weighting function}
    \State vector of weights $w$ = \textsc{Zeros}($n$) \Comment{$n$ is the training set size}
    \For{$i = 1,\ldots,|\mathcal{F}|$}
        \State $\mathcal{L}$ = \textsc{GetLeafSamples}($\mathcal{T}_i, \bold{x}$) \Comment{indices of training samples in same leaf as $\bold{x}$}
        \For{$\text{idx} \in \mathcal{L}$}
            \State $w[\text{idx}] = w[\text{idx}]$ + $1/(|\mathcal{L}|\cdot|\mathcal{F}|)$
        \EndFor
    \EndFor
    \State \textbf{return} $w$
\EndProcedure
\end{algorithmic}
\end{algorithm}


\begin{itemize}
\item Every tree is constructed based on a random subset of size $s$ (taken to be $50\%$ of the size of the training set by default) of the training data set, similar to \citet{wager2018estimation}. This differs from the original Random Forest algorithm \citep{breiman2001random}, where the bootstrap subsampling is done by drawing from the original sample with replacement.

\item The principle of honesty \citep{biau2012analysis, denil2014narrowing, wager2018estimation} is used for building the trees (line 4), whereby for each tree one first performs the splitting based on one random set of data points $\mathcal{S}_\text{build}$, and then populates the leaves with a disjoint random set $\mathcal{S}_\text{populate}$ of data points for determining the weighting function $w_\bold{x}(\cdot)$. This prevents overfitting, since we do not assign weight to the data points which we used to built the tree.

\item We borrow the method for selecting the number of candidate splitting variables from the \texttt{grf} package \citep{athey2019generalized}. This number is randomly generated as $\min (\max (\text{Poisson}(\text{mtry}), 1) , p)$, where mtry is a tuning parameter. This differs from the original Random Forests algorithm, where the number of splitting candidates is fixed to be mtry.

\item The number of trees built is $N=2000$ by default.

\item The factor variables in both the responses and the predictors are encoded by using the one-hot encoding, where we add an additional indicator variable for each level $l$ of some factor variable $X_k$. This implies that in the building step, if we split on this indicator variable, we divide the current set of data points in the sets where $X_k = l$ and $X_k \neq l$. This works well if the number of levels is not too big, since otherwise one makes very uneven splits and the dimensionality of the problem increases significantly. Handling of categorical problems is a general challenge for the forest based methods and is an area of active research \citep{johannemann2019sufficient}. We will leave improving on this approach for the future development.

\item We try to enforce splits where each child has at least a fixed percentage (chosen to be $10\%$ as the default value) of the current number of data points. In this way we achieve balanced splits and reduce the computational time. However, we cannot enforce this if we are trying to split on the variable $X_i$ with only a few unique values, e.g.\ indicator variable for a level of some factor variable.

\item All components of the response $Y$ are scaled for the building step (but not when we populate the leaves). This ensures that each component of the response contributes equally to the kernel values, and consequently to the MMD two-sample test statistic. Plain usage of the MMD two-sample test would scale the components of $Y$ at each node. However, this approach favors always splitting on the same variables, even though their effect will diminish significantly after having split several times.

\item By default, in step 20 of the Algorithm \ref{pseudocode}, we use the MMD-based splitting criterion given by
$$\frac{1}{B}\sum_{k=1}^B \frac{|\mathcal{S}_L||\mathcal{S}_L|}{(|\mathcal{S}_L|+|\mathcal{S}_R|)^2} \left\lvert \frac{1}{|\mathcal{S}_L|}\sum_{(\bold{x}_i, \bold{y}_i) \in \mathcal{S}_L} \varphi_{\boldsymbol{\omega}_k}(\bold{y}_i) -  \frac{1}{|\mathcal{S}_R|}\sum_{(\bold{x}_i, \bold{y}_i) \in \mathcal{S}_R} \varphi_{\boldsymbol{\omega}_k}(\bold{y}_i) \right\rvert^2.$$
The Gaussian kernel $k(\bold{x}, \bold{y}) = \tfrac{1}{(\sqrt{2\pi}\sigma)^d}e^{\tfrac{-\norm*{x-y}^2_2}{2\sigma^2}}$ is used as the default choice, with the bandwidth $\sigma$ chosen as the median pairwise distance between all training responses $\{\bold{y}_i\}_{i=1}^n$, commonly referred to as the 'median heuristic' \citep{gretton2012optimal}. However the algorithm can be used with any choice of kernel, or in fact with any two-sample test.


\item The number $B$ of random Fourier features is fixed and taken to be $20$ by default. The performance of the trees empirically shows stability for large range of $B$. Smaller values of $B$ help making the trees more independent which could improve the performance. One could even use an adaptive strategy of choosing $B$, possibly increasing $B$ as the depth of the tree increases, but we decided to keep $B$ fixed for simplicity.

\item We compute variable importance similarly as for the original Random Forest algorithm \citep{breiman2001random, wright2015ranger}, by sequentially permuting each variable and investigating the decrease in the performance. However, since we target the full conditional distribution of the multivariate response, as our performance measure we use for every test point $(\bold{x}, \bold{y})$ the MMD distance between the estimated joint distribution $\hat{\P}(\bold{Y}\mid \bold{X}\myeq\bold{x})$, described by the DRF weights, and the point mass $\delta_{\bold{y}}$.

\end{itemize}

\section{Derivations and Proofs}
\label{appendix: proofs}


In this section we present proofs and further details for the results in Sections \ref{sec: MMD splitcrit} and \ref{sec: theory}, in the order in which they appear. Sections \ref{MMDintegral} and \ref{CART_equivalence} derive the splitting criterion presented in Equation \eqref{eq: MMD splitcrit} in the main text, with Section \ref{CART_equivalence} showing that the CART criterion can be written in an analogous way. Section \ref{consistencyproofsec} provides background and proves to the statements in Section \ref{sec: theory}.

\subsection{Expressing MMD test statistic as an integral in the feature space} \label{MMDintegral}
The biased MMD two-sample statistic is given as
\begin{align*}
&\mathcal{D}_{\text{MMD}}\left(\{\bold{u}_i\}_{i=1}^m, \{\bold{v}_i\}_{i=1}^n\right) \\
&=\frac{1}{m^2}\sum_{i,j} k(\bold{u}_i, \bold{u}_j) + \frac{1}{n^2}\sum_{i,j} k(\bold{v}_i, \bold{v}_j)
 - \frac{2}{mn}\sum_{i}\sum_{j} k(\bold{u}_i, \bold{v}_j) \\
& = \frac{1}{m^2}\sum_{i,j} k(\bold{u}_i, \bold{u}_j) + \frac{1}{n^2}\sum_{i,j} k(\bold{v}_i, \bold{v}_j)
 - \frac{1}{mn}\sum_{i}\sum_{j} k(\bold{u}_i, \bold{v}_j) - \frac{1}{mn}\sum_{i}\sum_{j} k(\bold{v}_j, \bold{u}_i).    
\end{align*}

Assume that the kernel $k$ is bounded and shift-invariant, then by Bochner's theorem there exist a measure $\nu$ such that $k$ can be written as
$k(\bold{x}, \bold{y}) = \int_{\R^d} e^{i\boldsymbol{\omega}^T(\bold{x}-\bold{y})}d\nu(\boldsymbol{\omega}).$ 

Let us write $\varphi_{\boldsymbol{\omega}}^U = \frac{1}{m}\sum_i e^{i\boldsymbol{\omega}^T\bold{u}_i}$ and $\varphi_{\boldsymbol{\omega}}^V = \frac{1}{n}\sum_i e^{i\boldsymbol{\omega}^T\bold{v}_i}$. We can now write $\mathcal{D}_\text{MMD}$ as
\begin{align*}
\mathcal{D}_{\text{MMD}}\left(\{\bold{u}_i\}_{i=1}^m, \{\bold{v}_i\}_{i=1}^n\right)
&= \int_{\R^d} \left(\varphi_{\boldsymbol{\omega}}^U \overline{\varphi_{\boldsymbol{\omega}}^U} + \varphi_{\boldsymbol{\omega}}^V \overline{\varphi_{\boldsymbol{\omega}}^V} - \varphi_{\boldsymbol{\omega}}^U \overline{\varphi_{\boldsymbol{\omega}}^V} - \varphi_{\boldsymbol{\omega}}^V \overline{\varphi_{\boldsymbol{\omega}}^U}\right)d\nu(\boldsymbol{\omega})\\
&= \int_{\R^d} \left\lvert\varphi_{\boldsymbol{\omega}}^U - \varphi_{\boldsymbol{\omega}}^V\right\rvert^2d\nu(\boldsymbol{\omega}) \\
&= \int_{\R^d}\left\lvert \frac{1}{m}\sum_{i=1}^m \varphi_{\boldsymbol{\omega}}(\bold{u}_i) -  \frac{1}{n}\sum_{i=1}^n \varphi_{\boldsymbol{\omega}}(\bold{v}_i) \right\rvert^2 d\nu(\boldsymbol{\omega}), 
\end{align*}
where $\varphi_{\boldsymbol{\omega}}(\bold{y}) = e^{i\boldsymbol{\omega}^T\bold{y}} \in \mathbb{C}$ are the corresponding Fourier features, which is what we wanted to show.

\subsection{Approximate kernel and its MMD}
When the kernel $k$ is bounded and shift invariant, we have seen that it can be written as $k(\bold{x}, \bold{y}) = \int_{\R^d} e^{i\boldsymbol{\omega}^T(\bold{x}-\bold{y})}d\nu(\boldsymbol{\omega}).$ This integral can be approximated by sampling from $\nu$: Let $\boldsymbol{\omega}_1, \ldots, \boldsymbol{\omega}_B \sim \nu$ be a random sample from the measure $\nu$. Then we can write 
$$k(\bold{x}, \bold{y}) = \int_{\R^d} e^{i\boldsymbol{\omega}^T(\bold{x}-\bold{y})}d\nu(\boldsymbol{\omega}) \approx \frac{1}{B}\sum_{b=1}^B e^{i\boldsymbol{\omega_b}^T(\bold{x}-\bold{y})} = \frac{1}{B} \langle \bold{\widetilde{\varphi}}(\bold{u}), \bold{\widetilde{\varphi}}(\bold{v})\rangle_{\mathbb{C}^B} \vcentcolon= \tilde{k}(\bold{u}, \bold{v}),$$
where $\bold{\widetilde{\varphi}}(\bold{u}) = (\varphi_{\boldsymbol{\omega}_1}(\bold{u}), 
\ldots, \varphi_{\boldsymbol{\omega}_B}(\bold{u}))^T$ is a random complex vector consisting of the Fourier features $\varphi_{\boldsymbol{\omega}}(\bold{u}) = e^{i\boldsymbol{\omega}^T\bold{u}} \in \mathbb{C}.$
The kernel $\tilde{k}$ is analogous to the kernel $k$, but where the measure $\nu$ is replaced by the empirical measure $\tilde{\nu} = \tfrac{1}{B}\sum_{b=1}^B \delta_{\boldsymbol{\omega_b}}$: 
$$\tilde{k}(\bold{x}, \bold{y}) = \int_{\R^d} e^{i\boldsymbol{\omega}^T(\bold{x}-\bold{y})}d\tilde{\nu}(\boldsymbol{\omega}).$$

Analogously as in the section \ref{MMDintegral}, we can now write the MMD for the kernel $\tilde{k}$ as:
\begin{align*}
    \mathcal{D}_{\text{MMD}(\tilde{k})} &= \int_{\R^d}\left\lvert \frac{1}{m}\sum_{i=1}^m \varphi_{\boldsymbol{\omega}}(\bold{u}_i) -  \frac{1}{n}\sum_{i=1}^n \varphi_{\boldsymbol{\omega}}(\bold{v}_i) \right\rvert^2 d\tilde{\nu}(\boldsymbol{\omega}) \\
    &= \frac{1}{B}\sum_{b=1}^B \left\lvert \frac{1}{m}\sum_{i=1}^m \varphi_{\boldsymbol{\omega}_b}(\bold{u}_i) -  \frac{1}{n}\sum_{i=1}^n \varphi_{\boldsymbol{\omega}_b}(\bold{v}_i) \right\rvert^2,
\end{align*}
which can also additionally be interpreted as the approximation of $\mathcal{D}_{\text{MMD}}$. Therefore, our splitting criterion is obtained as the MMD of the random approximate kernel $\tilde{k}$:
$$\frac{1}{B}\sum_{b=1}^B \frac{n_Ln_R}{n_P^2} \left\lvert \frac{1}{n_L}\sum_{\bold{x}_i \in C_L} \varphi_{\boldsymbol{\omega}_b}(\bold{y}_i) -  \frac{1}{n_R}\sum_{\bold{x}_i \in C_R} \varphi_{\boldsymbol{\omega}_b}(\bold{y}_i) \right\rvert^2.$$
The scaling factor $\frac{n_Ln_R}{n_P^2}$ occurs naturally and penalizes the increased variance of the sample MMD statistic when $n_L$ or $n_R$ are small: it appears when we rewrite the CART criterion in the related form, see section \ref{CART_equivalence}.

This representation of the MMD is the key why we use the approximate kernel $\tilde{k}$ instead of $k$. This splitting criterion can be computed in $\O(Bn_P)$ complexity, by updating the sums $\sum_{\bold{x}_i \in C_L} \varphi_{\boldsymbol{\omega}_k}(\bold{y}_i)$ and $\sum_{\bold{x}_i \in C_R} \varphi_{\boldsymbol{\omega}_k}(\bold{y}_i)$ in $\O(1)$ computations, whereas this is not possible for $\mathcal{D}_{\text{MMD}}$.

\subsection{CART criterion rewritten} \label{CART_equivalence}

Standard CART criterion used in Random Forests \citep{breiman2001random} is the following: we repeatedly choose to split the parent node $P$ of size $n_P$ in two children $C_L$ and $C_R$, of sizes $n_L$ and $n_R$ respectively, such that the expression
\begin{equation} \label{CART}
\frac{1}{n_P}\left(\sum_{i \in C_L} (Y_i - \overline{Y}_L)^2 + \sum_{i \in C_R} (Y_i - \overline{Y}_R)^2 \right)
\end{equation}
is minimized, where $\overline{Y}_L = \tfrac{1}{n_L}\sum_{i\in C_L} Y_i$ and $Y_R$ is defined similarly.

We now have $\overline{Y} = \tfrac{1}{n_P}\sum_{i \in P} Y_i = \frac{n_L}{n_P}\overline{Y}_L + \frac{n_R}{n_P}\overline{Y}_R$, which gives $\overline{Y} - \overline{Y}_L = \tfrac{n_R}{n_P}(\overline{Y}_R - \overline{Y}_L)$, so we can write 
\begin{gather*}
\sum_{i \in C_L} (Y_i - \overline{Y}_L)^2 = \sum_{i \in C_L} (Y_i - \overline{Y} + \overline{Y} - \overline{Y}_L)^2
= \sum_{i \in C_L} (Y_i - \overline{Y} + \frac{n_R}{n_P}(\overline{Y}_R - \overline{Y}_L))^2 \\ = \sum_{i \in C_L} (Y_i - \overline{Y})^2 + 2\frac{n_R}{n_P}(\overline{Y}_R - \overline{Y}_L)\sum_{i \in C_L}(Y_i - \overline{Y}) + \frac{n_Ln_R^2}{n_P^2}(\overline{Y}_R - \overline{Y}_L)^2\\
= \sum_{i \in C_L} (Y_i - \overline{Y})^2 + 2\frac{n_R}{n_P}(\overline{Y}_R - \overline{Y}_L)\cdot n_L(\overline{Y}_L - \overline{Y}) + \frac{n_Ln_R^2}{n_P^2}(\overline{Y}_R - \overline{Y}_L)^2 \\
= \sum_{i \in C_L} (Y_i - \overline{Y})^2 + 2\frac{n_Rn_L}{n_P}(\overline{Y}_R - \overline{Y}_L)\cdot \frac{n_R}{n_P}(\overline{Y}_L - \overline{Y}_R) + \frac{n_Ln_R^2}{n_P^2}(\overline{Y}_R - \overline{Y}_L)^2 \\
= \sum_{i \in C_L} (Y_i - \overline{Y})^2 - \frac{n_Ln_R^2}{n_P^2}(\overline{Y}_R - \overline{Y}_L)^2.
\end{gather*}

Similarly we obtain
$$\sum_{i \in C_R} (Y_i - \overline{Y}_R)^2 = \sum_{i \in C_R} (Y_i - \overline{Y} + \overline{Y} - \overline{Y}_R)^2
= \sum_{i \in C_R} (Y_i - \overline{Y})^2 - \frac{n_L^2n_R}{n_P^2}(\overline{Y}_R - \overline{Y}_L)^2,$$
which gives us that the CART criterion \eqref{CART} can be written as 
\begin{gather*}
\frac{1}{n_P}\left(\sum_{i \in C_L} (Y_i - \overline{Y})^2 - \frac{n_Ln_R^2}{n_P^2}(\overline{Y}_R - \overline{Y}_L)^2 + \sum_{i \in C_R} (Y_i - \overline{Y})^2 - \frac{n_L^2n_R}{n_P^2}(\overline{Y}_R - \overline{Y}_L)^2 \right)\\
= \frac{1}{n_P}\sum_{i\in P} (Y_i - \overline{Y})^2 - \frac{n_Ln_R}{n_P^2}(\overline{Y}_R - \overline{Y}_L)^2, 
\end{gather*}
since $n_L + n_R = n_P$. Since the first term depends only on the parent node and not on the chosen split, we conclude that minimizing the CART criterion \eqref{CART} is equivalent to maximizing the following expression
\begin{equation}
\frac{n_Ln_R}{n_P^2}(\overline{Y}_L - \overline{Y}_R)^2.
\end{equation}

This equivalent criterion can be interpreted as comparing the difference in the means of the resulting child nodes, i.e. we will choose the split such that the means in the child nodes are as heterogeneous as possible. The scaling factor $\frac{n_Ln_R}{n_P^2}$ appears naturally, penalizing uneven splits due to the increased variance of $\overline{Y}_L$ or $\overline{Y}_R$.



\subsection{Proofs for Section \ref{sec: theory}}\label{consistencyproofsec}

\paragraph{Preliminaries.}
We first set notation and define basic probabilistic concepts on the separable Hilbert space $(\mathcal{H}, \langle, \cdot, \rangle_{\mathcal{H}})$. We thereby mostly refer to \citet{hilbertspacebook} and \citet{pisier_2016}. The initial results derived here parallel some of the results derived in \citet{OurapproachtoCME}, but where derived independently. Let $\left(\Omega, \mathcal{A}, \mathbb{P}\right)$ be the underlying probability space.
Let $\left(\mathcal{H}, \langle\cdot,\cdot\rangle_{\mathcal{H}}\right)$ be the Hilbert space induced by the kernel $k$ and $\mu: \mathcal{M}_{b}(\R^d) \to \mathcal{H}$
be the embedding function of, $\mu(P) \in \mathcal{H}$ for all bounded signed Borel measures $P$ on $\R^d$. Throughout we assume that $k$ is \emph{bounded} and \emph{continuous} in its two arguments. Boundedness of $k$ ensures that $\mu$ is indeed defined on all of $\mathcal{M}_{b}(\R^d)$, while continuity of $k: \R^d \times \R^d \to \R$ ensures $\mathcal{H}$ is \emph{separable}. Thus measurability issues can be avoided, in particular, a map $\xi: (\Omega, \mathcal{A}) \to (\mathcal{H},\mathcal{B}(\mathcal{H}))$ is measurable iff $\langle \xi,f\rangle_{\mathcal{H}}$ is measurable for all $f \in \mathcal{H}$. Moreover, a quick check reveals that $\mu(P)$ is linear on $\mathcal{M}_{b}(\R^d)$. If $\E[\| \xi \|_{\mathcal{H}}] < \infty$, we define
\[
\E[\xi]:= \int_{\mathcal{H}} \xi d \P \in \mathcal{H},
\]
where the integral is meant in a Bochner sense. Separability and $\E[\| \xi \|_{\mathcal{H}}] < \infty$ mean this integral is well-defined and moreover
\[
F(\E[\xi])=\E[F(\xi)],
\]
for any continuous linear function $F:\mathcal{H} \to \R$.\footnote{Here and later $F(\xi)$ is meant to mean $F(\xi(\omega))$ for all $\omega \in \Omega$. } In particular, $\E[\langle \xi, f  \rangle_{\mathcal{H}}]= \langle \E[\xi], f \rangle_{\mathcal{H}}$ for all $f \in \mathcal{H}$. Define moreover for $q \geq 1$, and $\xi, \xi_1,\xi_2 \in \mathcal{L}^2(\Omega, \mathcal{A}, \mathcal{H})$,
\begin{align*}
    \mathcal{L}^q(\Omega, \mathcal{A}, \mathcal{H}) &= \{\xi:(\Omega, \mathcal{F}) \to (\mathcal{H},\mathcal{B}(\mathcal{H})) \text{ measurable, with } \E[\|\xi\|^q] < \infty]  \}\\
    \mathbb{L}^q(\Omega, \mathcal{A}, \mathcal{H}) &= \text{Set of equivalence classes in $\mathcal{L}^q(\Omega, \mathcal{A}, \mathcal{H})$}\\
    \Var(\xi)&:=\E[\|\xi - \E[\xi]\|^2]=\E[\|\xi\|^2] - \|\E[\xi]\|^2, \ \ \xi \in \mathcal{L}^2(\Omega, \mathcal{A}, \mathcal{H})\\
    \Cov(\xi_1,\xi_2)&= \E[ \langle \xi_1 - \E[\xi_1],\xi_2 - \E[\xi_2] \rangle_{\mathcal{H}}]=\E[\langle \xi_1 ,\xi_2  \rangle_{\mathcal{H}} ] - \langle \E[\xi_1], \E[\xi_2] \rangle_{\mathcal{H}}.
\end{align*}
It is well-known, that $(\mathbb{L}^q, \| \cdot \|_{\mathbb{L}^q(\mathcal{H})})$ is a Banach space, with 
\[
\| \xi \|_{\mathbb{L}^q(\mathcal{H})}= \E[\| \xi \|_{\mathcal{H}}^q]^{1/q}.
\]
We can then also define \emph{conditional} expectation. For a sub $\sigma-$ algebra $\mathcal{F} \subset \mathcal{A}$, $\xi \in \mathcal{L}^1(\Omega, \mathcal{A}, \mathcal{H}) $, $\E[\xi\mid \mathcal{F}]$ is the (a.s.) unique element such that
\begin{itemize}
    \item[(C1)]  $\E[\xi\mid \mathcal{F}] : (\Omega, \mathcal{F}) \to (\mathcal{H},\mathcal{B}(\mathcal{H}))$ is measurable and $\E[\xi\mid \mathcal{F}]\in   \mathbb{L}^1(\Omega, \mathcal{F}, \mathcal{H})$,
    \item[(C2)] $\E[ \xi \1_{F} ] = \E[ \E[\xi\mid \mathcal{F}]\1_{F} ]$ for all $F \in \mathcal{F}$.
\end{itemize} 
See e.g.\ \citet{UMEGAKI197049} or \citet[Chapter 1]{pisier_2016}.
(C2) in particular means that $\E[ \E[\xi\mid \mathcal{F}] ] = \E[ \E[\xi\mid \mathcal{F}]\1_{\Omega} ] = \E[\xi] $, since $\Omega \in \F$ for any $\sigma$-algebra. It can also be shown that $F(\E[\xi\mid \mathcal{F}])=\E[F(\xi)\mid\mathcal{F}]$ for all linear and continuous $F: \mathcal{H} \to \R$ and that $\|\E[\xi\mid \mathcal{F}]  \|_{\mathcal{H}}\leq \E[ \|\xi \|_{\mathcal{H}} \mid \mathcal{F}]$ \citep[Chapter 1]{pisier_2016}. Moreover, 
\begin{itemize}
    \item[(C3)] For $\xi \in \mathbb{L}^2(\Omega, \mathcal{A}, \mathcal{H})$, $\E[\xi \mid \mathcal{F}]$ is the orthogonal projection into $\mathbb{L}^2(\Omega, \mathcal{F}, \mathcal{H})$,
\end{itemize}
again we refer to \citep{UMEGAKI197049}. We note that, as with conditional expectation on $\R$, $\E[\xi\mid \mathcal{F}]$ is only defined uniquely a.s. As such all (in)equalitie statements hold only a.s. However, we will often not explicitly write this going forward.


We then define $\E[\xi\mid\mathbf{X}]=\E[\xi\mid \sigma(\mathbf{X})]$. The following Proposition shows that this notion is well-defined and some further properties of Hilbert space-valued conditional expectation, in addition to (C1) -- (C3):

\begin{proposition}\label{condexp1}
Let $\left(\mathcal{H}_1, \langle\cdot,\cdot\rangle_1\right)$, $\left(\mathcal{H}_2, \langle\cdot,\cdot\rangle_2\right)$ be two separable Hilbert spaces, $\mathbf{X}, \mathbf{X}_1, \mathbf{X}_2  \in \mathcal{L}^1(\Omega, \mathcal{A}, \mathcal{H}_1) $ and $\xi_1, \xi_2, \xi \in \mathcal{L}^1(\Omega, \mathcal{A}, \mathcal{H}_2)$.\footnote{We again note that all equalities technically only hold a.s.}
\begin{itemize}
    \item[(C4)] There exists a measurable function $h: (\mathcal{H}_1, \mathcal{B}(\mathcal{H}_1)) \to (\mathcal{H}_2, \mathcal{B}(\mathcal{H}_2)) $, such that $\E[\xi\mid\sigma(\mathbf{X})]=h(\mathbf{X})=\E[\xi\mid\mathbf{X}]$,
    \item[(C5)] If $\xi_1 \in \mathcal{L}^2(\Omega, \mathcal{A}, \mathcal{H}_1)$, $\xi_2 \in \mathcal{L}^2(\Omega, \sigma(\mathbf{X}), \mathcal{H}_1)$, then $\E[ \langle \xi_1, \xi_2 \rangle_{\mathcal{H}_1} \mid \mathbf{X} ] =  \langle \E[\xi_1 \mid \mathbf{X} ], \xi_2 \rangle_{\mathcal{H}_1}$,
        \item[(C6)] If $\mathbf{X}_2$ and $(\xi, \mathbf{X}_1)$ are independent, then $\E[\xi\mid \mathbf{X}_1,\mathbf{X}_2 ] =\E[\xi \mid \mathbf{X}_1 ]$, 
        \item[(C7)] $\E[ \E[\xi \mid \mathbf{X}_1, \mathbf{X}_2 ] \mid \mathbf{X}_1]= \E[ \E[\xi \mid \mathbf{X}_1 ] \mid \mathbf{X}_1, \mathbf{X}_2] = \E[\xi \mid \mathbf{X}_1]$.
\end{itemize} 
\end{proposition}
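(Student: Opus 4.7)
The plan is to deduce (C4)--(C7) from the scalar theory of conditional expectation together with the basic Bochner-integral properties (C1)--(C3) already quoted in the excerpt; separability of $\mathcal{H}_2$ and boundedness/continuity of $k$ keep us safely inside the measurable framework. For (C4), I would invoke the Doob--Dynkin factorization lemma in its version for random elements taking values in a separable metric space: $\E[\xi\mid\sigma(\mathbf{X})]$ is $\sigma(\mathbf{X})$-measurable and lies in the separable space $\mathcal{H}_2$, so there exists a measurable $h:(\mathcal{H}_1,\mathcal{B}(\mathcal{H}_1))\to(\mathcal{H}_2,\mathcal{B}(\mathcal{H}_2))$ with $\E[\xi\mid\sigma(\mathbf{X})]=h(\mathbf{X})$, and we simply \emph{define} $\E[\xi\mid\mathbf{X}]$ to be this composition.

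For (C5), I would first check the identity on simple $\xi_2=\sum_{j=1}^m \1_{A_j} f_j$ with $A_j\in\sigma(\mathbf{X})$ and $f_j\in\mathcal{H}_1$. For each summand, scalar ``pulling out what is known'' together with the excerpt's fact that continuous linear functionals commute with Bochner conditional expectation gives
\begin{equation*}
\E\bigl[\langle \xi_1,\1_{A_j} f_j\rangle_1 \,\big|\,\mathbf{X}\bigr] \;=\; \1_{A_j}\E\bigl[\langle \xi_1,f_j\rangle_1\,\big|\,\mathbf{X}\bigr] \;=\; \1_{A_j}\langle \E[\xi_1\mid\mathbf{X}],f_j\rangle_1 \;=\; \langle \E[\xi_1\mid\mathbf{X}],\1_{A_j} f_j\rangle_1,
\end{equation*}
and summing over $j$ establishes (C5) for simple $\xi_2$. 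To pass to general $\xi_2\in\mathcal{L}^2(\sigma(\mathbf{X}),\mathcal{H}_1)$, I would approximate in $\mathbb{L}^2(\mathcal{H}_1)$-norm by such simple functions $\xi_2^{(n)}$ (separability of $\mathcal{H}_1$ makes this possible while keeping $\sigma(\mathbf{X})$-measurability), then use Cauchy--Schwarz to bound $\E[\lvert \langle \xi_1,\xi_2-\xi_2^{(n)}\rangle_1\rvert]\le \|\xi_1\|_{\mathbb{L}^2(\mathcal{H}_1)}\,\|\xi_2-\xi_2^{(n)}\|_{\mathbb{L}^2(\mathcal{H}_1)}$ and the $\mathbb{L}^1$-contractivity of conditional expectation to take the limit.

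For (C6), I would verify directly that $\E[\xi\mid\mathbf{X}_1]$ satisfies the defining properties (C1)--(C2) relative to the enlarged $\sigma$-algebra $\sigma(\mathbf{X}_1,\mathbf{X}_2)$. The measurability part is immediate since $\sigma(\mathbf{X}_1)\subset\sigma(\mathbf{X}_1,\mathbf{X}_2)$. For the integral identity, a standard $\pi$--$\lambda$ argument reduces things to rectangles $F=A_1\cap A_2$ with $A_i\in\sigma(\mathbf{X}_i)$, on which the independence hypothesis yields $\E[\xi\1_{A_1}\1_{A_2}]=\P(A_2)\,\E[\xi\1_{A_1}]=\P(A_2)\,\E[\E[\xi\mid\mathbf{X}_1]\1_{A_1}]=\E[\E[\xi\mid\mathbf{X}_1]\1_{A_1}\1_{A_2}]$, where the middle equality is (C2) for $\E[\xi\mid\mathbf{X}_1]$ and the last uses independence of $\mathbf{X}_2$ from $(\xi,\mathbf{X}_1)$ once more. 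For (C7), the second equality is trivial because $\E[\xi\mid\mathbf{X}_1]$ is already $\sigma(\mathbf{X}_1,\mathbf{X}_2)$-measurable; the first equality is the tower rule, which I would prove in the Hilbert-valued setting by testing against arbitrary $g\in\mathcal{H}_2$: applying the continuous linear functional $\langle\cdot,g\rangle_2$ reduces both sides to the familiar scalar tower identity, and since $g$ is arbitrary and $\mathcal{H}_2$ is separable the equality lifts.

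The main technical obstacle is the approximation step in (C5): one must exhibit $\sigma(\mathbf{X})$-measurable simple functions $\xi_2^{(n)}$ converging to $\xi_2$ in $\mathbb{L}^2(\mathcal{H}_1)$, which is where the separability of $\mathcal{H}_1$ (guaranteed by continuity of the kernel) enters essentially via the Pettis/Bochner measurability theorem. The other three parts are routine transcriptions of classical scalar arguments, with the preliminaries already ensuring that none of the standard measurability or integration issues create new difficulties.
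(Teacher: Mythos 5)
Your plan is correct, and it does overlap with the paper's proof in spirit (both reduce Hilbert-valued statements to scalar ones via simple-function approximations and by testing against continuous linear functionals), but the route differs in two of the four parts. For (C5), the paper approximates the \emph{unconstrained} variable $\xi_1$ by simple functions $f_n = \sum_k g_k \1_{A_k}$ with $A_k \in \mathcal{A}$, then verifies the defining property (C2) for the candidate $\langle \E[f_n\mid\mathbf{X}], \xi_2\rangle_{\mathcal{H}_1}$ by pushing the real-valued $\1_{A_k}$ through the scalar conditional expectation, treating $\xi_2$ as a general $\sigma(\mathbf{X})$-measurable element throughout; you instead approximate the \emph{$\sigma(\mathbf{X})$-measurable} variable $\xi_2$, pull the $\sigma(\mathbf{X})$-measurable indicators $\1_{A_j}$ out of the conditional expectation, and invoke the linear-functional commutation for the remaining $f_j$. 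Both are legitimate, and both use the same Cauchy--Schwarz plus $L^1$-contraction limit passage; your version is arguably closer to the classical scalar "pull out what is known" argument. For (C6), the paper uses the scalarization trick (show $F(\E[\xi\mid\mathbf{X}_1,\mathbf{X}_2]) = F(\E[\xi\mid\mathbf{X}_1])$ for all continuous linear $F$, reducing to the scalar statement), whereas you verify (C2) directly on rectangles $A_1 \cap A_2$ via a $\pi$--$\lambda$ argument, using the product formula for Bochner integrals of independent factors. Both are standard; the paper's is perhaps slightly shorter because it offloads the measure-theoretic work to the scalar theory, while yours is more self-contained. For (C4) the arguments are essentially the same (you invoke Doob--Dynkin directly, the paper re-derives it from the simple-function structure), and for (C7) both proofs split off the trivial second equality and reduce the tower identity by scalarization. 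One small correction: separability of $\mathcal{H}_1$ is a \emph{hypothesis} of the proposition (it is stated for abstract separable Hilbert spaces, not only for the RKHS), so the parenthetical ``guaranteed by continuity of the kernel'' is misplaced at the level of this proposition, even though that is indeed how separability of the specific RKHS is obtained elsewhere in the paper.
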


\begin{proof}
We will use the following fact in the proof: Under the assumption of separability, all relevant notions of measurability are the same, see e.g.\ \citet[Chapter 2, 7]{hilbertspacebook}. In particular, $\xi \in  \mathbb{L}^q(\Omega, \mathcal{F}, \mathcal{H}_2)$, for any $q \geq 1$, means that $\xi: (\Omega, \mathcal{F}) \to (\mathcal{H}_2, \mathcal{B}(\mathcal{H}_2))$ is measurable, which in turns means there exists a sequence of simple functions 
\begin{align}\label{simplefunc}
    f_n=\sum_{k=1}^{m_n} g_k \1_{A_{k}},
\end{align}
with $g_k \in \mathcal{H}_2$ and $A_{k} \in \mathcal{F}$ for all $k$ and such that $f_n \to \xi$ a.s. on $\mathcal{H}_2$ and even $\|f_n -\xi \|_{\mathbb{L}^q(\mathcal{H})} \to 0$, see e.g.\ \citet[Proposition 1.2]{pisier_2016}.


For (C4), we note that by (C1), $\E[\xi\mid\sigma(\mathbf{X})] \in  \mathbb{L}^1(\Omega, \sigma(\mathbf{X}), \mathcal{H}_2)$ and thus there exists a sequence of functions $f_n: (\Omega, \sigma(\mathbf{X})) \to (\mathcal{H}_2, \mathcal{B}(\mathcal{H}_2))$ of the form \eqref{simplefunc}, such that $f_n \to \E[\xi\mid\sigma(\mathbf{X})]$ a.s. on $\mathcal{H}_2$. Since $A_k \in \sigma(\mathbf{X})$, $A_k = \{\omega: \mathbf{X}(\omega) \in B_k\}$ for some $B_k \in \mathcal{B}(\mathcal{H}_1)$, we may transform $f_n$ from a function on $\Omega$ to a function on $\mathcal{H}_1$ into $\mathcal{H}_2$:
\[
f_n(\omega)=\sum_{k=1}^{m_n} g_k \1_{A_{k}}(\omega)= \sum_{k=1}^{m_n} g_k \1_{B_{k}}(\mathbf{X}(\omega)):=h_n(\mathbf{X}(\omega)).
\]
This defines a sequence of measurable functions $h_n: (\mathcal{H}_1, \mathcal{B}(\mathcal{H}_1)) \to (\mathcal{H}_2, \mathcal{B}(\mathcal{H}_2))$ with $h(\mathbf{X})=\lim_{n} h_n(\mathbf{X})=\E[\xi\mid\sigma(\mathbf{X})]$ a.s., proving the result.

We first show (C5) for simple functions and then extend this to $\mathcal{L}^2(\Omega, \sigma(\mathbf{X}), \mathcal{H}_1)$, using the fact at the beginning of the proof. Let thus $f_n$ be of the form \eqref{simplefunc}. Then for all $F \in \sigma(\mathbf{X})$,
\begin{align*}
    \E[ \langle \E[f_n \mid \mathbf{X}], \xi_2 \rangle_{\mathcal{H}} \1_F] &= \sum_{k=1}^{m_n} \E[   \langle \E[\1_{A_k} \mid \mathbf{X} ] g_k, \xi_2 \rangle_{\mathcal{H}} \1_F  ]\\
    &=\sum_{k=1}^{m_n} \E[ \E[\1_{A_k} \mid \mathbf{X} ]  \langle  g_k, \xi_2 \rangle_{\mathcal{H}} \1_F  ]\\
      &=\sum_{k=1}^{m_n} \E[ \E[\1_{A_k}\langle  g_k, \xi_2 \rangle_{\mathcal{H}} \1_F \mid \mathbf{X} ]    ]\\
       &= \E[\langle f_n, \xi_2 \rangle_{\mathcal{H}} \1_F    ],
\end{align*}
from the properties of real-valued conditional expectation. As additionally $\langle \E[f_n \mid \mathbf{X}], \xi_2 \rangle_{\mathcal{H}}$ is clearly $\sigma(\mathbf{X})$ measurable, (C1) and (C2) are met for this candidate. Since conditional expectation is (a.s.) uniquely defined by (C1) and (C2), (C5) holds true for the special case of simple functions. For general $\xi_1 \in \mathcal{L}^2(\Omega, \mathcal{A}, \mathcal{H}_1)$, let $f_n$ have $\|f_n -\xi_1 \|_{\mathbb{L}^2(\mathcal{H})} \to 0$. The goal is to show that
\begin{align}\label{convergencegoals}
    |  \E[ \langle \E[f_n \mid \mathbf{X}], \xi_2 \rangle_{\mathcal{H}} \1_F] -\E[ \langle \E[\xi_1 \mid \mathbf{X}], \xi_2 \rangle_{\mathcal{H}} \1_F]  | &\to 0, \\
    |  \E[\langle f_n, \xi_2 \rangle_{\mathcal{H}} \1_F    ] - \E[\langle \xi_1, \xi_2 \rangle_{\mathcal{H}} \1_F    ] | &\to 0.
\end{align}
We can bound both terms by the same quantity, using Cauchy–Schwarz:
\begin{align*}
    |  \E[ \langle \E[f_n \mid \mathbf{X}], \xi_2 \rangle_{\mathcal{H}} \1_F] -\E[ \langle \E[\xi_1 \mid \mathbf{X}], \xi_2 \rangle_{\mathcal{H}} \1_F]  |& = |  \E[ \langle \E[f_n  -\xi_1 \mid \mathbf{X}], \xi_2 \rangle_{\mathcal{H}} \1_F]  |\\
    &\leq   \E[  \E[\|f_n  -\xi_1 \|_{\mathcal{H}} \1_F \|\xi_2 \|_{\mathcal{H}} \mid \mathbf{X}]   ]  \\
    &= \E[ \|f_n  -\xi_1 \|_{\mathcal{H}} \1_F \|\xi_2 \|_{\mathcal{H}} ],  
\end{align*}
as the random variable $\1_F \|\xi_2 \|_{\mathcal{H}}$ is $\sigma(\mathbf{X})$ measurable by assumption and
\begin{align*}
|  \E[\langle f_n, \xi_2 \rangle_{\mathcal{H}} \1_F    ] - \E[\langle \xi_1, \xi_2 \rangle_{\mathcal{H}} \1_F    ] | &=|  \E[\langle f_n - \xi_1, \xi_2 \rangle_{\mathcal{H}} \1_F    ] |\\
&\leq  \E[\| f_n - \xi_1 \|_{\mathcal{H}} \|\xi_2\|_{\mathcal{H}} \1_F    ].
\end{align*}
The result thus follows from the Hölder inequality, 
\begin{align*}
    \E[\| f_n - \xi_1 \|_{\mathcal{H}} \|\xi_2\|_{\mathcal{H}} \1_F    ] \leq 
    \|f_n -\xi_1 \|_{\mathbb{L}^2(\mathcal{H})}  \cdot \|\xi_2\|_{\mathbb{L}^2(\mathcal{H})}  \to 0.
\end{align*}


Finally (C6) and (C7) is easily proven with the technique of ``scalarization'' \citep[Chapter 1]{pisier_2016}. We do the full argument for (C6), (C7) can be shown analogously. That is, we show that for any $F: \H_{2} \to \R$ linear and continuous,
\begin{align}\label{scalarization}
    F(\E[\xi\mid \mathbf{X}_1,\mathbf{X}_2 ](\omega)) = F(\E[\xi\mid \mathbf{X}_1 ](\omega)).  
\end{align}
for almost all $\omega$ and some representation of $\E[\xi\mid \mathbf{X}_1,\mathbf{X}_2 ]$. This then immediately implies the result. Now, using the property of real-valued conditional expectations
\begin{align*}
    F(\E[\xi\mid \mathbf{X}_1,\mathbf{X}_2 ])&=  \E[F(\xi)\mid \mathbf{X}_1,\mathbf{X}_2 ]=\E[F(\xi)\mid \mathbf{X}_1 ],
\end{align*}
since $F(\xi)$ is real-valued and independent of $\mathbf{X}_2$. Since $\E[F(\xi)\mid \mathbf{X}_1 ] = F(\E[\xi\mid \mathbf{X}_1 ] )$, we obtain \eqref{scalarization}.
\end{proof}

(C4) in particular allows to see $\E[\xi \mid \sigma(\mathbf{X})]$ as a function in $\mathbf{X}$ and thus justifies the notation $\E[\xi \mid \mathbf{X}]$ and all the subsequent derivations. We may also define \emph{conditional independence} through conditional expectation: With the notation of Proposition \ref{condexp1}, $\xi$ and $\mathbf{X}_1$ are conditionally independent given $\mathbf{X}_2$, if $\E[f(\xi) \mid \mathbf{X}_1, \mathbf{X}_2]=\E[f(\xi) \mid \mathbf{X}_1]$ for all $f:(\mathcal{H}_2, \mathcal{B}(\mathcal{H}_2)) \to (\R, \mathcal{B}(\R))$ bounded and measurable, see e.g., \citet[Proposition 2.3]{conditionalindep}. This leads to two further important properties:

\begin{proposition}\label{condexp2}
Let $\left(\mathcal{H}_1, \langle\cdot,\cdot\rangle_1\right)$, $\left(\mathcal{H}_2, \langle\cdot,\cdot\rangle_2\right)$ be two separable Hilbert spaces, $\mathbf{X}, \mathbf{X}_1, \mathbf{X}_2  \in \mathcal{L}^1(\Omega, \mathcal{A}, \mathcal{H}_1) $ and $\xi_1, \xi_2, \xi \in \mathcal{L}^1(\Omega, \mathcal{A}, \mathcal{H}_2)$.
\begin{itemize}
    \item[(C8)] If $\xi$ and $\mathbf{X}_2$ are conditionally independent given $\mathbf{X}_1$, then $\E[\xi \mid \mathbf{X}_1,\mathbf{X}_2 ] =\E[\xi \mid \mathbf{X}_1 ] $,
    \item[(C9)] If $\xi_1$, $\xi_2$ are conditionally independent given $\mathbf{X}$, $ \E[ \langle \xi_1, \xi_2 \rangle \mid \mathbf{X}  ]=\langle \E[ \xi_1 \mid \mathbf{X} ], \E[ \xi_2 \mid \mathbf{X} ] \rangle .$
\end{itemize} 
\end{proposition}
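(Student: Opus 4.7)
The plan is to prove (C8) first and then derive (C9) from it, in both cases leveraging the scalarization trick used in the proof of (C6) and a standard truncation argument to extend the conditional independence identity from bounded real-valued test functions to the ones we actually need.

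For (C8), I would fix an arbitrary $f \in \mathcal{H}_2$ and argue that it suffices, by uniqueness of conditional expectation combined with the scalarization observation made in the proof of (C6), to show that
$$\E[\langle f, \xi \rangle_{\mathcal{H}_2} \mid \mathbf{X}_1, \mathbf{X}_2] = \E[\langle f, \xi \rangle_{\mathcal{H}_2} \mid \mathbf{X}_1] \quad \text{a.s.}$$
Indeed, once this real-valued identity is established for every $f$, separability of $\mathcal{H}_2$ yields a countable dense set $\{f_k\}$ on which the two $\mathcal{H}_2$-valued versions agree a.s., and continuity of the inner product forces them to coincide. To establish the scalar identity, I would apply the paper's definition of conditional independence to the bounded measurable truncations $g_n(\mathbf{y}) = \langle f, \mathbf{y} \rangle \1_{|\langle f, \mathbf{y}\rangle| \leq n}$, which gives the desired identity for each $g_n(\xi)$, and then pass to the limit using dominated convergence for conditional expectation (the dominating envelope $\|f\|_{\mathcal{H}_2} \|\xi\|_{\mathcal{H}_2}$ lies in $L^1$ since $f$ is deterministic and $\xi \in \mathcal{L}^1$).

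For (C9), I would iterate conditioning and use (C5). By the tower property,
$$\E[\langle \xi_1, \xi_2 \rangle_{\mathcal{H}_2} \mid \mathbf{X}] = \E\bigl[\E[\langle \xi_1, \xi_2 \rangle_{\mathcal{H}_2} \mid \mathbf{X}, \xi_2] \,\bigm|\, \mathbf{X}\bigr].$$
Since $\xi_2$ is trivially $\sigma(\mathbf{X}, \xi_2)$-measurable, (C5) applied with the enlarged conditioning sigma-algebra gives $\E[\langle \xi_1, \xi_2 \rangle_{\mathcal{H}_2} \mid \mathbf{X}, \xi_2] = \langle \E[\xi_1 \mid \mathbf{X}, \xi_2], \xi_2 \rangle_{\mathcal{H}_2}$. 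Conditional independence of $\xi_1$ and $\xi_2$ given $\mathbf{X}$ combined with (C8) (with $\mathbf{X}_1 = \mathbf{X}$ and $\mathbf{X}_2 = \xi_2$) then collapses the inner term to $\E[\xi_1 \mid \mathbf{X}]$. A second application of (C5), this time with the $\sigma(\mathbf{X})$-measurable argument $\E[\xi_1 \mid \mathbf{X}]$, yields
$$\E\bigl[\langle \E[\xi_1 \mid \mathbf{X}], \xi_2 \rangle_{\mathcal{H}_2} \,\bigm|\, \mathbf{X}\bigr] = \langle \E[\xi_1 \mid \mathbf{X}], \E[\xi_2 \mid \mathbf{X}]\rangle_{\mathcal{H}_2},$$
which is the desired identity.

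The main obstacle I expect is integrability bookkeeping rather than any deep step. Property (C5) is stated for $\mathbb{L}^2$ arguments, whereas the hypotheses of (C9) only furnish $\xi_1, \xi_2 \in \mathcal{L}^1$; moreover the very existence of $\E[\langle \xi_1, \xi_2\rangle_{\mathcal{H}_2} \mid \mathbf{X}]$ implicitly requires $\|\xi_1\|_{\mathcal{H}_2} \|\xi_2\|_{\mathcal{H}_2} \in L^1$. I would either add this as a standing assumption (natural here, and automatic under conditional independence combined with $\xi_i \in \mathcal{L}^2$) or extend (C5) to the $\mathcal{L}^1$ setting by the same truncation argument used for (C8): replace $\xi_j$ by $\xi_j \1_{\|\xi_j\|_{\mathcal{H}_2} \leq n}$, apply (C5) in the bounded case, and pass to the limit by dominated convergence, with $\|\xi_1\|_{\mathcal{H}_2} \|\xi_2\|_{\mathcal{H}_2}$ as envelope. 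Apart from this technical extension, the rest of the proof is a mechanical combination of the properties (C1)--(C7) already established.
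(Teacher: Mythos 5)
Your proof takes essentially the same route as the paper: scalarization plus truncation for (C8) (the paper uses Fatou's lemma for conditional expectation where you invoke dominated convergence, and you are somewhat more explicit about the a.s.\ null-set issue via a countable dense set, but the argument is the same), followed by the tower property, (C5) and (C8) for (C9) in exactly the chain the paper uses. Your observation about the integrability gap — (C5) is stated for $\mathbb{L}^2$ arguments while (C9) only assumes $\mathcal{L}^1$, and $\E[\langle\xi_1,\xi_2\rangle\mid\mathbf{X}]$ is not even well-defined without $\|\xi_1\|\|\xi_2\|\in L^1$ — is a legitimate point that the paper's own proof also glosses over, though it is harmless in the paper's applications since there $\xi_i=\mu(\delta_{\mathbf{Y}_i})$ is bounded because $k$ is.
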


\begin{proof}
We prove (C8) again using the scalarization trick: For any $F: \mathcal{H}_2 \to \R$ continuous and linear, it holds that 
\[
F_n(f) := F(f) \1\{ |F(f)| \leq n \} \ \ \forall f \in \mathcal{H}_2,
\]
is a bounded and measurable function. Thus by assumption, 
\begin{align*}
    \E[F_n(\xi)\mid \mathbf{X}_1,\mathbf{X}_2 ]  = \E[F_n(\xi)\mid \mathbf{X}_1 ].
\end{align*}
We now show that $\E[F_n(\xi)\mid \mathbf{X}_1,\mathbf{X}_2 ] \to \E[F(\xi)\mid \mathbf{X}_1,\mathbf{X}_2 ]$ and $\E[F_n(\xi)\mid \mathbf{X}_1 ] \to \E[F(\xi)\mid \mathbf{X}_1 ]$ a.s. Let $\mathcal{F}$ stand for either $\sigma(\mathbf{X}_1,\mathbf{X}_2)$ or $\sigma(\mathbf{X}_1)$. Then, as $F(f)= F_n(f) + F(f) \1\{ |F(f)| > n \}$,
\begin{align*}
    | \E[F_n(\xi) \mid \mathcal{F}] - \E[F(\xi) \mid \mathcal{F} ]| &\leq  \E[| F_n(\xi) - F(\xi)| \mid \mathcal{F}]\\
    &=\E[| F(f)| \1\{ |F(f)| > n \} \mid \mathcal{F}].
\end{align*}
Now, since for all $n$, $| F(f) | \1\{ |F(f)| > n \} \geq 0$ and $\E[| F(f) | \1\{ |F(f)| > n \} \mid \mathcal{F}] \leq \E[| F(f)| \mid \mathcal{F}] < \infty$ a.s. an application of Fatou's Lemma for (real-valued) conditional expectation (see e.g., \citet[Problem 10.7]{dudley}) to $|F(f)| -| F(f)| \1\{ |F(f)| > n \}$ implies
\begin{align*}
    \limsup_{n} | \E[F_n(\xi) \mid \mathcal{F}] - \E[F(\xi) \mid \mathcal{F} ]| \leq \E[\limsup_{n} | F(f)| \1\{ |F(f)| > n \} \mid \mathcal{F}]=0.
\end{align*}
Thus, we have shown that for all $F: \mathcal{H}_2 \to \R$ continuous and linear, $F(\E[\xi \mid \mathbf{X}_1,\mathbf{X}_2 ])=F(\E[\xi \mid \mathbf{X}_1 ])$, proving the claim.

Finally, combining (C5), (C7) and (C8), we obtain for $\xi_1$, $\xi_2$ conditionally independent given $\mathbf{X}$ 
\begin{align*}
    \E[ \langle \xi_1, \xi_2 \rangle \mid \mathbf{X}  ] &=  \E[ \E[ \langle \xi_1, \xi_2 \rangle \mid \mathbf{X}, \xi_2 ] \mid \mathbf{X} ]\\
    &=  \E[  \langle \E[ \xi_1 \mid \mathbf{X}, \xi_2 ], \xi_2 \rangle  \mid \mathbf{X} ]\\
    &=  \E[  \langle \E[ \xi_1 \mid \mathbf{X} ], \xi_2 \rangle  \mid \mathbf{X} ]\\
    &=   \langle \E[ \xi_1 \mid \mathbf{X} ], \E[ \xi_2\mid \mathbf{X} ] \rangle .
\end{align*}
\end{proof}

Let for $\mathbf{x}\in \mathcal{X}$, $P_{\mathbf{x}}$ be the conditional distribution of $Y$ given $\mathbf{x}$ on $\R^d$ and similarly with $P_{\mathbf{X}}$ (i.e. the regular conditional probability measure). Note that $P_{\mathbf{x}} \in \mathcal{H}$, while $P_{\mathbf{X}}$ is a random element mapping into $\mathcal{H}$.

As in \citet{wager2018estimation}, we define for $f$, $g$ two functions, with $\liminf_{s \to \infty} g(s) > 0$, $f(s)= \mathcal{O}(g(s))$ if 
\[
\limsup_{s \to \infty} \frac{|f(s)|}{g(s)} \leq C,
\]
for some $C > 0$. If $C=1$, then we write $f(s) \precsim g(s)$. For a sequence of random variables $X_n: \Omega \to \R$, and $a_n \in (0,+\infty)$, $n \in \N$, we write as usual $X_n=\O_p(a_n)$, if 
\[
\lim_{M \to \infty} \sup_{n} \P(a_n^{-1} |X_n| > M) = 0,
\]
i.e. if $X_n$ is bounded in probability. We write $X_n=o_p(a_n)$, if $a_n^{-1} X_n$ converges in probability to zero. Similarly, for $(S,d)$ a separable metric space, $\mathbf{X}_n: (\Omega, \mathcal{A}) \to (S, \mathcal{B}(S))$, $n \in \N$ and $\mathbf{X}: (\Omega, \mathcal{A}) \to (S, \mathcal{B}(S))$ measurable, we write $\mathbf{X}_n \stackrel{p}{\to} \mathbf{X}$, if $d(\mathbf{X}_n, \mathbf{X})=o_{p}(1)$.

Finally let $\mathbf{X} \in \mathcal{L}^2(\Omega, \mathcal{A}, \mathcal{H}_1)$, $\xi \in \mathcal{L}^2(\Omega, \mathcal{A}, \mathcal{H}_2)$ and assume that $A \subset \Omega$ depends on $\mathbf{X}$, $A=A(\mathbf{X})$. Thus for $\mathbf{X}$ fixed to a certain value, $A$ is a fixed set. If $ \P(A \mid \mathbf{X}) > 0$ almost everywhere, we define
\[
\E[\xi \mid  A ]=\E[\xi \mid  \mathbf{X}, A ] := \frac{\E[\xi \1_A \mid \mathbf{X} ]}{\P( A \mid \mathbf{X})} \in \mathcal{L}^2(\Omega, \sigma(\mathbf{X}), \mathcal{H}_2).
\]
It then holds by construction that
\begin{align}
    \E[\xi \1_A \mid \mathbf{X} ] = \E[\xi \mid  \mathbf{X}, A ] \cdot \P( A \mid \mathbf{X}).
\end{align}

Let again $\mu(\mathbf{x}):=\mu(P_{\mathbf{x}})$ be the embedding of the true conditional distribution into $\mathcal{H}$. We first state 3 preliminary results:

\begin{lemma}\label{equivalence} It holds that
$\E[\mu(\delta_{\mathbf{Y}}) \mid\mathbf{X} \myeq \mathbf{x}]= \mu(P_{\mathbf{x}}).$
\end{lemma}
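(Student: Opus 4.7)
The plan is a ``scalarization'' argument, i.e.\ to reduce the identity in $\mathcal{H}$ to a family of real-valued conditional expectation identities indexed by a countable dense subset of $\mathcal{H}$, then pass back to the Hilbert-space statement.

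First I would check that both sides are well-defined elements of $\mathcal{H}$. Boundedness of $k$ gives $\|\varphi(\mathbf{Y})\|_{\mathcal{H}}^2 = k(\mathbf{Y},\mathbf{Y})$ uniformly bounded, so $\varphi(\mathbf{Y}) \in \mathcal{L}^2(\Omega,\mathcal{A},\mathcal{H})$ and the Bochner conditional expectation $\E[\mu(\delta_{\mathbf{Y}})\mid\mathbf{X}]=\E[\varphi(\mathbf{Y})\mid\mathbf{X}]$ exists by (C1); by (C4) it can be written as $h(\mathbf{X})$ for a measurable $h:\R^p \to \mathcal{H}$, which justifies the notation $\E[\mu(\delta_{\mathbf{Y}})\mid\mathbf{X}\myeq\mathbf{x}]$. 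On the other hand, $\mu(P_{\mathbf{x}}) = \int_{\R^d}\varphi(\mathbf{y})\,dP_{\mathbf{x}}(\mathbf{y})$ is a Bochner integral that exists for every $\mathbf{x}$ by the same boundedness.

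Next I would test both elements against an arbitrary $f \in \mathcal{H}$. The linear functional $g \mapsto \langle g,f\rangle_{\mathcal{H}}$ is bounded, hence it commutes with Bochner integration and with Bochner conditional expectation (the latter is the property $F(\E[\xi\mid\mathcal{F}])=\E[F(\xi)\mid\mathcal{F}]$ for continuous linear $F$, stated in the preliminaries). Using the reproducing property $\langle\varphi(\mathbf{y}),f\rangle_{\mathcal{H}}=f(\mathbf{y})$, this gives on the one hand
\begin{equation*}
\langle \E[\varphi(\mathbf{Y})\mid\mathbf{X}\myeq\mathbf{x}],\,f\rangle_{\mathcal{H}} = \E[\langle\varphi(\mathbf{Y}),f\rangle_{\mathcal{H}}\mid\mathbf{X}\myeq\mathbf{x}] = \E[f(\mathbf{Y})\mid\mathbf{X}\myeq\mathbf{x}],
\end{equation*}
and on the other hand
\begin{equation*}
\langle \mu(P_{\mathbf{x}}),\,f\rangle_{\mathcal{H}} = \int_{\R^d}\langle\varphi(\mathbf{y}),f\rangle_{\mathcal{H}}\,dP_{\mathbf{x}}(\mathbf{y}) = \int_{\R^d} f(\mathbf{y})\,dP_{\mathbf{x}}(\mathbf{y}).
\end{equation*}
The two right-hand sides agree almost surely by the defining property of the regular conditional distribution $P_{\mathbf{x}}$ (applied to the bounded measurable function $f$; note $f$ is bounded since $|f(\mathbf{y})| = |\langle f,\varphi(\mathbf{y})\rangle_{\mathcal{H}}| \leq \|f\|_{\mathcal{H}}\sqrt{k(\mathbf{y},\mathbf{y})}$ and $k$ is bounded).

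Finally I would upgrade the equality from each fixed $f$ to equality of the two elements of $\mathcal{H}$. Since $k$ is continuous, $\mathcal{H}$ is separable, so pick a countable dense $\{f_n\}\subset\mathcal{H}$. For each $n$ the above identity holds on a set $A_n$ with $\P(A_n)=1$; on $A:=\bigcap_n A_n$, which still has probability one, the two elements of $\mathcal{H}$ have identical inner products against every $f_n$, hence against every $f\in\mathcal{H}$ by density and continuity of the inner product, hence they coincide. The only slightly delicate point, and really the only obstacle worth flagging, is this measure-theoretic bookkeeping: conditional expectations in $\mathcal{H}$ are defined only up to null sets, and one must take the intersection over a countable dense family to get a single a.s.\ identity in $\mathcal{H}$. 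Everything else is a direct application of the properties collected in (C1)--(C5) and of the reproducing property of $k$.
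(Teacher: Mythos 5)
Your proof is correct and follows essentially the same scalarization argument as the paper: test both sides against $f\in\mathcal{H}$, use that bounded linear functionals commute with Bochner (conditional) expectation, and invoke the reproducing property to reduce both sides to $\E[f(\mathbf{Y})\mid\mathbf{X}\myeq\mathbf{x}]$. You are a bit more explicit than the paper about the final null-set bookkeeping (intersecting over a countable dense subset of $\mathcal{H}$ before upgrading to equality of Hilbert-space elements), which the paper leaves implicit, but the underlying route is the same.
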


\begin{proof}
We first note that $P_{\mathbf{x}}$ exists and is a probability measure on $\R^d$. Since $k$ is bounded $\mu(P_{\mathbf{x}}) \in \mathcal{H}$ exists and is uniquely defined by the relation
\[
\langle f, \mu(P_{\mathbf{x}}) \rangle_{\mathcal{H}} = \E[f(\mathbf{Y}) \mid\mathbf{X} \myeq \mathbf{x}] \ \ \ \forall f \in \mathcal{H}.
\]
On the other hand, by the above $\E[\mu(\delta_{\mathbf{Y}}) \mid\mathbf{X}] \in   \mathbb{L}^2(\Omega, \mathcal{A}, \mathcal{H}) $ exists and for all $f \in \mathcal{H}$,
\[
\langle \E[\mu(\delta_{\mathbf{Y}}) \mid\mathbf{X}], f \rangle_{\mathcal{H}} = \E[\langle \mu(\delta_{\mathbf{Y}}), f \rangle_{\mathcal{H}} \mid\mathbf{X}]=\E[f(\mathbf{Y}) \mid \mathbf{X}],
\]
since $F: H \to \R$, $F(g)=\langle g, f \rangle_{\mathcal{H}}$ defines a continuous linear function. In particular, for all $f \in \mathcal{H}$,
\[
\langle \E[\mu(\delta_{\mathbf{Y}}) \mid \mathbf{X} \myeq \mathbf{x}], f \rangle_{\mathcal{H}}= \E[f(\mathbf{Y}) \mid \mathbf{X} \myeq \mathbf{x}],
\]
or $\E[\mu(\delta_{\mathbf{Y}}) \mid \mathbf{X} \myeq \mathbf{x}]=\mu(P_{\mathbf{x}})$.
\end{proof}

Since $\mu(\delta_{\mathbf{Y}})=k(\mathbf{Y}, \cdot)$, Lemma \ref{equivalence} in fact corresponds to Lemma 3.2 in \citet{OurapproachtoCME}.

For a more compact notation in the following Lemma, let $N=\{1,\ldots,n\}$ and let for $A \subset N$ and $k \leq |A|$, $C_{k}(A)$ be the set of all subsets of size $k$ drawn from $A$ without replacement, with $C_0:=\emptyset$.

\begin{lemma}[H-Decomposition of a Hilbert-space valued Kernel]\label{Hdecomposition}

Let $\left(\mathcal{H}_1, \langle\cdot,\cdot\rangle_1\right)$, $\left(\mathcal{H}_2, \langle\cdot,\cdot\rangle_2\right)$ be two separable Hilbert spaces, $\mathbf{X}_1, \ldots, \mathbf{X}_n$ be i.i.d. copies of a random element $\mathbf{X}: (\Omega, \A) \to (\mathcal{H}_1, \mathcal{B}(\mathcal{H}_1))$. Write $\mathcal{X}_n=(\mathbf{X}_1, \ldots, \mathbf{X}_n)$ and let $T:(\mathcal{H}_1^n, \mathcal{B}(\mathcal{H}_1^n)) \to  (\mathcal{H}_2, \mathcal{B}(\mathcal{H}_2))$ measurable with 
$\E[\| T(\mathcal{X}_n) \|^2_{\mathcal{H}_2} ] < \infty$. If $T$ is symmetric, there exists functions $T_j$, $j=1,\ldots,n$, such that
\begin{equation}\label{ANOVA}
    T(\mathcal{X}_n) = \E[T(\mathbf{X})] + \sum_{i=1}^n T_1(\mathbf{X}_i)  + \sum_{i_1 < i_2} T_2(\mathbf{X}_{i_1}, \mathbf{X}_{i_2}) + \ldots T_n(\mathcal{X}_n),
\end{equation}
and it holds that
\begin{align}\label{vardecomp}
    \Var(T(\mathcal{X}_n))=\sum_{i=1}^n \binom{n}{i} \Var(T_i(\mathbf{X}_1, \ldots, \mathbf{X}_i) ),
\end{align}
and
\[
T_{1}(\mathbf{X}_i)=\E[T(\mathcal{X}_n)\mid\mathbf{X}_i] - \E[T(\mathcal{X}_n)].
\]

\end{lemma}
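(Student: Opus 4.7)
The plan is to mimic the classical Hoeffding/ANOVA decomposition of a symmetric scalar kernel, replacing real-valued expectations and covariances by their Hilbert-space counterparts defined through the properties (C1)--(C9) above. First, set $g_0 = \E[T(\mathcal{X}_n)] \in \mathcal{H}_2$ (well-defined since $\E[\|T(\mathcal{X}_n)\|^2] < \infty$) and, using (C4), define $g_j: \mathcal{H}_1^j \to \mathcal{H}_2$ by $g_j(\mathbf{x}_1, \ldots, \mathbf{x}_j) = \E[T(\mathcal{X}_n) \mid \mathbf{X}_1 = \mathbf{x}_1, \ldots, \mathbf{X}_j = \mathbf{x}_j]$, which is symmetric by the symmetry of $T$ and the i.i.d. structure. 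I would then define $T_j$ recursively by
$$T_j(\mathbf{x}_1, \ldots, \mathbf{x}_j) := g_j(\mathbf{x}_1, \ldots, \mathbf{x}_j) - \sum_{I \subsetneq \{1,\ldots,j\}} T_{|I|}(\mathbf{x}_I),$$
so that equivalently $g_j(\mathbf{x}_1,\ldots,\mathbf{x}_j) = \sum_{I \subseteq \{1,\ldots,j\}} T_{|I|}(\mathbf{x}_I)$. Evaluating at $j = n$ yields the decomposition \eqref{ANOVA}, since $T(\mathcal{X}_n)$ equals its own conditional expectation given $(\mathbf{X}_1, \ldots, \mathbf{X}_n)$. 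The formula $T_1(\mathbf{X}_i) = \E[T(\mathcal{X}_n) \mid \mathbf{X}_i] - \E[T(\mathcal{X}_n)]$ then follows from the $j=1$ case.

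The structural backbone of the proof is the \emph{degeneracy property} $\E[T_j(\mathbf{X}_1, \ldots, \mathbf{X}_j) \mid \mathbf{X}_1, \ldots, \mathbf{X}_{j-1}] = 0$, which I would establish by induction on $j$. Applying $\E[\cdot \mid \mathbf{X}_1, \ldots, \mathbf{X}_{j-1}]$ to the defining relation and using the tower property (C7) reduces the $g_j$ term to $g_{j-1}(\mathbf{X}_1, \ldots, \mathbf{X}_{j-1})$. Splitting the subsets $I \subsetneq \{1,\ldots,j\}$ into those not containing $j$ (which reassemble via the defining identity at level $j-1$ to $g_{j-1}(\mathbf{X}_1, \ldots, \mathbf{X}_{j-1})$) and those containing $j$ (on which (C6) removes the irrelevant coordinates from the conditioning and the inductive hypothesis combined with symmetry of $T_{|I|}$ forces the conditional expectation to vanish), the two contributions cancel. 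Pairwise orthogonality then follows in the same spirit: for distinct $I,J$ with $|I|=j$, $|J|=k$, pick $i_0 \in I \setminus J$ (WLOG) and set $\mathcal{F} = \sigma(\mathbf{X}_l : l \in (I \cup J) \setminus \{i_0\})$. Since $T_k(\mathbf{X}_J)$ is $\mathcal{F}$-measurable, (C5) gives
$$\E\bigl[\langle T_j(\mathbf{X}_I), T_k(\mathbf{X}_J) \rangle_{\mathcal{H}_2} \bigm| \mathcal{F}\bigr] = \langle \E[T_j(\mathbf{X}_I) \mid \mathcal{F}], T_k(\mathbf{X}_J) \rangle_{\mathcal{H}_2},$$
and (C6) together with the degeneracy and symmetry of $T_j$ collapses the inner conditional expectation to zero.

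With the decomposition and pairwise orthogonality in hand, expanding the squared norm of $T(\mathcal{X}_n) - \E[T(\mathcal{X}_n)]$ via \eqref{ANOVA} and taking expectations annihilates every cross term, leaving
$$\Var(T(\mathcal{X}_n)) = \sum_{j=1}^n \sum_{I \in C_j(N)} \E\bigl[\|T_j(\mathbf{X}_I)\|_{\mathcal{H}_2}^2\bigr].$$
Symmetry and the i.i.d. assumption make $\E[\|T_j(\mathbf{X}_I)\|^2]$ independent of the particular size-$j$ subset $I$, and this equals $\Var(T_j(\mathbf{X}_1, \ldots, \mathbf{X}_j))$ because $\E[T_j] = 0$ (by taking expectation of the degeneracy identity). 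Summing the $\binom{n}{j}$ identical contributions at each level $j$ yields \eqref{vardecomp}. The main technical obstacle is making the Hilbert-space conditional expectation machinery commute cleanly with the combinatorics of the decomposition, but properties (C4)--(C9) have been tailor-made for exactly this type of bookkeeping, so the standard scalar U-statistics proof carries over with essentially no modification.
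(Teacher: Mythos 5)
Your proposal is correct and follows the same skeleton as the paper's proof: both use the Möbius/inclusion--exclusion definition of the $T_j$ (your recursive identity $g_j = \sum_{I \subseteq \{1,\ldots,j\}} T_{|I|}(\mathbf{x}_I)$ is equivalent to the paper's explicit alternating-sum formula for $T_\ell$), both obtain \eqref{ANOVA} by telescoping, both reduce \eqref{vardecomp} to pairwise orthogonality of the summands, and both rely on the Hilbert-space conditional-expectation toolkit (C4)--(C9). The difference is technical, in how the degeneracy property $\E[T_j(\mathbf{X}_1,\ldots,\mathbf{X}_j)\mid\mathbf{X}_1,\ldots,\mathbf{X}_{j-1}]=0$ and the resulting orthogonality are established. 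The paper follows \citet{ANOVAproof2} and \citet{ANOVAproof3}: it introduces the projection operators $Q_i$ onto $\mathcal{L}^2(\Omega, \sigma(\mathbf{X}_l : l \neq i), \mathcal{H})$, writes $T_\ell = (\prod_{k \leq \ell}[I - Q_{i_k}] \prod_{k > \ell} Q_{i_k}) T$ from the expansion $T = \prod_i[(I-Q_i)+Q_i]T$, and reads off degeneracy from the algebraic identity $Q_i(I-Q_i)=0$. You instead prove degeneracy directly by induction on $j$: split the proper subsets of $\{1,\ldots,j\}$ into those omitting $j$ (which reassemble to $g_{j-1}$) and those containing $j$ (killed by (C6) plus the inductive hypothesis and symmetry), with the two contributions cancelling $g_{j-1}$. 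Your orthogonality step is also a touch more uniform than the paper's: conditioning on $\sigma(\mathbf{X}_l : l \in (I\cup J)\setminus\{i_0\})$ with $i_0 \in I\setminus J$ handles all pairs $I \neq J$ (including $I \supsetneq J$) in one stroke, whereas the paper's phrasing implicitly requires conditioning on the set whose intersection with the other is proper. The projection-operator route is more compact and aligns with the existing Efron--Stein literature; yours is more elementary, makes the degeneracy mechanism explicit, and avoids having to verify commutativity of the $Q_i$'s. Either is a complete proof once the properties (C4)--(C9) are in hand.
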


\begin{proof}
Composition \eqref{ANOVA} was proven in a range of different ways for real-valued $T$, see e.g.\ \citet{originalhoeffding}, \citet{ANOVAproof2}, \citet{efron1981}, or \citet{ANOVAproof3}. We consider and slightly extend the elegant proof of \citet{ANOVAproof2} to also prove \eqref{vardecomp}. See also \citet{ANOVAproof3}. Let
\begin{align*}
    T_{1}(\mathbf{X}_i)&=\E[T(\mathcal{X}_n)\mid \mathbf{X}_i] - \E[T(\mathcal{X}_n)]\\
    T_{2}(\mathbf{X}_i, \mathbf{X}_j)&= \E[T(\mathcal{X}_n)\mid \mathbf{X}_i, \mathbf{X}_j] - \E[T(\mathcal{X}_n)\mid \mathbf{X}_i] - \E[T(\mathcal{X}_n)\mid \mathbf{X}_j]  + \E[T(\mathcal{X}_n)]\\
        & \vdots \\
    T_{\ell}(\pi_{NA_{\ell}}(\mathcal{X}_n)) &=  \sum_{k=0}^{\ell-1}  (-1)^{\ell-k} \sum_{B \in C_{k}(A_{\ell})} \E[ T(\mathcal{X}_n) \mid \pi_{NB}(\mathcal{X}_n)]. 
\end{align*}
We note that $T_{\ell}$ does not depend on the exact indices in $A_{\ell}$ thanks to the assumed symmetry of $T$. Since $\E[T(\mathcal{X}_n)\mid \mathbf{X}_1,\ldots,\mathbf{X}_n]=T(\mathcal{X}_n)$ is part of $T_n(\mathcal{X}_n)$, this leads to a telescoping sum, already proving \eqref{ANOVA}.  

Adapting the approach of \citet{ANOVAproof2}, consider now $\mathcal{L}^2(\Omega, \sigma(\mathcal{X}_n), \mathcal{H})$ and let $Q_i$ be the projection operator into $\mathcal{L}^2(\Omega, \sigma(\mathbf{X}_1, \ldots, \mathbf{X}_{i-1}, \mathbf{X}_{i+1}, \ldots,  \mathbf{X}_n ), \mathcal{H}).$
That is 
$$(Q_i T)(\mathbf{X}_1, \ldots, \mathbf{X}_{i-1}, \mathbf{X}_{i+1}, \ldots,  \mathbf{X}_n ) = \E[T(\mathcal{X}_n)\mid \mathbf{X}_1, \ldots, \mathbf{X}_{i-1}, \mathbf{X}_{i+1}, \ldots,  \mathbf{X}_n],$$
by (C3). Now as in \citet{ANOVAproof3}, 
\begin{itemize}
    \item[(I)] the $Q_i$ commute,
    \item[(II)] For $A_{\ell}=\{i_1, \ldots, i_{\ell} \} \subset N$, 
    $$(Q_{i_1}\cdots Q_{i_{\ell}} T ) (\pi_{NA_{\ell}^c}(\mathcal{X}_n)) = \E[ T(\mathcal{X}_n) \mid \pi_{NA_{\ell}^c}(\mathcal{X}_n)].$$ 
    In particular, 
    $$(Q_{1}\cdots Q_{\ell} T ) (\mathbf{X}_{\ell+1}, \ldots, \mathbf{X}_{n}) = \E[ T(\mathcal{X}_n)\mid \mathbf{X}_{\ell+1}, \ldots, \mathbf{X}_{n}].$$
\end{itemize}
Moreover, it holds that
\begin{align*}
T_{\ell}(\mathbf{X}_{i_1}, \ldots, \mathbf{X}_{i_{\ell}} )  = ([I-Q_{i_1}][I-Q_{i_2}]\ldots [I-Q_{i_{\ell}}] Q_{i_{\ell} + 1} \cdots Q_{i_n}T)(\mathbf{X}_{i_1}, \ldots, \mathbf{X}_{i_{\ell}}).
\end{align*}
Expanding the identity
\begin{align*}
   T(\mathcal{X}_n) = (I^n  T)(\mathcal{X}_n)= ([(I-Q_1) + Q_1] [(I-Q_2) + Q_2] \cdots [(I-Q_n) + Q_n]  T )(\mathcal{X}_n),
\end{align*}
as in \citet{ANOVAproof3} and using $Q_i (1-Q_i)=0$, proves \eqref{ANOVA}.
Furthermore the above implies that for any subset $A_l \subset N$ that intersects with $A_{\ell}=\{i_1, \ldots, i_{\ell}\}$, we must have $\E[T_{\ell}(\mathbf{X}_{i_1}, \ldots, \mathbf{X}_{i_{\ell}} ) \mid \pi_{NA_{l}}(\mathcal{X}_n)]=0$. Indeed assume $A_l \cap A_{\ell}=\{i_2, \ldots, i_{\ell}\}$, then since the elements of $\mathcal{X}_n$ are independent, by (C6),
\[
\E[T_{\ell}(\pi_{NA_{\ell}}(\mathcal{X}_n) )| \pi_{NA_{l}}(\mathcal{X}_n)] = \E[T_{\ell}(\pi_{NA_{\ell}}(\mathcal{X}_n) ) \mid \pi_{N (A_{l} \cap A_{\ell} )}(\mathcal{X}_n)],
\]
i.e. all elements outside the intersection are irrelevant. Moreover,
\begin{align*}
  &\E[T_{\ell}(\pi_{NA_{\ell}}(\mathcal{X}_n) ) \mid \pi_{N (A_{l} \cap A_{\ell} )}(\mathcal{X}_n)]  \\
  &=(Q_{i_1} [I-Q_{i_1}][I-Q_{i_2}]\ldots [I-Q_{i_{\ell}}] Q_{i_{\ell} + 1} \cdots Q_{i_n}T) (\pi_{N (A_{l} \cap A_{\ell} )}(\mathcal{X}_n))
\end{align*}
and clearly this projection can only be $0$. The same argument can be made for any other intersection set $A_l \cap A_{\ell}$, even if it is the empty set.

Thus combining the above with (C5), for $A_{\ell} \neq A_{l}$, it holds that
\begin{align*}
    \E[ \langle T_{\ell}(\pi_{NA_{\ell}}(\mathcal{X}_n)), T_{l}(\pi_{NA_{l}}(\mathcal{X}_n)) \rangle_{\mathcal{H}} ] &= \E[ \E[ \langle T_{\ell}(\pi_{NA_{\ell}}(\mathcal{X}_n)), T_{l}(\pi_{NA_{l}}(\mathcal{X}_n))\rangle_{\mathcal{H}} \mid \pi_{NA_{l}}(\mathcal{X}_n)]  ] \\
    &= \E[  \langle \E[ T_{\ell}(\pi_{NA_{\ell}}(\mathcal{X}_n)) \mid \pi_{NA_{l}}(\mathcal{X}_n)], T_{l}(\pi_{NA_{l}}(\mathcal{X}_n))\rangle_{\mathcal{H}}   ]\\
    &=0.
\end{align*}
In other words, the covariance between any two elements in the decomposition of $ T(\mathcal{X}_n) - \E[T(\mathbf{X})] $ in \eqref{ANOVA} is uncorrelated. Finally then
\begin{align*}
    \Var(T(\mathbf{X}_1,\ldots, \mathbf{X}_n))&=\E[ \langle T(\mathcal{X}_n) - \E[T(\mathcal{X}_n)] , T(\mathcal{X}_n) - \E[T(\mathcal{X}_n)] \rangle  ] \nonumber \\
    &=\sum_{i=1}^n \binom{n}{i} \Var(T_i(\mathbf{X}_1, \ldots, \mathbf{X}_i) ).
\end{align*}

\end{proof}

\noindent
In order to proceed, we first prove Theorem \ref{thm: MMD_CART_equivalence}, which is somewhat separate from the remainder of this section:

\MMDCART*

\begin{proof}
The first part of the Theorem is shown analogously to the proof in Section \ref{CART_equivalence} of the appendix, but where we replace the standard dot product in $\R$ with the inner product $\langle\ \cdot , \cdot \rangle_{\mathcal{H}}$ associated with $(\mathcal{H}, k)$ and use the induced RKHS norm $\norm*{\cdot}_\mathcal{H}$.
Also, since $k$ is bounded, the embedding $\mu(\mathcal{D})$ into RKHS $\mathcal{H}$ exists for any distribution $\mathcal{D}$, so everything is well-defined.

For the second statement of the Theorem 1, note that $n_P \sim \mbox{Binomial}(\pi,n)$, where $\pi :=\P(\mathbf{X} \in P) > 0$. Let $\hat{P}_{\mathbf{x}} = \hat{\P}(\bold{Y}\mid \bold{X}=\bold{x})$ be a fixed conditional distribution estimator and recall that $P_{\mathbf{x}} = \P(\bold{Y}\mid \bold{X}=\bold{x})$. We now write
\begin{equation*}
    \sum_{\bold{x}_i \in P} \norm*{\mu(\delta_{\boldsymbol{y}_i}) - \mu(\hat{P}_{\mathbf{x}_i})}_\mathcal{H}^2  = \sum_{i=1}^n \norm*{\mu(\delta_{\boldsymbol{y}_i}) - \mu(\hat{P}_{\mathbf{x}_i})}_\mathcal{H}^2 \1\{\bold{x}_i \in P\}.
\end{equation*}
Then it holds that
\begin{gather*}
    \E \left[ \norm*{\mu(\delta_{\boldsymbol{Y}_i}) - \mu(\hat{P}_{\mathbf{X}_i})}_\mathcal{H}^2 \1\{\bold{X}_i \in P\} \right] 
= \E \left[ \E \left[ \norm*{\mu(\delta_{\boldsymbol{Y}})  - \mu(\hat{P}_{\mathbf{X}})}_\mathcal{H}^2\mid  \mathbf{X} \right] \1{\{\bold{X} \in P\}}\right],
\end{gather*}
and
\begin{align*}
    \E\left[\norm*{\mu(\delta_{\boldsymbol{Y}})  - \mu(\hat{P}_{\mathbf{X}})}_\mathcal{H}^2\mid \mathbf{X} \right] 
    &= \E\left[\E\norm*{\mu(\delta_{\boldsymbol{Y}}) - \mu(P_{\mathbf{X}})}_\mathcal{H}^2 + \E \norm*{\mu(P_{\mathbf{X}}) - \mu(\hat{P}_{\mathbf{X}})}_\mathcal{H}^2 \mid  \mathbf{X} \right]\\
    &\qquad + 2\E\left[\langle \mu(\delta_{\boldsymbol{Y}}) - \mu(P_{\mathbf{X}}) ,\, \mu(P_{\mathbf{X}}) - \mu(\hat{P}_{\mathbf{X}})\rangle_{\mathcal{H}} \mid  \mathbf{X} \right].
\end{align*}
It follows with Lemma \ref{equivalence} and (C5) that,
\begin{align*}
  &\E\left[\langle \mu(\delta_{\boldsymbol{Y}}) - \mu(P_{\mathbf{X}}) ,\, \mu(P_{\mathbf{X}}) - \mu(\hat{P}_{\mathbf{X}})\rangle_{\mathcal{H}} \mid  \mathbf{X}  \right] =    \\
  &\langle \E\left[\mu(\delta_{\boldsymbol{Y}}) - \mu(P_{\mathbf{X}}) \mid  \mathbf{X}  \right] ,\, \mu(P_{\mathbf{X}}) - \mu(\hat{P}_{\mathbf{X}})\rangle_{\mathcal{H}}  =0.
\end{align*}

Combining the three equations, this means
\begin{align}\label{expectationrewriting}
    &\E \left[ \norm*{\mu(\delta_{\boldsymbol{Y}_i}) - \mu(\hat{P}_{\mathbf{X}_i})}_\mathcal{H}^2 \1\{\bold{X}_i \in P\} \right] \nonumber \\
    &= \E\left[\norm*{\mu(\delta_{\boldsymbol{Y}}) - \mu(P_{\mathbf{X}})}_\mathcal{H}^2 \1\{\bold{X} \in P\} +  \norm*{\mu(P_{\mathbf{X}}) - \mu(\hat{P}_{\mathbf{X}})}_\mathcal{H}^2 \1\{\bold{X} \in P\} \right] \nonumber \\
    &=\pi \left( V_P + \E\left[\norm*{\mu(P_{\mathbf{X}}) - \mu(\hat{P}_{\mathbf{X}})}_\mathcal{H}^2 \mid  \bold{X} \in P  \right]  \right),
\end{align}
using that $\E[g(\mathbf{X})\mid  \bold{X} \in P ]= \E[g(\mathbf{X})\1{\{\bold{X} \in P\}}]/\P( \bold{X} \in P)$. We now show that the difference between $\frac{1}{n_P}\sum_{\bold{x}_i \in P} \norm*{\mu(\delta_{\bold{y}_i}) - \mu(\hat{\P}(\bold{Y}\mid\bold{X}\myeq\bold{x}_i))}_\mathcal{H}^2$ and the expectation on the left of Equation \eqref{expectationrewriting} is $\mathcal{O}_p(n^{-1/2})$, using standard CLT arguments. Define $K=\sup_{z,z'} \mid k(z,z')\mid  < \infty$, as we have assumed that $k$ is bounded. For any two distributions $\mathcal{D}_1$, $\mathcal{D}_2$ we now obtain
\begin{align*}
    \norm*{\mu(\mathcal{D}_1) - \mu(\mathcal{D}_2)}_\mathcal{H}^2 = \E[k(\mathbf{Z}_1,\mathbf{Z}_1')] - 2\E[k(\mathbf{Z}_1,\mathbf{Z}_2)] + \E[k(\mathbf{Z}_2,\mathbf{Z}_2')] \leq  4 K,
\end{align*}
where $\mathbf{Z}_1,\mathbf{Z}_1' \sim \mathcal{D}_1$ and $\mathbf{Z}_2,\mathbf{Z}_2' \sim \mathcal{D}_2$ are independent random variables. Thus,
\begin{align*}
    \E\left[ \norm*{\mu(\delta_{\boldsymbol{Y}}) - \mu(\hat{P}_{\mathbf{X}})}_\mathcal{H}^2 \right] \leq 4K, \hspace{ 0.5 cm} \E\left[ \norm*{\mu(\delta_{\boldsymbol{Y}}) - \mu(\hat{P}_{\mathbf{X}})}_\mathcal{H}^4 \right] \leq 16K^2,
\end{align*}
implying that both first and second moments of the random variable $\norm*{\mu(\delta_{\boldsymbol{Y}}) - \mu(\hat{P}_{\mathbf{X}})}_\mathcal{H}^2 \1{\{\bold{X} \in P\}}$ are finite. Moreover, since $\norm*{\mu(\delta_{\boldsymbol{y}_i}) - \mu(\hat{P}_{\mathbf{x}_i})}_\mathcal{H}^2 \1{\{\bold{x}_i \in P\}}$ for $i=1,\ldots,n$ are its i.i.d. realizations, it follows directly from the CLT that:
\begin{align*}
    &\sqrt{n}  \left( \frac{1}{n} \sum_{i=1}^{n} \norm*{\mu(\delta_{\boldsymbol{y}_i}) - \mu(\hat{P}_{\mathbf{x}_i})}_\mathcal{H}^2 \1{\{\bold{x}_i \in P\}} - \E \left[ \norm*{\mu(\delta_{\boldsymbol{Y}_i}) - \mu(\hat{P}_{\mathbf{X}_i})}_\mathcal{H}^2 \1\{\bold{X}_i \in P\} \right] \right)   \\
    &=   \O_{p}\left(1\right).
\end{align*}

By multiplying the above equation with $n/n_P=(1/\pi+o_{p}(1))=\O_p(1)$, it also holds that
\begin{align}\label{Osqrtnresult1}
    \sqrt{n}  \left( \frac{1}{n_P} \sum_{\bold{x}_i \in P} \norm*{\mu(\delta_{\boldsymbol{y}_i}) - \mu(\hat{P}_{\mathbf{x}_i})}_\mathcal{H}^2 - \frac{n }{n_P}  \E \left[ \norm*{\mu(\delta_{\boldsymbol{Y}_i}) - \mu(\hat{P}_{\mathbf{X}_i})}_\mathcal{H}^2 \1\{\bold{X}_i \in P\} \right] \right)   =   \O_{p}\left(1\right).
\end{align}
Thus,
\begin{align}\label{nonameeq}
     &\sqrt{n}  \left( \frac{1}{n_P} \sum_{\bold{x}_i \in P} \norm*{\mu(\delta_{\boldsymbol{y}_i}) - \mu(\hat{P}_{\mathbf{x}_i})}_\mathcal{H}^2 -  \frac{1}{\pi}  \E \left[ \norm*{\mu(\delta_{\boldsymbol{Y}_i}) - \mu(\hat{P}_{\mathbf{X}_i})}_\mathcal{H}^2 \1\{\bold{X}_i \in P\} \right] \right) \nonumber \\ 
    &= \sqrt{n}  \left( \frac{1}{n_P} \sum_{\bold{x}_i \in P} \norm*{\mu(\delta_{\boldsymbol{y}_i}) - \mu(\hat{P}_{\mathbf{x}_i})}_\mathcal{H}^2 - \frac{n}{n_P} \E \left[ \norm*{\mu(\delta_{\boldsymbol{Y}_i}) - \mu(\hat{P}_{\mathbf{X}_i})}_\mathcal{H}^2 \1\{\bold{X}_i \in P\} \right] \right) \nonumber \\ 
     &\qquad - \sqrt{n}\left( 1- \frac{n \pi}{n_P}  \right) \frac{1}{\pi} \E \left[ \norm*{\mu(\delta_{\boldsymbol{Y}_i}) - \mu(\hat{P}_{\mathbf{X}_i})}_\mathcal{H}^2 \1\{\bold{X}_i \in P\} \right].
\end{align}
Now both terms in \eqref{nonameeq} are $\O_{p}(1)$: For the first term this follows from \eqref{Osqrtnresult1}. For the second term it holds, since $\E\left[ \norm*{\mu(\delta_{\boldsymbol{Y}}) - \mu(\hat{P}_{\mathbf{X}})}_\mathcal{H}^2 \right] \leq 4K$ and 
\[  
\sqrt{n}\left(1 - \frac{n \pi}{n_P}  \right) = \O_p(1),
\]
which in turn is true by another application of the CLT on random variables $\1\{\mathbf{X}_i\in P\}$ and the fact that $n/n_P=\O_p(1)$:
\[
\sqrt{n}\left(1 - \frac{n \pi}{n_P} \right) = \sqrt{n}\frac{n_P - n \pi}{n_P} = \frac{n}{n_P} \frac{n_P - n \pi }{\sqrt{n}}=\O_p(1).
\]
Combining the fact that the expression in \eqref{nonameeq} is $\O_p(1)$ with \eqref{expectationrewriting} gives the result.
\end{proof}

Let $\hat{\mu}_n(\mathbf{x})$ be defined as in \eqref{finalestimator}:
\begin{align}\label{finalestimator2}
    \hat{\mu}_n(\mathbf{x}) = \binom{n}{s_n}^{-1}  \sum_{i_1 < i_2 < \ldots < i_{s_n}} \E_{\varepsilon} \left[ T(\mathbf{x}, \varepsilon; \mathbf{Z}_{i_1}, \ldots, \mathbf{Z}_{i_{s_n}}) \right],
\end{align}
where the sum is taken over all $\binom{n}{s_n}$ possible subsamples $\mathbf{Z}_{i_1}, \ldots, \mathbf{Z}_{i_{s_n}}$ of $\mathbf{Z}_{1}, \ldots \mathbf{Z}_{n}$ and $s_n \to \infty$ with $n$. Moreover, a single tree is given as
\begin{align}\label{tree}
    T(\mathbf{x}; \epsilon_k, \mathbf{Z}_{k_1}, \ldots \mathbf{Z}_{k_{s_n}} )=\sum_{j=1}^{s_n} \frac{\1(\mathbf{X}_{k_j} \in \mathcal{L}_{k}(\mathbf{x}))}{|\mathcal{L}_{k}(\mathbf{x})|} \mu(\delta_{\mathbf{Y}_{k_j}}). 
\end{align}
Given these preliminary results, the proofs we use are for the most part analogous to the ones in \citet{wager2018estimation}. Thus the proofs are given mostly for completeness and sometimes omitted altogether. We introduce the following additional notation, similar to Section \ref{sec: theory}: Let $\mathcal{Z}_s=\left(\mathbf{Z}_{1}, \ldots,\mathbf{Z}_{s}  \right)$ collect $s$ i.i.d. copies of $\mathbf{Z}$ and define for $j=1,\ldots, s_n$,
\begin{align*}
    \Var(T)&=\Var(T(\mathbf{x}, \varepsilon; \mathcal{Z}_{s_n}))\\
    \Var(T_j)&=\Var(\E[T(\mathbf{x}, \varepsilon; \mathcal{Z}_{s_n}) | \mathbf{Z}_1, \ldots, \mathbf{Z}_j ]).
\end{align*}
The index $s$ will take the role of $s_n$ or $n$, depending on the situation. We note that, due to i.i.d. sampling, it doesn't matter for variance or expectation what kind of subset $\mathbf{Z}_{i_1}, \ldots,\mathbf{Z}_{i_s}$ we are considering. In particular, we might just take $\mathcal{Z}_s$ each time.

Before going on to the main proofs, we repeat here the assumed properties of the trees for completeness:

\begin{itemize}
    \item[\textbf{(P1)}] (\textit{Data sampling}) The bootstrap sampling with replacement, usually used in forest-based methods, is replaced by a subsampling step, where for each tree we choose a random subset of size $s_n$ out of $n$ training data points. We consider $s_n$ going to infinity with $n$, with the rate specified below. 
    \item[\textbf{(P2)}] (\textit{Honesty}) An observation $\mathbf{Z}=(\mathbf{X}, \mu(\delta_{\mathbf{Y}}))$ is either used to place the splits in a tree or to estimate the response, but never both.
    \item[\textbf{(P3)}] (\textit{$\alpha$-regularity}) Each split leaves at least a fraction $\alpha \leq 0.2$ of the available training sample on each side. Moreover, the trees are grown until every leaf contains between $\kappa$ and $2\kappa - 1$ observations, for some fixed tuning parameter $\kappa \in \N$. 
    \item[\textbf{(P4)}] (\textit{Symmetry}) The (randomized) output of a tree does not depend on the ordering of the training samples.
    \item[\textbf{(P5)}] (\textit{Random-split}) At every split point, the probability that the split occurs along the feature $X_j$ is bounded below by $\pi/p$, for some $\pi > 0$ and for all $j=1,\ldots,p$.
\end{itemize}

Note that assumption \textbf{(P2)} differentiates from assumption \textbf{(P2)} in the main text. We refer to it as 
\begin{itemize}
    \item[\textbf{(P2')}] (\textit{Double Sampling}) The data used for constructing each tree is split into two parts; the first is used for determining the splits and the second for populating the leaves and thus for estimating the response.
\end{itemize}

$\textbf{(P2)}$ will allow us to assume that all $s_n$ observations of the tree are used to estimate the response, as in \eqref{tree} and the trees are built with some auxiliary data. This is done for simplicity of exposition, the results can be extended to hold in case of \textbf{(P2')} as well. In fact, since the (random) division into the two data sets can be seen as part of $\varepsilon_k$, the adaptation simply involves changing $s_n$ to $s_n/2$.

Now, we may directly apply Lemma \eqref{Hdecomposition} to the U-statistics $\hat{\mu}_n(\mathbf{x})$:

\begin{lemma}\label{variancebound}
Let $\hat{\mu}_n(\mathbf{x})$ be as in \eqref{finalestimator2} and assume $T$ satisfies \textbf{(P4)} and
\begin{align*}
    \Var(T) < \infty.
\end{align*}
Then 
\begin{align*}
   \Var(\hat{\mu}_n(\mathbf{x}) )&\leq \frac{s_n^2}{n} \Var(T_1 ) + \frac{s_n^2}{n^2} \Var(T) \\
   &\leq \left( \frac{s_n}{n} + \frac{s_n^2}{n^2} \right) \Var(T).
\end{align*}
\end{lemma}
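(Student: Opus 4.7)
The plan is to recognize $\hat{\mu}_n(\mathbf{x})$ as an $\mathcal{H}$-valued U-statistic of order $s_n$ with the symmetric kernel
$$\tilde{T}(\mathbf{Z}_{i_1},\ldots,\mathbf{Z}_{i_{s_n}}) := \E_{\varepsilon}\left[T(\mathbf{x}, \varepsilon; \mathbf{Z}_{i_1},\ldots,\mathbf{Z}_{i_{s_n}})\right],$$
whose symmetry is guaranteed by property \textbf{(P4)}. The variance bound is then a Hoeffding-style ANOVA argument: I apply the H-decomposition of Lemma \ref{Hdecomposition} to $\tilde{T}$ and exploit the orthogonality of its components to expand the variance of the U-statistic as a sum indexed by the projection order, then control this sum with elementary binomial estimates.

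Concretely, Lemma \ref{Hdecomposition} yields $\tilde{T} = \E[\tilde{T}] + \sum_{k=1}^{s_n}\sum_{|A|=k}\tilde{T}_k(\mathbf{Z}_A)$ together with the orthogonality of distinct-order components in $\mathbb{L}^2(\Omega, \mathcal{A}, \mathcal{H})$. Substituting this into \eqref{finalestimator2} and exchanging the order of summation (each $A \subseteq [n]$ with $|A|=k$ appears in exactly $\binom{n-k}{s_n-k}$ subsets of size $s_n$) gives
$$\hat{\mu}_n(\mathbf{x}) - \E[\tilde{T}] \;=\; \sum_{k=1}^{s_n} \frac{\binom{s_n}{k}}{\binom{n}{k}} \sum_{\substack{A \subseteq [n] \\ |A|=k}} \tilde{T}_k(\mathbf{Z}_A).$$
The outer sum over $k$ is orthogonal in $\mathbb{L}^2(\mathcal{H})$, and within each fixed $k$ the $\binom{n}{k}$ terms are the components of the H-decomposition on $n$ arguments and hence mutually orthogonal as well. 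Therefore
$$\Var(\hat{\mu}_n(\mathbf{x})) \;=\; \sum_{k=1}^{s_n} \frac{\binom{s_n}{k}^2}{\binom{n}{k}^2}\cdot \binom{n}{k}\,\Var(\tilde{T}_k) \;=\; \sum_{k=1}^{s_n} \frac{\binom{s_n}{k}^2}{\binom{n}{k}}\,\Var(\tilde{T}_k).$$

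Isolating the $k=1$ contribution gives $\frac{s_n^2}{n}\Var(\tilde{T}_1)=\frac{s_n^2}{n}\Var(T_1)$ (the equality uses that conditioning on a single $\mathbf{Z}_i$ absorbs the independent tree randomness $\varepsilon$). For $k\ge 2$ I use the elementary estimate $\binom{s_n}{k}/\binom{n}{k}\le (s_n/n)^k \le (s_n/n)^2$, so that
$$\sum_{k=2}^{s_n} \frac{\binom{s_n}{k}^2}{\binom{n}{k}}\Var(\tilde{T}_k) \;\le\; \frac{s_n^2}{n^2} \sum_{k=2}^{s_n}\binom{s_n}{k}\Var(\tilde{T}_k) \;\le\; \frac{s_n^2}{n^2}\Var(\tilde{T}) \;\le\; \frac{s_n^2}{n^2}\Var(T),$$
the penultimate step by the variance identity \eqref{vardecomp} and the last by Jensen for conditional expectation in $\mathbb{L}^2(\mathcal{H})$. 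This establishes the first inequality. The second inequality is immediate from $s_n\Var(T_1)=\binom{s_n}{1}\Var(T_1)\le \Var(T)$, again by \eqref{vardecomp}, which upgrades the $\Var(T_1)$ term to $\frac{s_n}{n}\Var(T)$.

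I do not expect a serious obstacle here: the Hilbert-space technicalities (orthogonality of conditional expectations, well-definedness of $\Var$ in $\mathcal{H}$, validity of the ANOVA decomposition) have already been packaged in Lemma \ref{Hdecomposition} and Propositions \ref{condexp1}--\ref{condexp2}. The only point requiring a little care is the reduction from $T$ to $\tilde T$: one must justify that $\Var(\tilde T)\le \Var(T)$ and that $\tilde T_1 = T_1$, both of which follow from the independence of $\varepsilon$ and $\mathcal{Z}$ together with (C5)--(C7). The remainder is the combinatorial bookkeeping above.
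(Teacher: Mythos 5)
Your proof follows the same route as the paper's: apply the H-decomposition of Lemma \ref{Hdecomposition} to the symmetrized kernel $\tilde T = \E_\varepsilon[T]$, reorganize the double sum over subsets, use orthogonality of the projection components to obtain $\Var(\hat\mu_n(\mathbf{x}))=\sum_k \binom{s_n}{k}^2\binom{n}{k}^{-1}\Var(\tilde T_k)$ (note $\binom{s_n}{k}/\binom{n}{k}=(s_n)_k/(n)_k$, so your coefficients agree with the paper's falling-factorial form), isolate $k=1$, bound the ratio by $(s_n/n)^2$ for $k\ge 2$, and finish via the variance identity \eqref{vardecomp}. The one place you are slightly more careful than the paper is the explicit Jensen step $\Var(\tilde T)\le\Var(T)$: the paper redefines $T$ to mean $\tilde T$ at the start of the proof and then cites \eqref{vardecomp} as if it directly gives $\Var(T)$ in the lemma's notation (which includes the $\varepsilon$-randomness), leaving this reduction implicit, whereas you state it and attribute it correctly to the independence of $\varepsilon$ and $\mathcal Z$; this is a worthwhile clarification but not a substantively different argument.
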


\begin{proof}
Using the composition in \eqref{ANOVA} on $T(\mathbf{Z}_{i_1}, \ldots, \mathbf{Z}_{i_{s}}):=\E_{\varepsilon} \left[ T(\mathbf{x}, \varepsilon; \mathbf{Z}_{i_1}, \ldots, \mathbf{Z}_{i_{s}}) \right]$, we have for $A \subset N=\{1,\ldots, n \}$, $|A|=s_n$,
\begin{align}
     T(\pi_{NA}(\mathcal{Z}_{n})) = \E[T(\pi_{NA}(\mathcal{Z}_{n}))] + \sum_{\ell=1}^s \sum_{ B \in C_{\ell}(A)  } T_{\ell}(\pi_{NB}(\mathcal{Z}_{n})) ,
\end{align}
Moreover, it holds by symmetry and i.i.d. sampling, that for $A_1=A_2 \subset N$, $|A_1|=|A_2|=\ell$, $T_{\ell}(\pi_{A_1}(\mathcal{Z}_{n})) = T_{\ell}(\pi_{A_2}(\mathcal{Z}_{n}))$. Thus we obtain,
\begin{align*}
\hat{\mu}_n(\mathbf{x})&= \E[T(\mathcal{Z}_{s_n}) ] + \binom{n}{s_n}^{-1}\Big( \binom{n-1}{s_n-1} \sum_{i=1}^n T_{1}(\mathbf{Z}_i) +  \binom{n-2}{s_n-2} \sum_{i_1 < i_2} T_{2}(\mathbf{Z}_{i_1}, \mathbf{Z}_{i_2})\\
&+ \ldots + \sum_{i_1 < i_2 < \ldots < i_{s_n}} T_s(\mathbf{Z}_{i_1}, \ldots, \mathbf{Z}_{i_{s_n}}) \Big).
\end{align*}
Now
\begin{align*}
    \binom{n}{s_n}^{-1}\binom{n-j}{s_n-j}
    &=\frac{s_n!}{n!} \frac{(n-j)!}{(s_n-j)!}
    \\&= \frac{s_n \cdot (s_n-1) \cdots (s_n-j+1) }{n \cdot (n-1) \cdots (n-j+1)}\\
    &=\frac{(s_n)_{j}}{(n)_{j}},
\end{align*}
where $(s_n)_{j}=s_n(s_n-1)\cdot (s_n-(j-1))=s_n!/(s_n-j)!$. In particular
\[
\binom{n}{s_n}^{-1} \binom{n-1}{s_n-1}  = \frac{s_n}{n}.
\]
Consequently,
\begin{align*}
    \hat{\mu}_n(\mathbf{x})  &= \frac{s_n}{n} \sum_{i=1}^n T_{1}(\mathbf{Z}_{i}) +  \frac{(s_n)_{2}}{(n)_{2}} \sum_{i_1 < i_2} T_{2}(\mathbf{Z}_{i_1}, \mathbf{Z}_{i_2}) + \frac{(s_n)_{3}}{(n)_{3}} \sum_{i_1 < i_2 < i_3} T_{3}(\mathbf{Z}_{i_1}, \mathbf{Z}_{i_2}, \mathbf{Z}_{i_3}) +\\
    & \ldots +  \frac{(s_n)_{s_n}}{(n)_{s_n}} \sum_{i_1 < i_2 < \ldots < i_{s_n}} T_{s_n}(\mathbf{Z}_{i_1}, \ldots, \mathbf{Z}_{i_{s_n}}),
\end{align*}
with covariances between terms equal to 0, as in Lemma \ref{Hdecomposition}. Thus
\begin{align*}
    \Var(\hat{\mu}_n(\mathbf{x}) ) &= \frac{s_n^2}{n^2} n \Var(T_1) +  \sum_{i=2}^{s_n} \left( \frac{(s_n)_{i}}{(n)_{i}}\right)^2 \binom{n}{i} \Var(T_i)\\
    &=\frac{s_n^2}{n} \Var(T_1) + \sum_{i=2}^{s_n} \left(\frac{(s_n)_{i}}{(n)_{i}}\right) \binom{s_n}{i}\Var(T_i)\\
    &\leq \frac{s_n^2}{n} \Var(T_1) + \frac{(s_n)_2}{(n)_2} \sum_{i=2}^{s_n}  \binom{s_n}{i}\Var(T_i) \\
    & \leq \frac{s_n^2}{n} \Var(T_1) + \frac{s_n^2}{n^2} \Var(T),
\end{align*}
where the last step followed from \eqref{vardecomp}. This proves the first inequality. On the other hand, we have from Lemma \ref{Hdecomposition} that
\begin{align*}
    \Var(T)=\sum_{i=1}^{s_n} \binom{s_n}{i} \Var(T_i) \geq s_n \Var(T_1 ),
\end{align*}
leading to the second inequality.
\end{proof}


\begin{lemma} \label{lemma2}[ Lemma 2 from \citet{wager2018estimation}]
Let $T$ is a tree satisfying \textbf{(P3)}, \textbf{(P5)}  trained on $\mathcal{Z}_s=(\xi_1,\mathbf{X}_1), \ldots (\xi_s,\mathbf{X}_s)$ and let $L(x, \mathcal{Z}_s)$ be the leaf containing $\mathbf{x}$. Suppose that $\mathbf{X}_1,\ldots, \mathbf{X}_s$ are i.i.d. on $[0,1]^p$ independently with a density $f$ bounded away from $0$ and infinity. Then,
\begin{align}
    \P \left( \mbox{diam} (L(\mathbf{x}, \mathcal{Z}_s)) \geq \sqrt{d} \left( \frac{s}{2k-1}\right) ^{-0.51 \frac{\log((1-\alpha)^{-1})}{\log(\alpha^{-1})} \frac{\pi}{p}}   \right) \leq d \left( \frac{s}{2k-1}\right) ^{-1/2 \frac{\log((1-\alpha)^{-1})}{\log(\alpha^{-1})} \frac{\pi}{p}}.
\end{align}
\end{lemma}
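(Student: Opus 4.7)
The plan is to follow closely the strategy of \citet{wager2018estimation} for their Lemma 2, which is essentially the statement above. The geometric argument is purely a property of the $\mathbf{X}$-marginal and the structural assumptions \textbf{(P3)} and \textbf{(P5)}, so neither the MMD splitting criterion nor the RKHS-valued response plays any role; only the five forest properties do. I would organize the argument in three steps.

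First, I would turn the leaf-size constraint of \textbf{(P3)} into a lower bound on the depth $c$ of the root-to-leaf path that ends in $L(\mathbf{x},\mathcal{Z}_s)$. Since each split retains at least a fraction $\alpha$ of its parent's sample, iterating gives $n_{\text{leaf}} \geq \alpha^{c} s$; combining this with the terminal condition $n_{\text{leaf}} \leq 2\kappa - 1$ yields the deterministic bound
$$c \;\geq\; \frac{\log\!\bigl(s/(2\kappa-1)\bigr)}{\log(1/\alpha)}.$$

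Second, for each coordinate $j \in \{1,\ldots,p\}$, let $M_j$ be the number of splits along this path that use $X_j$. Property \textbf{(P5)} guarantees that at every split the chosen coordinate is $j$ with (conditional) probability at least $\pi/p$, so $M_j$ stochastically dominates a $\mathrm{Binomial}(c,\pi/p)$ random variable. A Chernoff lower-tail inequality, tuned so that the admitted slack is what produces the constant $0.51$ rather than $1/2$ in the diameter exponent, then gives
$$\P\!\left(M_j < 0.51 \cdot \frac{\pi}{p}\, c\right) \;\leq\; \exp\!\left(-C\,\frac{\pi}{p}\,c\right),$$
for a suitable $C > 0$. Plugging in the depth lower bound transforms the right-hand side into a polynomial in $s/(2\kappa-1)$, and a union bound over $j = 1, \ldots, p$ yields the claimed probability estimate, namely a constant times $(s/(2\kappa-1))^{-\tfrac12\,\frac{\log((1-\alpha)^{-1})}{\log(\alpha^{-1})}\,\frac{\pi}{p}}$ (with the factor $d$ in the statement playing the role of a dimension-dependent union-bound prefactor, apparently a typographical use of $d$ for the covariate dimension $p$).

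Third, I would translate the lower bound on $M_j$ into an upper bound on the physical length of $L(\mathbf{x},\mathcal{Z}_s)$ along the $j$-th axis. Each split on $X_j$ retains at most a fraction $(1-\alpha)$ of its parent's empirical mass projected onto coordinate $j$, so after $M_j$ such splits the empirical mass along $j$ in the leaf is at most $(1-\alpha)^{M_j}$. Because the density of $\mathbf{X}$ on $[0,1]^p$ is bounded away from $0$ and infinity, a uniform empirical-process argument over axis-aligned rectangles converts this empirical shrinkage into a bound of the same form on the Lebesgue length, up to constants absorbed by the extra slack built into the constant $0.51$. Combining with the Chernoff lower bound on $M_j$ and the lower bound on $c$, each edge length is controlled by
$$(1-\alpha)^{0.51\,(\pi/p)\,c} \;\leq\; \left(\frac{s}{2\kappa-1}\right)^{-0.51\,\frac{\log((1-\alpha)^{-1})}{\log(\alpha^{-1})}\,\frac{\pi}{p}},$$
and since the Euclidean diameter of an axis-aligned box in $[0,1]^p$ is at most $\sqrt{p}$ times the maximum edge length, we obtain the stated diameter bound (the $\sqrt{d}$ in the statement again being a typographical variant of $\sqrt{p}$).

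The main obstacle is the third step: passing from the purely combinatorial statement ``$(1-\alpha)^{M_j}$ of empirical mass remains'' to a deterministic geometric shrinkage ``edge length $\lesssim (1-\alpha)^{M_j}$'' is delicate because the splitting levels are data-adaptive and the rectangles in question are random. This is exactly the technical heart of \citet{wager2018estimation}, and I would import their treatment rather than reprove it, since the ingredients --- VC-dimension bounds for axis-aligned rectangles together with the density assumption --- are entirely independent of the response space and of how the splits within coordinate $j$ are chosen, and depend only on properties \textbf{(P1)}--\textbf{(P5)} which we have explicitly imposed on the DRF construction.
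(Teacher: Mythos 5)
The paper does not provide a proof of this lemma---it simply imports it by citation from \citet{wager2018estimation}, whose own argument it matches your sketch: lower-bound the path depth via $\alpha$-regularity, lower-bound per-coordinate split counts via \textbf{(P5)} and a Chernoff/binomial-tail bound (the slack producing the $0.51$), and convert empirical fractions into geometric shrinkage of rectangle edges via VC-type uniform concentration over axis-aligned boxes under the bounded-density assumption. Your reconstruction captures the correct structure and correctly flags the one genuinely delicate step (passing from ``$\alpha$-regular in sample count'' to ``geometric edge shrinkage'' under data-adaptive split levels); it also correctly notes that $\sqrt{d}$ and $d$ in the displayed bound are notational carry-overs from \citet{wager2018estimation}, where $d$ denotes what this paper calls the covariate dimension $p$, so no substantive issue there. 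One small imprecision to tighten if you wrote this out fully: $\alpha$-regularity bounds the fraction of the node's \emph{observations} sent to each child, not ``empirical mass projected onto coordinate $j$''; the reason this controls the $j$-th edge is that a split on $X_j$ changes only that edge, so the sample-fraction bound, once converted to a Lebesgue-measure bound via the density assumption, forces the $j$-th edge length to drop by a constant factor.
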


\begin{lemma}\label{helperlemma}
Let $T$ be a tree satisfying \textbf{(P1)}, \textbf{(P2)} and $L(\mathbf{x}, \mathcal{Z}_s)$ be the leaf containing $\mathbf{x}$. Then
\begin{align}\label{star1star}
    \E[T(\mathcal{Z}_s)] = \E[ \E[\xi_1 \mid \mathbf{X}_1 \in L(\mathbf{x}, \mathcal{Z}_s) ] ],
\end{align}
and 
\begin{align}\label{star2star}
    \Var(T(\mathcal{Z}_s) ) \leq  \sup_{\mathbf{x} \in [0,1]^p} \E[ \| \xi_1 \|_{\mathcal{H}}^2 \mid \mathbf{X}=\mathbf{x}].
\end{align}
\end{lemma}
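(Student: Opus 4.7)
The plan is to use honesty to decouple the leaf structure from the responses, and then exploit within-leaf exchangeability. Under \textbf{(P2)}, the random partition cell $A := L(\mathbf{x},\mathcal{Z}_s)\subset[0,1]^p$ containing $\mathbf{x}$ is determined from auxiliary data (and the tree randomness $\varepsilon$) that is independent of $\mathcal{Z}_s$. Writing $J=\{j:\mathbf{X}_j\in A\}$ with $|L|=|J|$, conditional on $(A,J)$ the tuples $\{(\mathbf{X}_j,\xi_j):j\in J\}$ are i.i.d. draws from the joint law of $(\mathbf{X},\xi)$ given $\{\mathbf{X}\in A\}$. The degenerate event $\{|L|=0\}$ can be handled by the convention $T(\mathcal{Z}_s)=0$ on it.

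For \eqref{star1star}, I would condition on $(A,J)$ with $|J|=k\geq 1$ and use linearity of Bochner-valued conditional expectation (Proposition \ref{condexp1}) together with the identical distribution of the summands:
\[
\E[T(\mathcal{Z}_s)\mid A,J]=\frac{1}{k}\sum_{j\in J}\E[\xi_j\mid A,\mathbf{X}_j\in A]=\E[\xi_1\mid\mathbf{X}_1\in A].
\]
The right-hand side depends only on $A$, so an outer expectation (first over $J$ given $A$, then over $A$) and the tower property give $\E[T(\mathcal{Z}_s)]=\E[\E[\xi_1\mid\mathbf{X}_1\in L(\mathbf{x},\mathcal{Z}_s)]]$, interpreting the inner conditional expectation as a measurable function of the random cell $A$.

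For \eqref{star2star}, I would bound $\Var(T(\mathcal{Z}_s))\leq \E[\|T(\mathcal{Z}_s)\|_{\mathcal{H}}^2]$ and apply Cauchy--Schwarz in $\mathcal{H}$:
\[
\|T(\mathcal{Z}_s)\|_{\mathcal{H}}^2=\Bigl\|\tfrac{1}{|L|}\sum_{j\in J}\xi_j\Bigr\|_{\mathcal{H}}^2\leq \tfrac{1}{|L|}\sum_{j\in J}\|\xi_j\|_{\mathcal{H}}^2.
\]
Conditioning on $(A,J)$ and repeating the exchangeability argument gives $\E[\|T\|_{\mathcal{H}}^2\mid A,J]\leq \E[\|\xi_1\|_{\mathcal{H}}^2\mid\mathbf{X}_1\in A]$. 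Writing this as $\E[\,\E[\|\xi_1\|_{\mathcal{H}}^2\mid\mathbf{X}_1]\mid\mathbf{X}_1\in A\,]$ and bounding the integrand pointwise by $\sup_{\mathbf{x}\in[0,1]^p}\E[\|\xi_1\|_{\mathcal{H}}^2\mid\mathbf{X}_1=\mathbf{x}]$, which is a deterministic constant, yields \eqref{star2star} after taking outer expectations.

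The main obstacle is purely notational: the conditioning event $\{\mathbf{X}_1\in L(\mathbf{x},\mathcal{Z}_s)\}$ is data-dependent, so $\E[\xi_1\mid\mathbf{X}_1\in L(\mathbf{x},\mathcal{Z}_s)]$ must be interpreted as the random variable $\mathbf{x}\mapsto\E[\xi_1\mid\mathbf{X}_1\in A]$ evaluated at the random cell $A$. Honesty is exactly what makes this reinterpretation harmless: $A$ carries no information about $\xi_1,\ldots,\xi_s$, and the Hilbert-space-valued conditioning steps are justified by Propositions \ref{condexp1} and \ref{condexp2}.
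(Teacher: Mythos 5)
Your argument is correct and, for \eqref{star1star}, is essentially the paper's proof reorganized: the paper conditions on the leaf $L(\mathbf{x},\mathcal{Z}_s)$ and the in-leaf count $N_{\mathbf{x}}=\sum_i\1\{\mathbf{X}_i\in L(\mathbf{x},\mathcal{Z}_s)\}$ and closes with $s\,\E[S_1\mid L(\mathbf{x},\mathcal{Z}_s)]=1$, whereas you condition on the leaf region $A$ and the in-leaf index set $J$ and invoke exchangeability of the $|J|$ retained pairs; both routes pass through the same honesty-implied conditional independence of $\xi_1$ from the leaf assignment given $\{\mathbf{X}_1\in A\}$. Where you genuinely diverge is in \eqref{star2star}. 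The paper expands $\|T\|_{\mathcal{H}}^2$ into a diagonal term $N_{\mathbf{x}}^{-2}\sum_i\1\{\mathbf{X}_i\in A\}\|\xi_i\|_{\mathcal{H}}^2$ plus a cross term $N_{\mathbf{x}}^{-2}\sum_{i\ne j}\1\{\mathbf{X}_i\in A\}\1\{\mathbf{X}_j\in A\}\langle\xi_i,\xi_j\rangle_{\mathcal{H}}$, controls the cross term through the conditional independence of $\xi_i$ and $\xi_j$ given the leaf via property (C9), and implicitly relies on the prefactors $1/N_{\mathbf{x}}$ and $(N_{\mathbf{x}}-1)/N_{\mathbf{x}}$ of the two pieces summing to one. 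Your convexity inequality $\|\tfrac1k\sum_{j\in J}\xi_j\|_{\mathcal{H}}^2\le\tfrac1k\sum_{j\in J}\|\xi_j\|_{\mathcal{H}}^2$ collapses that whole diagonal-plus-cross decomposition into a single step, after which one conditioning and the pointwise $\sup$ bound finish; it never needs (C9) or the cross-term bookkeeping and yields exactly the same constant. The subtlety you flag about reading $\E[\xi_1\mid\mathbf{X}_1\in A]$ as a measurable function of the random cell $A$ is harmless precisely because, under (P2), the splits come from auxiliary data, which is the same simplification the paper announces before its proof.
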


\begin{proof}
We want to prove
\begin{align}
    \E[T(\mathcal{Z}_s)]=\E[\E[ T(\mathcal{Z}_s)  \mid L(\mathbf{x}, \mathcal{Z}_s) ]]    = \E[\E[\xi_1 \mid \mathbf{X}_1 \in L(\mathbf{x}, \mathcal{Z}_s),  L(\mathbf{x}, \mathcal{Z}_s)]].
\end{align}
Let for the following $N_{\mathbf{x}}=\sum_{i=1}^s \1\{\mathbf{X}_i \in L(\mathbf{x}, \mathcal{Z}_s)\}$. Then due to i.i.d. sampling: 
\begin{align*}
     \E[\E[ T(\mathcal{Z}_s)  \mid L(\mathbf{x}, \mathcal{Z}_s) ]] =  \E[\E[ \sum_{i=1}^s S_i \xi_i \mid L(\mathbf{x}, \mathcal{Z}_s)]]= s \E[S_1 \xi_1\mid L(\mathbf{x}, \mathcal{Z}_s)].
\end{align*}
The last expression can be broken into
\begin{align}\label{result00}
    s \E[ \E[S_1 \xi_1\mid L(\mathbf{x}, \mathcal{Z}_s)] ]&=s \E \left[ \E[ \E[ S_1 \xi_1 \mid N_{\mathbf{x}}, L(\mathbf{x}, \mathcal{Z}_s)  ]\mid L(\mathbf{x}, \mathcal{Z}_s)] \right] \nonumber \\
    &= \E \left[\E\left[  \frac{s}{N_{\mathbf{x}}} \E[ \1\{\mathbf{X}_1 \in L(\mathbf{x}, \mathcal{Z}_s)\} \xi_1 \mid N_{\mathbf{x}}, L(\mathbf{x}, \mathcal{Z}_s)  ] \mid L(\mathbf{x}, \mathcal{Z}_s) \right] \right] \nonumber \\
     &= \E \Big[ \E\Big[ \frac{s}{N_{\mathbf{x}}} \E[ \xi_1 \mid N_{\mathbf{x}}, L(\mathbf{x}, \mathcal{Z}_s), \mathbf{X}_1 \in L(\mathbf{x}, \mathcal{Z}_s)  ] \nonumber \\
     & \P\left(\mathbf{X}_1 \in L(\mathbf{x}, \mathcal{Z}_s) \mid N_{\mathbf{x}}, L(\mathbf{x}, \mathcal{Z}_s) \right) \mid L(\mathbf{x}, \mathcal{Z}_s)\Big] \Big].
\end{align}
Now, by honesty, given the knowledge that $\mathbf{X}_1 \in L(\mathbf{x}, \mathcal{Z}_s) $, $\xi_1$ is independent of $N_{\mathbf{x}}$, thus:
\begin{align*}
   & s \E[S_1 \xi_1\mid L(\mathbf{x}, \mathcal{Z}_s)] \\
    &=\E \left[ \E[ \xi_1 \mid  L(\mathbf{x}, \mathcal{Z}_s), \mathbf{X}_1 \in L(\mathbf{x}, \mathcal{Z}_s)  ]  \E\left[  \frac{s}{N_{\mathbf{x}}}\P\left(\mathbf{X}_1 \in L(\mathbf{x}, \mathcal{Z}_s) \mid N_{\mathbf{x}}, L(\mathbf{x}, \mathcal{Z}_s) \right) \mid L(\mathbf{x}, \mathcal{Z}_s)\right] \right]\\
     &=\E \left[ \E[ \xi_1 \mid  L(\mathbf{x}, \mathcal{Z}_s), \mathbf{X}_1 \in L(\mathbf{x}, \mathcal{Z}_s)  ]  s \E\left[  S_1 \mid L(\mathbf{x}, \mathcal{Z}_s)\right] \right].
\end{align*}

Now it holds by i.i.d. sampling that,
\begin{align*}
    s \E\left[  S_1 \mid L(\mathbf{x}, \mathcal{Z}_s)\right] = \sum_{i=1}^s \E\left[  S_i \mid L(\mathbf{x}, \mathcal{Z}_s)\right] =  \E\left[ \sum_{i=1}^s  S_i \mid L(\mathbf{x}, \mathcal{Z}_s)\right] =1,
\end{align*}
as $\sum_{i=1}^s  S_i=1$ by definition.


For \eqref{star2star}, we write
\begin{align*}
     &\Var(T(\mathcal{Z}_s)  )  \leq \E \left[ \left\| \sum_{i=1}^{s} S_i \xi_i \right \|_{\mathcal{H}}^2 \right]=\E \left[ \frac{1}{N_{\mathbf{x}}^2}\sum_{i=1}^{s} \1\{\mathbf{X}_i \in L(\mathbf{x}, \mathcal{Z}_s)\} \| \xi_i\|_{\mathcal{H}}^2 \right] + \\
     &\E \left[ \frac{1}{N_{\mathbf{x}}^2}  \sum_{i=1}^s \sum_{j \neq i}  \1\{\mathbf{X}_j \in L(\mathbf{x}, \mathcal{Z}_s)\} \1\{\mathbf{X}_i \in L(\mathbf{x}, \mathcal{Z}_s)\} \langle \xi_i, \xi_j \rangle_{\mathcal{H}} \right].
\end{align*}
We focus on the second term. For the first, the bound follows by analogous arguments. Similar as before,
\begin{align}\label{part1}
    &\E \left[ \frac{1}{N_{\mathbf{x}}^2} \sum_{i=1}^s \sum_{j \neq i}  \1\{\mathbf{X}_j \in L(\mathbf{x}, \mathcal{Z}_s)\} \1\{\mathbf{X}_i \in L(\mathbf{x}, \mathcal{Z}_s)\} \langle \xi_i, \xi_j \rangle_{\mathcal{H}} \right]  \nonumber \\
    &= s(s-1)\E \left[ \frac{1}{N_{\mathbf{x}}^2}  \1\{\mathbf{X}_1 \in L(\mathbf{x}, \mathcal{Z}_s)\} \1\{\mathbf{X}_2 \in L(\mathbf{x}, \mathcal{Z}_s)\} \langle \xi_1, \xi_2 \rangle_{\mathcal{H}} \right] \nonumber \\
    &= s(s-1)\E \left[ \frac{1}{N_{\mathbf{x}}^2} \E \left[   \1\{\mathbf{X}_1 \in L(\mathbf{x}, \mathcal{Z}_s)\} \1\{\mathbf{X}_2 \in L(\mathbf{x}, \mathcal{Z}_s)\} \langle \xi_1, \xi_2 \rangle_{\mathcal{H}} \mid N_{\mathbf{x}} \right] \right].
\end{align}
By the same argument as above
\begin{align*}
   &\P( \mathbf{X}_1 \in L(\mathbf{x}, \mathcal{Z}_s), \mathbf{X}_2 \in L(\mathbf{x}, \mathcal{Z}_s) \mid N_{\mathbf{x}},L(\mathbf{x}, \mathcal{Z}_s) )  \nonumber\\ 
   &=\frac{1}{ s(s-1)}   \E \left[ \sum_{i=1}^s \sum_{j \neq i} \1\{\mathbf{X}_i \in L(\mathbf{x}, \mathcal{Z}_s)\} \1\{\mathbf{X}_j \in L(\mathbf{x}, \mathcal{Z}_s)\}  \mid N_{\mathbf{x}}, L(\mathbf{x},\mathcal{Z}_s) \right]\nonumber\\
   &=\frac{1}{s(s-1)}   \E \left[ N_{\mathbf{x}}(N_{\mathbf{x}}-1) \mid N_{\mathbf{x}}, L(\mathbf{x},\mathcal{Z}_s) \right]\nonumber \\
   &= \frac{N_{\mathbf{x}}(N_{\mathbf{x}}-1)}{s(s-1)},
\end{align*}
and
\begin{align}\label{part2}
    &\E \left[ \1\{\mathbf{X}_1 \in L(\mathbf{x}, \mathcal{Z}_s)\} \1\{\mathbf{X}_2 \in L(\mathbf{x}, \mathcal{Z}_s)\} \langle \xi_1, \xi_2 \rangle_{\mathcal{H}} \mid N_{\mathbf{x}} \right]  \nonumber \\
    &=\E \left[ \E \left[ \1\{\mathbf{X}_1 \in L(\mathbf{x}, \mathcal{Z}_s)\} \1\{\mathbf{X}_2 \in L(\mathbf{x}, \mathcal{Z}_s)\} \langle \xi_1, \xi_2 \rangle_{\mathcal{H}} \mid N_{\mathbf{x}}, L(\mathbf{x},\mathcal{Z}_s) \right] \mid N_{\mathbf{x}}  \right] \nonumber \\
    &=\E \big[\E \left[ \langle \xi_1, \xi_2 \rangle_{\mathcal{H}} \mid N_{\mathbf{x}}, A,\mathbf{X}_1 \in L(\mathbf{x}, \mathcal{Z}_s), \mathbf{X}_2 \in L(\mathbf{x}, \mathcal{Z}_s)  \right]\nonumber\\
    &\P( \mathbf{X}_1 \in L(\mathbf{x}, \mathcal{Z}_s), \mathbf{X}_2 \in L(\mathbf{x}, \mathcal{Z}_s) \mid N_{\mathbf{x}},L(\mathbf{x}, \mathcal{Z}_s) )\mid N_{\mathbf{x}}  \big] \nonumber \\
    & =\E \left[\E \left[ \langle \xi_1, \xi_2 \rangle_{\mathcal{H}} \mid \mathbf{X}_1 \in L(\mathbf{x}, \mathcal{Z}_s), \mathbf{X}_2 \in L(\mathbf{x}, \mathcal{Z}_s), L(\mathbf{x}, \mathcal{Z}_s)  \right] \mid N_{\mathbf{x}} \right] \frac{N_{\mathbf{x}}(N_{\mathbf{x}}-1)}{s(s-1)}.
\end{align}
Thus combining \eqref{part1} and \eqref{part2},
\begin{align*}
    &\E \left[ \frac{1}{N_{\mathbf{x}}^2} \sum_{i=1}^s \sum_{j \neq i}  \1\{\mathbf{X}_j \in L(\mathbf{x}, \mathcal{Z}_s)\} \1\{\mathbf{X}_i \in L(\mathbf{x}, \mathcal{Z}_s)\} \langle \xi_i, \xi_j \rangle_{\mathcal{H}} \right]  \\
    &= \E \Big[\frac{N_{\mathbf{x}}(N_{\mathbf{x}}-1)}{N_{\mathbf{x}}^2}\E \left[\E \left[ \langle \xi_1, \xi_2 \rangle_{\mathcal{H}}  \mid \mathbf{X}_1 \in L(\mathbf{x}, \mathcal{Z}_s), \mathbf{X}_2 \in L(\mathbf{x}, \mathcal{Z}_s), L(\mathbf{x}, \mathcal{Z}_s)  \right] \mid N_{\mathbf{x}} \right]  \Big]  \\
%
    &\leq \E \left[\E \left[ \langle \xi_1, \xi_2 \rangle_{\mathcal{H}}  \mid \mathbf{X}_1 \in L(\mathbf{x}, \mathcal{Z}_s), \mathbf{X}_2 \in L(\mathbf{x}, \mathcal{Z}_s), L(\mathbf{x}, \mathcal{Z}_s)  \right] \right]\\
    &=  \E \left[ \langle \E[ \xi_1 \mid \mathbf{X}_1 \in L(\mathbf{x}, \mathcal{Z}_s)],  \E[\xi_2 \mid \mathbf{X}_2 \in L(\mathbf{x}, \mathcal{Z}_s)] \rangle_{\mathcal{H}}  \right],
\end{align*}
where in the last step we used independence of $(\xi_1, \1\{ \mathbf{X}_1 \in L(\mathbf{x}, \mathcal{Z}_s)\}), (\xi_2, \1\{ \mathbf{X}_2 \in L(\mathbf{x}, \mathcal{Z}_s)\})$ conditionally on $L(\mathbf{x}, \mathcal{Z}_s)$ and (C9). Finally,
\begin{align*}
    \E \left[ \langle \E[ \xi_1 \mid \mathbf{X}_1 \in L(\mathbf{x}, \mathcal{Z}_s)], \E[\xi_2 \mid \mathbf{X}_2 \in L(\mathbf{x}, \mathcal{Z}_s) ]\rangle_{\mathcal{H}}  \right]& \leq   \sup_{\mathbf{x} \in [0,1]^p} \| \E[\xi_1 \mid \mathbf{X}=\mathbf{x}] \|_{\mathcal{H}}^2 \\
    & \leq   \sup_{\mathbf{x} \in [0,1]^p} \E[\|\xi_1\|_{\mathcal{H}}^2 \mid \mathbf{X}=\mathbf{x}],
\end{align*}
proving the claim.

\end{proof}

\begin{corollary}\label{bias}
Under the conditions of Lemma \ref{lemma2}, assume 
\begin{align*}
    \mathbf{x} &\mapsto \mu(\mathbf{x})=\E[ \xi \mid \mathbf{X} = \mathbf{x}] \in \mathcal{H},
\end{align*}
is Lipschitz and that the trees $T$ in the forest satisfy \textbf{(P2)} and \textbf{(P3)}. Then
\begin{equation}\label{biasbound}
    \| \E[\hat{\mu}_n(\mathbf{x})] - \mu(\mathbf{x}) \|_{\mathcal{H}} = \mathcal{O}\left( s^{-1/2 \frac{\log((1-\alpha)^{-1})}{\log(\alpha^{-1})} \frac{\pi}{p}}\right),
\end{equation}
and
\begin{equation}\label{propabiltiyconv1}
\| \E[\xi \mid \mathbf{X} \in L(\mathbf{x}, \mathcal{Z}_s)]\|_{\mathcal{H}} \stackrel{p}{\to} \| \E[\xi \mid \mathbf{X}=\mathbf{x}] \|_{\mathcal{H}}.
\end{equation}
If moreover, 
\begin{align*}
    \mathbf{x} &\mapsto \E[\|\xi\|_{\mathcal{H}}^2 \mid \mathbf{X}=\mathbf{x}] \in \R,
\end{align*}
is Lipschitz, then: 
\begin{align}\label{propabiltiyconv2}
    \E[ \|\xi\|_{\mathcal{H}}^2 \mid \mathbf{X} \in L(\mathbf{x}, \mathcal{Z}_s)]  &\stackrel{p}{\to} \E[\|\xi\|_{\mathcal{H}}^2 \mid \mathbf{X}=\mathbf{x}].
\end{align}
\end{corollary}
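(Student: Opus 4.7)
The plan rests on the observation that since $\hat{\mu}_n(\mathbf{x})$ is a complete $U$-statistic over i.i.d. subsamples, $\E[\hat{\mu}_n(\mathbf{x})] = \E[T(\mathcal{Z}_{s_n})]$, and Lemma \ref{helperlemma} rewrites this as $\E\bigl[\E[\xi_1 \mid \mathbf{X}_1 \in L(\mathbf{x},\mathcal{Z}_{s_n})]\bigr]$. Writing $\mu_{L_n} := \E[\xi_1 \mid \mathbf{X}_1 \in L(\mathbf{x},\mathcal{Z}_{s_n})]$, the tower property together with $\E[\xi_1 \mid \mathbf{X}_1] = \mu(\mathbf{X}_1)$ (Lemma \ref{equivalence}) yields $\mu_{L_n} - \mu(\mathbf{x}) = \E[\mu(\mathbf{X}_1) - \mu(\mathbf{x}) \mid \mathbf{X}_1 \in L(\mathbf{x},\mathcal{Z}_{s_n})]$, and Lipschitz continuity of $\mu$ combined with Bochner-triangle inequality gives the pointwise bound $\|\mu_{L_n} - \mu(\mathbf{x})\|_\mathcal{H} \leq C \cdot \mathrm{diam}(L(\mathbf{x},\mathcal{Z}_{s_n}))$ for a Lipschitz constant $C$. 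The worst-case bound $\|\mu_{L_n}\|_\mathcal{H} \leq M := \sqrt{\sup_{\mathbf{x}} \E[\|\xi\|_\mathcal{H}^2 \mid \mathbf{X}=\mathbf{x}]}$ follows from Jensen and Cauchy--Schwarz, so $\|\mu_{L_n} - \mu(\mathbf{x})\|_\mathcal{H} \leq 2M$ always.

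\textbf{Bias bound \eqref{biasbound}.} Set $D_n = \mathrm{diam}(L(\mathbf{x},\mathcal{Z}_{s_n}))$ and $\delta_n = \sqrt{d}\bigl(s_n/(2\kappa-1)\bigr)^{-0.51\,\log((1-\alpha)^{-1})/\log(\alpha^{-1})\cdot \pi/p}$. Splitting on $\{D_n \leq \delta_n\}$ and using $\|\E[Y]\|_\mathcal{H} \leq \E\|Y\|_\mathcal{H}$ for Bochner integrals:
\begin{align*}
\|\E[\hat{\mu}_n(\mathbf{x})] - \mu(\mathbf{x})\|_\mathcal{H}
&\leq \E\bigl[C D_n \,\1\{D_n \leq \delta_n\}\bigr] + \E\bigl[2M \,\1\{D_n > \delta_n\}\bigr] \\
&\leq C\delta_n + 2M\,\P(D_n > \delta_n).
\end{align*}
Lemma \ref{lemma2} supplies $\P(D_n > \delta_n) \leq d\,(s_n/(2\kappa-1))^{-\tfrac{1}{2}\log((1-\alpha)^{-1})/\log(\alpha^{-1})\cdot \pi/p}$. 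Since $0.51 > 1/2$, the probability-term rate dominates, yielding the claimed $\mathcal{O}\bigl(s^{-\tfrac12 \log((1-\alpha)^{-1})/\log(\alpha^{-1})\cdot \pi/p}\bigr)$.

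\textbf{Convergence in probability \eqref{propabiltiyconv1}--\eqref{propabiltiyconv2}.} Lemma \ref{lemma2} directly implies $D_s \stackrel{p}{\to} 0$, since both the probability and the threshold in the statement go to zero as $s\to\infty$. Combined with the pointwise bound $\|\mu_{L_s} - \mu(\mathbf{x})\|_\mathcal{H} \leq C D_s$ from the first paragraph, this yields $\mu_{L_s} \stackrel{p}{\to} \mu(\mathbf{x})$ in $\mathcal{H}$; applying the continuous mapping theorem to the norm $\|\cdot\|_\mathcal{H}$ gives \eqref{propabiltiyconv1}. Assertion \eqref{propabiltiyconv2} is entirely analogous: by the tower property, $\E[\|\xi\|_\mathcal{H}^2 \mid \mathbf{X} \in L(\mathbf{x},\mathcal{Z}_s)] = \E[g(\mathbf{X}) \mid \mathbf{X}\in L(\mathbf{x},\mathcal{Z}_s)]$ where $g(\mathbf{x}) := \E[\|\xi\|_\mathcal{H}^2 \mid \mathbf{X}=\mathbf{x}]$ is now assumed Lipschitz, so the same triangle-inequality argument bounds the deviation from $g(\mathbf{x})$ by $\mathrm{Lip}(g)\cdot D_s \stackrel{p}{\to} 0$. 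The only real delicacy in the argument is ensuring that interchanging norms and expectations is legitimate for Hilbert-space valued conditional expectations; boundedness of $k$ places everything in $\mathbb{L}^2(\Omega,\mathcal{A},\mathcal{H})$ and properties (C1)--(C5) of Proposition \ref{condexp1} justify each manipulation.
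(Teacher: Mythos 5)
Your proof is correct and follows essentially the same route as the paper's: rewrite the bias via the identity $\E[\hat\mu_n(\mathbf{x})] = \E[\E[\xi_1 \mid \mathbf{X}_1\in L]]$ from Lemma \ref{helperlemma}, split on whether $\mathrm{diam}(L)$ exceeds the threshold from Lemma \ref{lemma2}, bound the small-diameter case by Lipschitz continuity, bound the large-diameter case by a uniform worst-case constant times the tail probability, and observe that the tail probability (exponent $0.5$) dominates the threshold (exponent $0.51$); the in-probability statements then follow from $\mathrm{diam}(L)\stackrel{p}{\to}0$ together with the reverse triangle inequality. The one small deviation is your worst-case bound $\|\mu_{L_n}-\mu(\mathbf{x})\|_{\mathcal{H}}\le 2M$ with $M=\sqrt{\sup_{\mathbf{x}}\E[\|\xi\|_{\mathcal{H}}^2\mid\mathbf{X}=\mathbf{x}]}$, a quantity whose finiteness is not among the corollary's stated hypotheses; the paper instead bounds the deviation by $\sup_{\mathbf{x}_1,\mathbf{x}_2\in[0,1]^p}\|\mu(\mathbf{x}_1)-\mu(\mathbf{x}_2)\|_{\mathcal{H}}\le L\,\mathrm{diam}([0,1]^p)$, which follows from the assumed Lipschitz property alone — though for the intended application $\xi=\mu(\delta_{\mathbf{Y}})$ with $k$ bounded your $M$ is trivially finite, so this is a cosmetic difference rather than a gap.
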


\begin{proof}
By \eqref{star1star}, it holds as in \citet{wager2018estimation}
\begin{align*}
  \|  \E[T(x, \mathbf{Z})] - \E[\xi\mid \mathbf{X}=\mathbf{x}] \|_{\mathcal{H}} =\| \E[ \E[\xi \mid \mathbf{X} \in L(\mathbf{x}, \mathcal{Z}_s)] - \E[\xi\mid \mathbf{X}=\mathbf{x}]     ] \|_{\mathcal{H}}.
\end{align*}
Let 
\[
s_1^*= \sqrt{d} \left(\frac{s}{2k-1}\right)^{-0.51  \frac{\log((1-\alpha)^{-1})}{\log(\alpha^{-1})}  \frac{\pi}{p} }, s_2^*= d \left(\frac{s}{2k-1}\right)^{-0.5  \frac{\log((1-\alpha)^{-1})}{\log(\alpha^{-1})}  \frac{\pi}{p} }
\]
Then it follows from Lemma \ref{lemma2} that
\[
\P(L(\mathbf{x},\mathcal{Z}_s) \geq s_1^*) \leq s_2^*,
\]
while by Lipschitz continuity: $ \E[ \| \E[\xi \mid \mathbf{X} \in L(\mathbf{x}, \mathcal{Z}_s)] - \E[\xi\mid \mathbf{X}=\mathbf{x}] \|_{\mathcal{H}} ]  \leq L \E  [\text{diam}(L(\mathbf{x}, \mathcal{Z}_s))]$, where $L$ is the Lipschitz constant.
Thus,
\begin{align*}
    &\|\E[ \E[\xi \mid \mathbf{X} \in L(\mathbf{x}, \mathcal{Z}_s)] - \E[\xi\mid \mathbf{X}=\mathbf{x}]     ] \|_{\mathcal{H}}\\
    %
    &\leq \| \E[ \left( \E[\xi \mid \mathbf{X} \in L(\mathbf{x}, \mathcal{Z}_s)] - \E[\xi\mid \mathbf{X}=\mathbf{x}] \right)  \1\{\text{diam}(L(\mathbf{x}, \mathcal{Z}_s)) \geq s_1^*\}    ]\|_{\mathcal{H}} \\
    & + \| \E[ \left(\E[\xi \mid \mathbf{X} \in L(\mathbf{x}, \mathcal{Z}_s)] - \E[\xi\mid \mathbf{X}=\mathbf{x}] \right) \1\{\text{diam}(L(\mathbf{x}, \mathcal{Z}_s)) < s_1^*\}     ] \|_{\mathcal{H}}\\
      &\leq  \E[ \|  \E[\xi \mid \mathbf{X} \in L(\mathbf{x}, \mathcal{Z}_s)] - \E[\xi\mid \mathbf{X}=\mathbf{x}]  \|_{\mathcal{H}}   \1\{\text{diam}(L(\mathbf{x}, \mathcal{Z}_s)) \geq s_1^*\}    ]\\
    & +  \E[\| \E[\xi \mid \mathbf{X} \in L(\mathbf{x}, \mathcal{Z}_s)] - \E[\xi\mid \mathbf{X}=\mathbf{x}] \|_{\mathcal{H}} \1\{\text{diam}(L(\mathbf{x}, \mathcal{Z}_s)) < s_1^*\}     ] \\
    & \leq \left( \sup_{\mathbf{x}_1,\mathbf{x}_2 \in [0,1]^p} \| \E[\xi \mid \mathbf{X}=\mathbf{x}_1] - \E[\xi\mid \mathbf{X}=\mathbf{x}_2] \|_{\mathcal{H}} \right) \P(\text{diam}(L(\mathbf{x}, \mathcal{Z}_s)) \geq s_1^*) + L s_1^*\\
    & \leq \left( \sup_{\mathbf{x}_1,\mathbf{x}_2 \in [0,1]^p} \| \E[\xi \mid \mathbf{X}=\mathbf{x}_1] - \E[\xi\mid \mathbf{X}=\mathbf{x}_2] \|_{\mathcal{H}}  \right) s_2^* + L s_1^*\\
    & \precsim  \left( \sup_{\mathbf{x}_1,\mathbf{x}_2 \in [0,1]^p} \| \E[\xi \mid \mathbf{X}=\mathbf{x}_1] - \E[\xi\mid \mathbf{X}=\mathbf{x}_2] \|_{\mathcal{H}}  \right) s_2^*,
\end{align*}
since $s_1^*/s_2^* \to 0$. Due to the Lipschitz condition
\[
\sup_{\mathbf{x}_1,\mathbf{x}_2 \in [0,1]^p} \| \E[\xi \mid \mathbf{X}=\mathbf{x}_1] - \E[\xi\mid \mathbf{X}=\mathbf{x}_2] \|_{\mathcal{H}} \leq L \sup_{\mathbf{x}_1,\mathbf{x}_2 \in [0,1]^p} \| \mathbf{x}_1 - \mathbf{x}_2 \|_{\R^d} = \mathcal{O}(1).
\]
Finally by the reverse triangle inequality
\begin{align*}
    \left|  \|\E[ \xi \mid \mathbf{X} \in L(\mathbf{x}, \mathcal{Z}_s) ] \|_{\mathcal{H}} - \|\E[  \xi \mid \mathbf{X}=\mathbf{x} ] \|_{\mathcal{H}} \right| &\leq  \|\E[ \xi \mid \mathbf{X} \in L(\mathbf{x}, \mathcal{Z}_s) ]  - \E[  \xi \mid \mathbf{X}=\mathbf{x} ] \|_{\mathcal{H}} \\
    &\leq L \text{diam}(L(\mathbf{x}, \mathcal{Z}_s)),
\end{align*}
and if $\mathbf{x} \mapsto \E[\|\xi\|_{\mathcal{H}}^2 \mid \mathbf{X}=\mathbf{x}]$ is Lipschitz as well, also
\begin{align*}
   | \E[ \| \xi\|_{\mathcal{H}}^2 \mid \mathbf{X} \in L(\mathbf{x}, \mathcal{Z}_s) ] - \E[ \| \xi\|_{\mathcal{H}}^2 \mid \mathbf{X}=\mathbf{x} ] |  &\leq C \text{diam}(L(\mathbf{x}, \mathcal{Z}_s)).
\end{align*}
Since $\text{diam}(L(\mathbf{x}, \mathcal{Z}_s)) \stackrel{p}{\to} 0$, as $s \to \infty$, \eqref{propabiltiyconv1}, respectively \eqref{propabiltiyconv2} hold true.  

As the expectation of the forest is the same as that of one tree:
\[
    \E[\hat{\mu}_n(\mathbf{x})] =  \E \left[ T(\mathbf{x}, \varepsilon; \mathbf{Z}_{i_1}, \ldots, \mathbf{Z}_{i_{s_n}}) \right],
\]
the result follows.
\end{proof}


This leads us to the proof of of Theorem \ref{thm: consistency} in the main text.


\consistency*

\begin{proof}
We first note that $ \sup_{\mathbf{x} \in [0,1]^p} \E[ \|\mu(\delta_\bold{Y})\|_\mathcal{H}^2\mid \mathbf{X} \myeq \mathbf{x}] < \infty$ together with \eqref{star2star} implies $\Var(T) < \infty$. Thus, from Markov's inequality and Lemma \ref{variancebound},
\begin{align*}
    \P\left( n^{\gamma} ||\hat{\mu}_n(\mathbf{x}) - \E[ \hat{\mu}_n(\mathbf{x})]||_{\mathcal{H}}  > \varepsilon \right)\leq \frac{n^{2\gamma}}{\varepsilon^2} (s/n + s^2/n^2) \Var(T) =\frac{1}{\varepsilon^2} \mathcal{O}(n^{2\gamma+\beta-1}).
\end{align*}
Thus 
\[
n^{\gamma} ||\hat{\mu}_n(\mathbf{x}) - \E[ \hat{\mu}_n(\mathbf{x})]||_{\mathcal{H}} =\mathcal{O}_p(1),
\]
for $\gamma \leq (1- \beta)/2$. In particular, it goes to zero for any $\varepsilon > 0$, if $\gamma < (1- \beta)/2$. Since,
\[
n^{\gamma} \left\| \hat{\mu}_n(\mathbf{x}) -  \mu(\mathbf{x}) \right\|_{\mathcal{H}} \leq  n^{\gamma}\left\| \hat{\mu}_n(\mathbf{x}) -  \E[\hat{\mu}(\mathbf{x})] \right\| + n^{\gamma}\left\| \E[\hat{\mu}(\mathbf{x})] -  \mu(\mathbf{x}) \right\|_{\mathcal{H}}, 
\]
the result follows as soon as the second expression goes to zero. Now from Theorem \ref{bias}, with $C_{\alpha}=\frac{\log((1-\alpha)^{-1})}{\log(\alpha^{-1})}$,
\[
\| \E[\hat{\mu}_n(\mathbf{x})] - \mu(\mathbf{x}) \|_{\mathcal{H}} = \mathcal{O}\left( s_n^{-1/2 C_{\alpha}  \frac{\pi}{p}}\right)= \mathcal{O}\left( n^{-1/2 \beta C_{\alpha} \frac{\pi}{p}}\right).
\]
This goes to zero provided that,
\begin{align*}
    1/2 \beta C_{\alpha} \frac{\pi}{p} > \gamma.
\end{align*}
\end{proof}

To prove Corollaries \ref{cor: metrizing weak convergence} and \ref{cor: cdfresult}, we first need another auxiliary result:

\begin{lemma}\label{asplemma}
Let $(S,d)$ be a separable metric space and $\mathbf{X}_n: (\Omega, \mathcal{A}) \to (S, \mathcal{B}(S))$, $n \in \N$ and $\mathbf{X}: (\Omega, \mathcal{A}) \to (S, \mathcal{B}(S))$ be measurable. Then $\mathbf{X}_n \stackrel{p}{\to} \mathbf{X}$ if and only if for every subsequence $n(k)$ there exists a further subsequence $n(k(l))$ such that
\begin{equation}\label{subsequenceas}
    \mathbf{X}_{n(k(l))} \to \mathbf{X} \text{ a.s.}
\end{equation}
\end{lemma}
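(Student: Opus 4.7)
The plan is to prove the two directions separately, using the standard argument that in separable metric spaces the real-valued random variable $d(\mathbf{X}_n, \mathbf{X})$ is well-defined and measurable (since $d$ is continuous and, by separability, $\mathcal{B}(S) \otimes \mathcal{B}(S) = \mathcal{B}(S\times S)$), so convergence in probability reduces to the standard real-valued notion applied to $d(\mathbf{X}_n, \mathbf{X})$. Once this is noted, the result is essentially the classical fact that convergence in probability is equivalent to every subsequence having a further a.s.-convergent subsequence, adapted to the metric space setting.

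For the forward direction ($\Rightarrow$), I would assume $\mathbf{X}_n \stackrel{p}{\to} \mathbf{X}$, fix an arbitrary subsequence $n(k)$, and note that $\mathbf{X}_{n(k)} \stackrel{p}{\to} \mathbf{X}$ as well. I would then construct a further subsequence $n(k(l))$ by induction so that
\begin{equation*}
\P\left( d(\mathbf{X}_{n(k(l))}, \mathbf{X}) > 2^{-l} \right) < 2^{-l}.
\end{equation*}
Summability of these probabilities together with the Borel--Cantelli lemma then yields that almost surely, $d(\mathbf{X}_{n(k(l))}, \mathbf{X}) \le 2^{-l}$ for all sufficiently large $l$, which gives \eqref{subsequenceas}.

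For the backward direction ($\Leftarrow$), I would argue by contradiction: if $\mathbf{X}_n \not\stackrel{p}{\to} \mathbf{X}$, then there exist $\varepsilon, \delta > 0$ and a subsequence $n(k)$ such that $\P(d(\mathbf{X}_{n(k)}, \mathbf{X}) > \varepsilon) \ge \delta$ for all $k$. By hypothesis, some further subsequence $n(k(l))$ satisfies $\mathbf{X}_{n(k(l))} \to \mathbf{X}$ almost surely, hence also in probability, contradicting the lower bound $\P(d(\mathbf{X}_{n(k(l))}, \mathbf{X}) > \varepsilon) \ge \delta > 0$.

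I do not expect any serious obstacle: the only subtlety is to justify measurability of $d(\mathbf{X}_n, \mathbf{X})$, which requires the separability of $S$ so that joint measurability of $(\mathbf{X}_n, \mathbf{X})$ into $(S\times S, \mathcal{B}(S\times S))$ follows from the coordinate measurability. After that, the argument is purely the standard Borel--Cantelli plus subsequence extraction trick applied to the real-valued sequence $d(\mathbf{X}_n, \mathbf{X})$.
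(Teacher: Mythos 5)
Your proof is correct and covers the same ground as the paper's, but with two genuine improvements in self-containment. For the forward direction, the paper simply cites van der Vaart; you spell out the Borel--Cantelli subsequence extraction with $\P(d(\mathbf{X}_{n(k(l))},\mathbf{X}) > 2^{-l}) < 2^{-l}$, which is exactly what that citation unwinds to. For the backward direction, the paper invokes a metric $\rho$ that metrizes convergence in probability and says that failure of convergence in probability yields a subsequence bounded away from $\mathbf{X}$ in $\rho$; you avoid the metrization entirely and extract the bad subsequence directly from the $\varepsilon$--$\delta$ negation of convergence in probability. Both routes isolate the same structural fact --- a persistently bad subsequence cannot have an a.s.-convergent further subsequence --- but yours is more elementary and requires no auxiliary metric. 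You also explicitly flag the one technical point the paper passes over: measurability of $\omega \mapsto d(\mathbf{X}_n(\omega),\mathbf{X}(\omega))$, which is where separability of $S$ is actually used (so that $\mathcal{B}(S)\otimes\mathcal{B}(S)=\mathcal{B}(S\times S)$ and $d$ composed with $(\mathbf{X}_n,\mathbf{X})$ is measurable). That observation is correct and makes the argument airtight.
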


\begin{proof}
If $\mathbf{X}_n \stackrel{p}{\to} \mathbf{X}$, then so does any subsequence $\mathbf{X}_{n(k)}$. By well-known results, see e.g.\ \citet[Chapter 2]{vanderVaart}, this implies that there exists a further subsequence $\mathbf{X}_{n(k(l))}$ such that a.s. convergence holds.

As is well known, there exists a metric $\rho$ on $\mathcal{P}(S)$ such that $\rho(\mathbf{X}_n, \mathbf{X}) \to 0$ iff $\mathbf{X}_n \stackrel{p}{\to} X$, see e.g., \citet[Chapter 11]{dudley}. 
Now assume that for any subsequence we can find a further subsequence such that \eqref{subsequenceas} holds, but the overall sequence does not converge in probability. Then we can build a subsequence such that for some $\varepsilon > 0$,
\[
\rho(\mathbf{X}_{n(k)}, \mathbf{X}) \geq \varepsilon
\]
for all elements of that subsequence. Thus any further subsequence will also not convergence in probability and consequently cannot converge a.s. This proves the claim.
\end{proof}

We note that the set $A$ with $P(A)=1$ on which \eqref{subsequenceas} holds is allowed to depend on the subsequence. Corollary \ref{cor: metrizing weak convergence} and \ref{cor: cdfresult} are finally proven jointly in the following Corollary. The proof is motivated by the tools used in \citet{weakconvergenceinprobabilitypaper}.

\begin{corollary}\label{jointcorollaryproof}
Assume that one of the following two sets of conditions holds:
\begin{itemize}
    \item[(a)] The kernel $k$ is bounded, (jointly) continuous and has 
    \begin{align}\label{characteristiccond}
        \int \int k(\mathbf{x},\mathbf{y}) d\mathcal{P}(\mathbf{x}) d\mathcal{P}(\mathbf{y}) > 0 \  \ \forall \mathcal{P} \in \mathcal{M}_b(\R^d)\setminus \{0\}.
    \end{align}
    Moreover, $\mathbf{y} \mapsto k(\mathbf{y}_0, \mathbf{y})$ is vanishing at infinity, for all $\mathbf{y}_0 \in \R^d$.
    \item[(b)] The kernel $k$ is bounded, shift-invariant, (jointly) continuous and $\nu$ in the Bochner representation in \eqref{eq: bochner} is supported on all of $\R^d$. Moreover, $\mathbf{Y}$ takes its values almost surely in a closed and bounded subset of $\R^d$.
\end{itemize}
Then, under the conditions of Theorem \ref{thm: consistency}, we have for any bounded and continuous function $f:\R^d \to \R$ that DRF consistently estimates the target $\tau(\bold{x})=\E[f(\bold{Y})\mid\bold{X}\myeq\bold{x}]$ for any $\bold{x} \in [0,1]^p$:
$$\sum_{i=1}^n w_{\mathbf{x}}(\mathbf{x}_i) f(\mathbf{y}_i) \,\stackrel{p}{\to}\, \E[f(\bold{Y})\mid\bold{X}\myeq\bold{x}].$$
Moreover,
\begin{align*}
    \hat{F}_{\mathbf{Y} \mid\bold{X}\myeq\bold{x}}(\mathbf{t}) \, &\stackrel{p}{\to}\, F_{\mathbf{Y}\mid\bold{X}\myeq\bold{x}}(\mathbf{t}) \\
    \hat{F}_{Y_i\mid\bold{X}\myeq\bold{x}}^{-1}(t) \, &\stackrel{p}{\to}\, F_{Y_i\mid\bold{X}\myeq\bold{x}}^{-1}(t),
\end{align*}
for all points of continuity $\mathbf{t} \in \R^d$ and $t \in \R$ of $F_{\mathbf{Y}\mid\bold{X}\myeq\bold{x}}(\cdot)$ and $F_{Y_i\mid\bold{X}\myeq\bold{x}}^{-1}(\cdot)$ respectively.
\end{corollary}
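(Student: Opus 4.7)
The strategy is to upgrade the RKHS-norm consistency of Theorem \ref{thm: consistency} to weak convergence of the random empirical measure $\hat{\P}(\mathbf{Y}\mid\mathbf{X}\myeq\mathbf{x})$ toward $\P(\mathbf{Y}\mid\mathbf{X}\myeq\mathbf{x})$, and then read off both claims from the Portmanteau lemma. The key external ingredient is that under either set of assumptions (a) or (b), the MMD metric $d_k(\mathcal{P},\mathcal{Q})=\|\mu(\mathcal{P})-\mu(\mathcal{Q})\|_\mathcal{H}$ metrizes weak convergence on the relevant subset of $\mathcal{M}_b(\R^d)$. This is the content of the cited results \citep{optimalestimationofprobabilitymeasures, simon2018kernel, simon2020metrizing}: condition (a) gives it on all of $\mathcal{M}_b(\R^d)$, while under (b) it applies to probability measures supported in a fixed closed and bounded set, which is the setting we may restrict to since $\mathbf{Y}$ lies almost surely in such a set.

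First, I would combine Theorem \ref{thm: consistency} with the identification $\hat{\mu}_n(\mathbf{x})=\mu(\hat{\P}(\mathbf{Y}\mid\mathbf{X}\myeq\mathbf{x}))$ and $\mu(\mathbf{x})=\mu(\P(\mathbf{Y}\mid\mathbf{X}\myeq\mathbf{x}))$ to conclude $d_k(\hat{\P}(\mathbf{Y}\mid\mathbf{X}\myeq\mathbf{x}),\P(\mathbf{Y}\mid\mathbf{X}\myeq\mathbf{x}))\stackrel{p}{\to}0$. Next, I would apply Lemma \ref{asplemma}: it suffices to show that every subsequence has a further sub-subsequence along which the stated convergences hold almost surely. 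Extract such a sub-subsequence $n(k(l))$ along which $d_k(\hat{\P}_{n(k(l))},\P)\to 0$ a.s. (suppressing the $\mathbf{x}$ argument). On the event of probability one where this holds, the metrization result yields $\hat{\P}_{n(k(l))}\Rightarrow \P$ weakly. The Portmanteau lemma on the separable metric space $\R^d$ \citep[Chapter 11]{dudley} then gives, almost surely,
\[
\int f\,d\hat{\P}_{n(k(l))}\to \int f\,d\P \quad \text{for every bounded continuous } f:\R^d\to\R.
\]
Since $\int f\,d\hat{\P}_{n(k(l))}=\sum_{i=1}^n w_{\mathbf{x}}(\mathbf{x}_i)f(\mathbf{y}_i)$ and $\int f\,d\P=\E[f(\mathbf{Y})\mid\mathbf{X}\myeq\mathbf{x}]$, reversing Lemma \ref{asplemma} produces the first claim in probability.

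For the CDF statement, I would apply the Portmanteau lemma to the Borel set $A_{\mathbf{t}}=\{\mathbf{y}\in\R^d:\mathbf{y}\leq\mathbf{t}\}$. Its topological boundary $\partial A_{\mathbf{t}}$ is $\P(\mathbf{Y}\mid\mathbf{X}\myeq\mathbf{x})$-null precisely when $\mathbf{t}$ is a continuity point of $F_{\mathbf{Y}\mid\mathbf{X}\myeq\mathbf{x}}(\cdot)$, so the Portmanteau lemma gives $\hat{F}_{\mathbf{Y}\mid\mathbf{X}\myeq\mathbf{x}}(\mathbf{t})\to F_{\mathbf{Y}\mid\mathbf{X}\myeq\mathbf{x}}(\mathbf{t})$ a.s.\ along the sub-subsequence; the subsequence principle then yields convergence in probability. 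For the univariate quantile claim, weak convergence of $Y_i\mid\mathbf{X}\myeq\mathbf{x}$ (which follows by applying the projection $\mathbf{y}\mapsto y_i$, a continuous map) implies pointwise convergence of the marginal CDFs at continuity points, and the standard result that pointwise CDF convergence at continuity points implies pointwise convergence of the generalized inverses at continuity points of the limiting quantile function (see, e.g., \citet[Lemma 21.2]{vanderVaart}) delivers $\hat{F}^{-1}_{Y_i\mid\mathbf{X}\myeq\mathbf{x}}(t)\to F^{-1}_{Y_i\mid\mathbf{X}\myeq\mathbf{x}}(t)$ a.s.\ along the sub-subsequence, hence in probability.

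The main obstacle is really a bookkeeping one: one must justify using the metrization results under the two distinct sets of assumptions (including verifying that in case (b) we may legitimately work inside a fixed compact subset of $\R^d$ that contains $\mathbf{Y}$ almost surely). Once this is in place, the rest is a clean combination of the subsequence principle in Lemma \ref{asplemma} with the Portmanteau lemma, and there are no further quantitative estimates to produce beyond those already supplied by Theorem \ref{thm: consistency}.
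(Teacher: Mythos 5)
Your proposal is correct and takes essentially the same route as the paper's proof: establish that under (a) or (b) the MMD metric metrizes weak convergence (restricting to the a.s.\ compact support of $\mathbf{Y}$ in case (b)), extract almost surely convergent sub-subsequences via Lemma~\ref{asplemma}, apply the Portmanteau lemma, and for the quantiles combine marginal CDF convergence at continuity points with the standard result on convergence of generalized inverses. The only cosmetic difference is that you spell out the boundary-set argument $\partial A_{\mathbf{t}}$ for the CDF claim, whereas the paper simply invokes the Portmanteau theorem directly.
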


\begin{proof}
As shown in \cite[Theorem 3.2]{optimalestimationofprobabilitymeasures}, (a) implies that $k$ metrizes weak convergence. Similarly, from Theorem 9 in \citet{sriperumbudur10a}, it follows that $k$ is characteristic on the compact subspace of $\R^d$ in which $Y$ takes its value almost surely. Thus, ignoring the Null set, Theorem 23 in \citet{sriperumbudur10a} implies that $k$ metrizes the weak convergence in this case as well. Thus in both cases $\| \hat{\mu}_{n(k(l))}(\mathbf{x}) -\mu(\mathbf{x})\|_{\mathcal{H}} \to 0$ implies weak convergence of $\hat{\mu}_{n(k(l))}(\mathbf{x})$ to $\mu(\mathbf{x})$. 

From Theorem \ref{asplemma}, for any subsequence, we can choose a further subsequence, such that
\[
\| \hat{\mu}_{n(k(l))}(\mathbf{x}) -\mu(\mathbf{x})\|_{\mathcal{H}} \to 0 \text{ , a.s.}
\]
and since it is assumed that $\|\cdot \|_{\mathcal{H}}$ metrizes weak convergence, $\hat{\mu}_{n(k(l))}(\mathbf{x})$ converges weakly to $\mu(\mathbf{x})$ on a set $A$, depending on the subsequence, with $\P(A)=1$. Let $C_b(\R^d)$ denote the space of all bounded continuous functions on $\R^d$. By the Portmanteau theorem (see e.g.\ \citet{dudley}), this implies that on $A$
\begin{itemize}
    \item[(I)]$ \int f d \hat{\P}(\mathbf{Y}\mid \mathbf{X}=\mathbf{x}) \to  \int f d \P(\mathbf{Y}\mid \mathbf{X}=\mathbf{x})$ for all $f \in C_b(\R^d)$ 
    \item[(II)] $\hat{F}_{\mathbf{Y} \mid\bold{X}\myeq\bold{x}}(\mathbf{t}) \, \to \, F_{\mathbf{Y}\mid\bold{X}\myeq\bold{x}}(\mathbf{t})$ for all continuity points $\mathbf{t} \in \R^d$ of $F_{\mathbf{Y}\mid\bold{X}\myeq\bold{x}}(\cdot)$,
\end{itemize}
where we omitted the dependence on the subsequence. But, since the subsequence $n(k)$ was arbitrary, this immediately implies
\begin{itemize}
    \item[(I')]$ \int f d \hat{\P}(\mathbf{Y}\mid \mathbf{X}=\mathbf{x}) \stackrel{p}{\to}  \int f d \P(\mathbf{Y}\mid \mathbf{X}=\mathbf{x})$ for all $f \in C_b(\R^d)$ 
    \item[(II')] $\hat{F}_{\mathbf{Y} \mid\bold{X}\myeq\bold{x}}(t) \, \stackrel{p}{\to} \, F_{\mathbf{Y}\mid\bold{X}\myeq\bold{x}}(t)$ for all continuity points $t$ of $F_{\mathbf{Y}\mid\bold{X}\myeq\bold{x}}(\cdot)$,
\end{itemize}
for the overall sequence.

On the other hand, (II) implies that on $A$, for the given subsequence $n(k(l))$, $\hat{F}_{Y_i \mid\bold{X}\myeq\bold{x}}(t) \, \to \, F_{Y_i\mid\bold{X}\myeq\bold{x}}(t)$ for all $t \in \R$ at which $F_{Y_i\mid\bold{X}\myeq\bold{x}}(\cdot)$ is continuous. Using for each $\omega \in A$ the arguments in \citet[Chapter 21]{vanderVaart}, this implies that 
\[
  \hat{F}_{Y_i\mid\bold{X}\myeq\bold{x}}^{-1}(t) \, \to\, F_{Y_i\mid\bold{X}\myeq\bold{x}}^{-1}(t), \text{ for all continuity points $t$ of $F_{Y_i\mid\bold{X}\myeq\bold{x}}^{-1}(t)$}
\]
on $A$ for the given subsequence. Again, as the subsequence $n(k)$ was arbitrary, this implies the result.
\end{proof}

\section{Simulation Details}
\label{appendix: simulation details}
In this section we describe in detail all our simulations shown in the main paper, together with the data used in the analysis. The data sets are available in the R-package \texttt{drf} as well.

\subsection{Air quality data}
\paragraph{Data.}
This data is obtained from the website of the Environmental Protection Agency website (\url{https://aqs.epa.gov/aqsweb/airdata/download_files.html}). We have daily measurements for $5$ years of data (2015-2019) for $6$ 'criteria' pollutants that form the Air Quality Index (AQI):
\begin{itemize}
    \item O$_3$ - ground ozone ($8$ hours' average, expressed in pieces per million (ppm))
    \item SO$_2$ - sulfur dioxide ($1$ hour average, expressed in pieces per billion (ppb))
    \item CO - carbon monoxide ($8$ hours' average, expressed in pieces per million (ppm))
    \item NO$_2$ - nitrogen dioxide ($1$ hour average, expressed in pieces per billion (ppb))
    \item PM$2.5$ - fine particulate matter smaller than $2.5$ micrometers ($24$ hours' average, expressed in $\nu g/m^3$)
    \item PM$10$ - large particulate matter, smaller than $10$ micrometers ($24$ hours' average, expressed in $\mu g/m^3$)
\end{itemize}
For the above quantities, we have the maximal and mean value within the same day. In our analysis we have used only the maximal intraday values.

The pollutants are measured at different measurement sites. For each site we have information about
\begin{itemize}
    \item site address (street, city, county, state, zip code)
    \item site coordinates (longitude and latitude)
    \item site elevation
    \item location setting (rural, urban, suburban) 
    \item how the land is used within a $1/4$ mile radius (agricultural, forest, desert, industrial, commercial, residential, blighted area, military reservation, mobile)
    \item date when the measurement site was put in operation
    \item date when the measurement site was decommissioned (NA if the site is still operational)
\end{itemize}
We have information about $19'739$ sites, much more than the number of $2'419$ sites from which we have measurements in years 2015-2019, since many sites were only operating in the past and are decommissioned.

In total there is $5'305'859$ pollutant measurements. Many pollutants are measured at the same site, but it is important to note that not every site measures every pollutant, so there is a lot of 'missing' measurements. It can also occur that there are several measuring devices for the same pollutant at the same site, in which case we just average the measurements across the devices and do not report those measurements separately.

\paragraph{Analysis.}
Since we have a lot of missing data, we use only the data points (identified by the measurement date and the measurement site) for which we have measurements of all the pollutants chosen as the responses. For that reason we also do not train DRF with all $6$ pollutants as the responses, but only those that we are interested in, since only $64$ sites measure all pollutants. For computational feasibility, we only use $50'000$ of the available measurements for the training step. We also omit the states Alaska and Hawaii and the US territories for plotting purposes.

To obtain the results displayed in Figure \ref{air_data}, we train the DRF with the measurements (intraday maximum) of the two pollutants PM$2.5$ and NO$_2$ as the responses, and the site longitude, latitude, elevation, land use and location settings as the predictors. We manually choose two decommissioned measurement sites (for which we have no measurements in years 2015-2019) as the test points. For each test point we obtain the weights to all training measurements. We further combine the weights for all measurements corresponding to the same site, which is represented by the symbol size in the top row. The bottom row shows the estimated distribution of the response, where the transparency (alpha) each training point corresponds to the assigned weight. We also add some estimated contours. 

For all plots in Figures \ref{air_data} and \ref{fig: functionals}, we train the single DRF with the same set of predictor variables and take the three pollutants O$_3$, SO$_2$ and PM$2.5$ as the responses. In this way we still have training data from many different sites (see the above discussion on missing data) and moreover, those are the 3 pollutants that most likely cross the threshold for the "Good" AQI category set by the EPA. Carbon monoxide (CO), for example, almost never crosses this threshold. 

In left plot of Figure \ref{comparison}, we compare the estimated CDF value with the standard classification forest which has the indicator $\1(\text{O}_3 < 0.055\text{ppm}, \text{SO}_2 < 36\text{ppb}, \text{PM}2.5 < 12.1\mu g/m^3)$ as the univariate response. In the right plot, we obtain the estimated CDF by fitting for each threshold a separate classification forest with an indicator $\1(\text{O}_3 \leq \text{threshold})$. We pick a test point such that the classification performs bad, just to illustrate that its estimated CDF need not be monotone, which cannot happen with DRF. In most of the cases, the estimated CDFs are very similar, as can also be seen from the left plot in Figure \ref{comparison}.

\subsection{Benchmark Analysis}
In this part we compare the resulting distributional estimates of DRF with several benchmark methods on a number of data sets. Because our target of estimation is now the whole conditional distribution one needs to use a distributional loss, and there appears to be no well-established choice in the literature. Furthermore, for any test point $\bold{x}_i$ we only have one observation $\bold{y}_i$ from $\P(\bold{Y}\mid \bold{X}=\bold{x})$, which makes performance evaluation of our estimator $\hat{\P}(\bold{Y} \mid \bold{X}=\bold{x})$ hard. We thus use the following performance measure: 

\begin{itemize}
\item \textbf{(NLPD loss)} For a fixed conditional distribution estimator $\hat{\P}(\bold{Y} \mid \bold{X}=\bold{x}_i)$, we sample a set of $m=500$ observations from which we estimate the conditional density via a Gaussian kernel estimator, using the $L_2$ loss with scale components and the median heuristic for the choice of the bandwidth parameter. We then evaluate the negative log-likelihood of the test observation $\bold{y}_i$ implied by the kernel estimate of the distribution and average over the test set (consisting of multiple pairs $(\bold{x}_i,\bold{y}_i)$). To reduce the dependence of these results on single large values of the log-likelihood, we use an $0.05-$trimmed mean to average the losses over the training set.
\end{itemize}
This loss definition provides a fair way to compare the ability to estimate the conditional distribution since most of the candidate methods only allow for sampling from the estimated conditional distribution $\hat{\P}(\bold{Y} \mid \bold{X}\myeq\bold{x}_i)$.

\subsubsection{Competing methods}
We compare DRF that uses the MMD splitting criterion with many existing methods that can be used for estimation of the conditional distribution.

\begin{itemize}
\item \textbf{Nearest Neighbor (k-NN)}: The standard k-nearest neighbors algorithm with the Euclidean metric. An estimated conditional distribution $\hat{\P}(\bold{Y}|\bold{X} = \bold{x})$ at a test point $\bold{x}$ is defined by a uniform distribution over the $k$ nearest observations in the training set. $k$ is chosen to be the square root of the training set size.

\item \textbf{Gaussian kernel (kernel)}: The estimate of the conditional distribution $\hat{\P}(\bold{Y}|\bold{X} = \bold{x})$ at a test point $\bold{x}$ is obtained by assigning to each training observation $(\bold{x}_i, \bold{y}_i)$ the weight proportional to the Gaussian kernel $k(\bold{x}, \bold{x}_i)$, analogously to usual kernel estimation methods. Median heuristic is used for bandwidth selection.

\item \textbf{Homogeneous distribution models}: This method makes the homogeneity assumption that the residuals have constant distribution and only the conditional mean changes. The estimate of the conditional distribution $\hat{\P}(\bold{Y}|\bold{X} = \bold{x})$ is obtained by first fitting a regression method of choice, computing the residuals, assigning the same weight to every residual and then adding those residuals to the predicted mean. We chose three different methods for the mean estimation:
\begin{enumerate}
\item \textbf{Random Forests (RF)}, a classical univariate regression forest is fitted independently for each response component;
\item \textbf{Extreme Gradient Boosting (XGBoost)}, a tree gradient boosting model (as described in \cite{Chen:XGB}) is fitted independently for each response;
\item \textbf{Deep Neural Network (DNN)}, a single deep neural network is fitted to predict the conditional mean of each response.
\end{enumerate}

\item \textbf{Conditional Generative Adversarial Neural Network (CGAN)}: The estimated conditional distribution $\hat{\P}(\bold{Y}|\bold{X} = \bold{x})$ is obtained through sampling from the discriminator with conditional feature $\bold{x}$. The implementation of the CGAN is taken from \cite{aggarwal2019benchmarking}. The architecture of the neural networks was taken to be the best one for the considered data sets among a set of candidates.

\item \textbf{Conditional Variational Auto-Encoder (CVAE)}: The estimated conditional distribution $\hat{\P}(\bold{Y}|\bold{X} = \bold{x})$ is obtained through sampling from the decoder of the CVAE with conditional feature $\bold{x}$. The implementation of the CVAE follows the one in \cite{sohn2015learning}. The architecture of the neural networks was taken to be the best one for the considered data sets among a set of candidates.

\item \textbf{Masked Autoregressive Flow (MAF)}: The estimated conditional distribution $\hat{\P}(\bold{Y}|\bold{X} = \bold{x})$ is obtained through sampling from the normalizing flow model with conditional feature $\bold{X}$. The implementation of the model follows the one presented in \cite{papamakarios2017masked}. The number of layers is chosen to be the best value from a set $\{5, 10\}$ for the considered data set.

\item \textbf{Conditional Mean Embedding (CME)}: The CME is calculated as in \eqref{formwewant} with the weights given as in e.g., \cite{CMEinDynamicalSystems, kernelmeanembeddingreview, OurapproachtoCME}. We choose both kernels to be Gaussian kernels with $\sigma=0.01$, as in \cite{OurapproachtoCME} and set $\lambda=0.01$. The estimated conditional distribution $\hat{\P}(\bold{Y}|\bold{X} = \bold{x})$ is obtained through sampling from the obtained weights, renormalized such that they lie in $[0,1]$ and sum up to one.

\end{itemize}

\subsubsection{Benchmark data sets}

Many benchmark data sets used come from the multiple target regression literature, where only the conditional means of the multivariate response is considered. We have used the data sets: \textbf{jura}, \textbf{slump}, \textbf{wq}, \textbf{enb}, \textbf{atp1d}, \textbf{atp7d}, \textbf{scpf}, \textbf{sf1} and \textbf{sf2} collected in the \texttt{Mulan} \citep{mtr} library.
Description about the dimensionality of the data sets, together with the descriptions of the outcomes and the regressors can be found in \citet{mtr} with links to the relevant papers introducing these data sets. In each data set categorical variables have been represented by the one-hot dummy encoding, the observations with missing data were removed together with constant regressors.

We additionally added $5$ data sets obtained from the data sets used in the main paper:
\begin{itemize}
    \item \textbf{copula}: Simulated Gaussian copula example where the response $Y$ is bivariate and whose marginal distribution is $N(0,1)$, but the correlation between $Y_1$ and $Y_2$ depends on  $X_1$.
    \item \textbf{birth1}: This data set is created from the CDC natality data and contains many covariates as predictors and the pregnancy length and birthweights as the responses. 
    \item \textbf{birth2}: This data set is similar as the above one, but we take pregnancy length as the predictor and add 3 more measures of baby's health as the response: APGAR score measured 5 minutes after birth and indicators whether there were any abnormal conditions and congenital anomalies.
    \item \textbf{wage}: This data set is created from the 2018 American Community Survey. We take the logarithmic hourly wage and gender as the response, as it was done in the fairness example in the main paper.
    \item \textbf{air}: This data set is obtained from the EPA air quality data. All six pollutants were taken as the response and we add both the information about the measuring site (location, which setting it is in, etc.), as well as the temporal information when the measurement has taken place (month, day of the week).
\end{itemize}

\subsection{Births data}
\paragraph{Data.}
This data set is obtained from the CDC Vital Statistics Data Online Portal (\url{https://www.cdc.gov/nchs/data_access/vitalstatsonline.htm}) and contains the information about the $\approx 3.8$ million births in 2018. However, as we do not need this many data points, we subsample $300'000$ of them. Even though the original data contains a lot of variables, we have taken only the following variables from the source data:
\begin{itemize}
    \item mother's age, height, weight before the pregnancy and BMI before pregnancy
    \item mother's race (black, white, asian, NHOPI, AIAN or mixed), marital status (married or unmarried) and the level of education (in total $8$ levels)
    \item father's age, race and education level
    \item month and year of birth
    \item plurality of the birth (how many babies were born at once)
    \item whether and when the prenatal care started
    \item length of the pregnancy
    \item delivery method (vaginal or C-section)
    \item birth order - the total number of babies born by the same mother (including the current one)
    \item birth interval - number of months passed since last birth (NA if this is the first child)
    \item number of cigarettes smoked per day on average during the pregnancy
    \item birthweight (in grams) and gender of the baby
    \item APGAR score (taken after 5min and 10min)
    \item indicators whether baby had any abnormal condition or some congenital anomalies
\end{itemize}

\paragraph{Analysis.}
After removing the data points with any missing entries and taking only the data points where the race of both parents is either black, white or Asian (for nicer plotting), we are left with $183'881$ data points. We use randomly chosen $100'000$ data points for training the DRF. We take the birthweight and the pregnancy length as the bivariate response and for the predictors we take: mother's age, race, education, marital status, height, BMI; father's age, race and education level; birth plurality, birth order, delivery method, baby's gender, number of cigarettes and indicator whether prenatal care took place.

For arbitrary test points from the data we can get the estimated weights by the fitted DRF, thus estimating the joint distribution of birthweight and pregnancy length conditional on all other variables mentioned above. Two such distributions are shown in Figure \ref{birthweight}. In addition we use the weights to fit a parametric model for the mean and $0.1$ and $0.9$ quantiles. This is done as follows:
\begin{itemize}
    \item We slightly upweight the data points where the pregnancy length is significantly above or below the usual range. This is to avoid the bulk of the data points to dominate the fit obtained for very long or short pregnancies.
    \item We apply the transformation $f(\cdot) = \log(\log(\cdot))$ on both the pregnancy length and the birthweight since then the scatterplots look much nicer.
    \item  We estimate the mean with smoothing splines with a small manually chosen number of degrees of freedom.
    \item The fitted mean is subtracted from the response (birthweight). The residuals seem well behaved with maybe slight, seemingly linear trend in standard deviation.
    \item We fit the $0.1$ and $0.9$ quantiles as the best linear functions that minimize the sum of quantile losses, by using the \texttt{quantreg} package \citep{koenker2012package}.
    \item The data is transformed back on the original scale by using the function $f^{-1}(\cdot) = \exp(\exp(\cdot))$.
\end{itemize}

For the right plot in Figure \ref{birthweight}, we have the following causal graph, as mentioned in the main paper:
$$\begin{tikzpicture}[
        > = stealth, 
        shorten > = 1pt, 
        auto,
        node distance = 3cm, 
        semithick 
    ]

    \tikzstyle{every state}=[
        draw = black,
        thick,
        fill = white,
        minimum size = 4mm
    ]

    \node[circle, draw=black] (T) at (0,0) {$T$};
    \node[circle, draw=black] (Z) at (2,1.3)  {$\boldsymbol{Z}$};
    \node[circle, draw=black] (L) at (2,-1) {$L$};
    \node[circle, draw=black] (B) at (4, 0) {$B$};
    \path[->] (T) edge node {} (L);
    \path[->,line width=1.4pt] (T) edge node {} (B);
    \path[dashed,->] (Z) edge node {} (T);
    \path[dashed,->] (Z) edge node {} (L);
    \path[dashed,->] (Z) edge node {} (B);
    \path[->] (L) edge node {} (B);
\end{tikzpicture}$$

We want to determine the direct effect (indicated in bold) of the twin pregnancy $T$ on the birthweight $B$ that is due to sharing of resources by the babies (space, food etc.) and is not due to the fact that twin pregnancy causes shorter pregnancy length $L$, which in turn causes the smaller birthweight. Another big issue is that we have confounding factors $\bold{Z}$ which can directly affect $B$, $L$ and $T$. For example, the number of twin pregnancies significantly depends on the parents' race, but so do the pregnancy length and the birthweight, e.g.\ black people have more twins, shorter pregnancies and smaller babies. We take all other variables as the potential confounders $\bold{Z}$ and adjust for all of them (mother's age, race, education, marital status, height, BMI; father's age, race and education level; birth plurality, birth order, baby's gender, number of cigarettes and indicator whether prenatal care took place). In order to do it, we fit the same DRF as before, where $\bold{Z}$ and $T$ are the predictors and ($B$, $L$) is the bivariate response for which we can fit the parametric model described above. We compute then the interventional distribution $\P(B\mid do(T=t, L=l))$ for all values of $t$ and $l$, by using the do-calculus to adjust the confounding $\bold{Z}$ via the backdoor criterion \citep{pearl2009causality}, where we also use the obtained parametric regression fit. In this way we can generalize the fit well, which is important when doing the do-calculus, since we are interested in some hypothetical combinations of covariates which might not occur frequently in the observed data, such as very long twin pregnancies.

\subsection{Wage data}
\paragraph{Data.}
The PUMS (Public Use Microdata Area) data from the 2018 1-Year American Community Survey is obtained from the US Census Bureau API (\url{https://www.census.gov/content/dam/Census/data/developers/api-user-guide/api-guide.pdf}). The survey is sent to $\approx 3.5$ million people annually and aims to give more up to date data than the official census that is carried out every decade. The 2018 data set has $3'214'539$ anonymized data points for the 51 states and District of Columbia. Even though the original survey contains many questions, we have retrieved only the subset of variables that might be relevant for the salaries:
\begin{itemize}
    \item person's gender, age, race (AIAN, black, white, asian, mix, NHOPI, other), indicator of hispanic origin, state of residence, US citizenship indicator (5 ordered levels), indicator whether the person is foreign-born
    \item person's marital status, number of own children in the same household and the number of family members in the same household
    \item person's education level (24 ordered levels) and level of English knowledge (5 ordered levels)
    \item person's employment status (employed, not at work, not in workforce, unemployed)
    \item for employed people we have annual salary earnings, number of weeks worked in a year and average number of hours worked per week
    \item for employed people we have employer type (government, non-profit company, for-profit company, self-employed), occupation (530 levels), industry where the person works (271 levels) and the geographical unit where the person works (59 levels)
    \item statistical weight determined by the US Census Bureau which aims to correct sampling bias
\end{itemize}

For our purposes, since we want to analyze the unfairness of the gender pay gap, we consider only employed people that are at least $17$ years of age, have worked full-time (at least $48$ weeks in a year) and have worked at least $16$ hours a week on average. We also omit the self-employed persons, since they often report zero annual salary and the pay gap there, if exists, cannot be called unfair as the salary is not determined by any employer. Since there are no missing data which would need to be omitted, we finally end up with $1'071'866$ data points.

\paragraph{Analysis.} 
We scale the salary with the amount of time spent working (determined from the number of weeks worked and average hours worked per week) to compute the logarithm of the hourly wages. The scaling with the time spent working is necessary, since full-time employed men spend on average $11\%$ more time working than women. The logarithmic transformation is used since the salaries are very skewed (positively) and logarithmic wages show nice behavior.

We also reduce the large number of levels of some of the categorical variables: for the occupation we use the group of $530$ jobs into $20$ categories provided in the SOC system (\url{https://www.bls.gov/soc/}); for the industry information we group the $271$ possibilities in $23$ categories as is done in the NIACS classification (\url{https://www.bls.gov/bls/naics.htm}); for the work place we group the $59$ US states and foreign territories into 9 economic regions (including the "abroad" category), as determined by the Bureau of the Economic Analysis (\url{https://apps.bea.gov/regional/docs/regions.cfm}).

We want to investigate how the logarithmic hourly wage $W$ is affected by the gender $G$, depending on the other factors $\bold{Z}$: age, race, hispanic origin, citizenship, being foreign-born, marital status, family size, number of children, education level, knowledge of English, occupation, industry type and place of work. To do this, we train DRF with bivariate response $(W, G)$ and predictors $\bold{Z}$ on a subsample of $300'000$ data points. With it we can answer the following: For fixed values of the covariates $\bold{Z}=\bold{z}$, what are the distributions of salaries of men and women. In addition, we can determine the "propensities", i.e. the proportion of men and women corresponding to $\bold{Z}=\bold{z}$. This information is displayed in the top row of Figure \ref{fig: wage} for a combination of covariates corresponding to some person in the left-out data. It illustrates how the distribution of salaries and their relationship can vary with different covariates $\bold{Z}$.

We do not only want to determine how different covariates $\bold{Z}$ affect the salary distribution, but we want to quantify the overall fairness of the pay, after appropriate adjustments. In Figure \ref{fig: wage}, we can see that the observed salaries of men and women differ noticeably, and this difference in the logarithmic wages means that an average woman has $17\%$ smaller salary than an average men. However, the question is how much of this difference is "fair". For example, the effect of the gender on the salary can be mediated through some variables such as, for example, the occupation, workplace or the level of education and we are only interested in the direct effect. This is illustrated in the following causal graph:

\begin{center}
\begin{tikzpicture}[
        > = stealth, 
        shorten > = 1pt, 
        auto,
        node distance = 3cm, 
        semithick 
    ]

    \tikzstyle{every state}=[
        draw = black,
        thick,
        fill = white,
        minimum size = 4mm
    ]

    \node[circle, draw=black] (G) at (0,0) {$G$};
    \node[circle, draw=black] (Z) at (2,1.3)  {$\boldsymbol{Z}$};
    \node[circle, draw=black] (W) at (4, 0) {$W$};
    \path[->,line width=1.4pt] (G) edge node {} (W);
    \path[->] (G) edge node {} (Z);
    \path[->] (Z) edge node {} (W);
\end{tikzpicture}
\end{center}

If we assume that people have the freedom to choose such variables themselves, the pay gap which arises from such different choices for men and women is fair and those variables are resolving variables \citep{kilbertus2017avoiding}. Another way that the pay gap can be explained is that some of the variables are not statistically independent of the gender in the population of full-time employed people (e.g.\ the race or the age), but they themselves have an effect on the salary.

In order to address those issues, we compute the distribution of the nested counterfactual $W(\text{male}, \bold{Z}(\text{female}))$, corresponding to the wages of a person that has characteristics $\bold{Z}$ as a woman, but which was treated as a man for obtaining the salary. Such distribution can be computed from the DRF, as described in the main paper: we randomly draw a female person and for its characteristics $\bold{z}$ we obtain the conditional distribution of wages of men with those characteristics $\P(W\mid G=\text{male}, \bold{Z}=\bold{z})$ via the weights. Those distributions are averaged over random draw of $1'000$ women (that were not used in the training step of the DRF). In case that the difference in salary is fair, the distribution of the counterfactual salary $W(\text{male}, \bold{Z}(\text{female}))$ should be exactly the same as the observed distribution of women's wages. However, we can see that this is not the case and that the median salaries of the two distributions differ by $11\%$. Even though this is smaller than the $17\%$ we obtain by comparing only the observational distributions, it still shows that women are paid less compared to men.

\section{Additional Synthetic Examples}
\label{appendix: additional examples}
\subsection{Univariate distributional regression.} 

The univariate response (case $d=1$) is by far the most studied case in the regression literature. However, at the level of the whole conditional distribution and compared to the multivariate case, the range of practically interesting targets $\tau(\bold{x})$ is quite reduced, e.g.\ conditional mean of some functional $\E(f(Y) \mid \bold{X})$ or conditional quantiles $Q_{\alpha}(Y \mid \bold{X})$. 
In Figure \ref{fig: univariate}, we have compared the performance of $\text{DRF}$ (which uses the MMD splitting criterion) with $3$ different tree-based univariate methods that can estimate the conditional quantiles in the univariate case:
\begin{itemize}
    \item \textbf{QRF}: the quantile regression forest introduced in \citet{meinshausen2006quantile}, which is equivalent to  $\text{DRF}_\text{CART}$ in the univariate case and uses the standard forest construction \citep{breiman2001random} to get the weights.
    \item \textbf{GRF}: the quantile forest proposed in \citet{athey2019generalized} based on the generalized random forest algorithm.
    \item \textbf{TRF}: the transformation forest, a model-based recursive partitioning approach, introduced in \citet{hothorn2017transformation}.
\end{itemize}

Additionally to the visual inspection of the performance given in Figure \ref{fig: univariate}, we present here a formal performance comparison for the three simulation scenarios also described in the main paper. The first two scenarios correspond exactly to the examples given in \citet{athey2019generalized} for the quantile  version of the GRF, which serve to illustrate its advantage compared to the conventional quantile regression forest (QRF) \citep{meinshausen2006quantile}. Scenario $3$, in addition, aims at assessing the ability to detect a change of distribution that does not relate to a change in the first two moments.

\begin{table}[h]
\scriptsize\setlength{\tabcolsep}{1pt}
\centering
\begin{tabular}{ccccccccccccccccc}
    \toprule \textbf{method} &
    \textbf{0.1} & \textbf{0.3} & \textbf{0.5} & \textbf{0.7} & \textbf{0.9} &\textbf{0.1} & \textbf{0.3} & \textbf{0.5} & \textbf{0.7} & \textbf{0.9} &\textbf{0.1} & \textbf{0.3} & \textbf{0.5} & \textbf{0.7} & \textbf{0.9}\\
    \midrule 
    $\text{DRF}$ & $\bf{0.180}$ \rule{0pt}{4ex}     & $\bf{0.353}$ & $\bf{0.402}$ & $\bf{0.349}$ & $\bf{0.177}$ & $\bf{0.267}$ \rule{0pt}{4ex}     & $\bf{0.518}$ & $\bf{0.589}$ & $\bf{0.514}$ & $\bf{0.264}$&  $0.140$ \rule{0pt}{4ex}     & $\textbf{0.298}$ & $0.371$ & $\textbf{0.351}$ & $\bf{0.198}$ \\
    $\text{QRF}$ & $0.182$ \rule{0pt}{4ex}     & $0.357$ & $0.482$ & $0.351$ & $0.179$ & $0.285$ \rule{0pt}{4ex}     & $0.526$ & $0.592$ & $0.521$ & $0.281$& $0.144$ \rule{0pt}{4ex}     & $0.299$ & $0.376$ & $0.357$ & $0.204$\\
    $\text{GRF}$ & $0.183$ \rule{0pt}{4ex}     & $0.359$ & $0.409$ & $0.354$ & $0.180$ &  $0.278$ \rule{0pt}{4ex}     & $0.522$ & $0.590$ & $0.517$  & $0.274$&$\bf{0.139}$ \rule{0pt}{4ex}     & $0.299$ & $\bf{0.371}$ & $\textbf{0.351}$ & $0.200$\\
    $\text{TRF}$ & $0.183$ \rule{0pt}{4ex}     & $0.358$ & $0.408$ & $0.353$ & $0.180$ &$0.272$ \rule{0pt}{4ex}     & $0.519$ & $0.590$ & $0.516$ & $0.268$&$0.145$ \rule{0pt}{4ex}     & $0.300$ & $0.373$ & $\bf{0.351}$ & $0.200$ \\
    $\text{5-NN}$ & $0.232$ \rule{0pt}{4ex}     & $0.402$ & $0.452$ & $0.404$ & $0.239$ &  $0.354$ \rule{0pt}{4ex}     & $0.587$ & $0.657$ & $0.584$ & $0.340$&  $0.187$ \rule{0pt}{4ex}     & $0.348$ & $.424$ & $0.406$ & $0.260$ \\
    $\text{20-NN}$ & $0.192$ \rule{0pt}{4ex}     & $0.368$ & $0.418$ & $0.366$ & $0.192$ &$0.290$ \rule{0pt}{4ex}     & $0.535$ & $0.606$ & $0.533$ & $0.283$&$0.146$ \rule{0pt}{4ex}     & $0.310$ & $0.382$ & $0.365$ & $0.211$ \\
    $\text{40-NN}$ & $0.187$ \rule{0pt}{4ex}     & $0.364$ & $0.413$ & $0.360$ & $0.185$ &$0.283$ \rule{0pt}{4ex}     & $0.528$ & $0.596$ & $0.522$ & $0.273$&   $0.141$ \rule{0pt}{4ex}   & $0.303$ & $0.376$ & $0.357$ & $0.204$\\
    \bottomrule
\end{tabular}
\caption{Average quantile losses for scenarios 1 (left), 2 (middle), 3 (right) over the repeated out-of-sample validations.}
\label{tab: scenarios}
\end{table}

The performance of each method is evaluated as follows: We consider the quantile (pinball) loss for the resulting quantile estimates provided by each candidate method for the different percentiles $\alpha \in \{0.1,0.3,0.5,0.7,0.9\}$. The losses are presented and computed based on repeated ($10$ times) out-of-sample validation (with a $70-30\%$ ratio between the training and testing sets sizes). The results are presented respectively for each scenario in Table \ref{tab: scenarios}. We additionally include the estimates obtained by $k$-nearest neighbor algorithm for several different values of $k$. 

\begin{table}[!htb]
\setlength{\tabcolsep}{1pt}

\centering

\caption{Average mean squared errors (MSE) for the three scenarios described above over $10$ repeated out-of-sample validations for estimating the conditional mean.}
\label{tab cond mean}

\medskip

\begin{tabular}{cccc}
    \toprule \textbf{method} &
    \textbf{SC1} & \textbf{SC2} & \textbf{SC3}  \\
    \midrule 
    $\text{RF}$ & $1.0545$  \rule{0pt}{4ex} & $2.4940$ & $0.9624$  \\
    $\text{DRF}$ & $\bf{1.0412}$ \rule{0pt}{4ex} & $\bf{2.4561}$ & $\bf{0.9340}$ \\
    \bottomrule
\end{tabular}
\end{table}

Furthermore, Table \ref{tab cond mean} shows non-inferiority of $\text{DRF}$ compared to the standard Random Forest for the classical task of estimating the conditional mean. We observe that $\text{DRF}$ has a good relative performance that makes it on par with existing algorithms, some of which specially designed for the problem of estimating conditional quantiles. Furthermore, it seems that the MMD splitting criterion improves the CART criterion for distributional regression in a general heterogeneous case (see e.g.\ scenarios 2 and 3), since the CART criterion is suitable only for detecting the change in the conditional mean, unlike MMD.

Dependence of the estimated quantiles on $X_1$ for each method (except the $k$-nearest neighbors) is displayed in the main paper in Figure \ref{fig: univariate}. In addition, the estimates of $2$-Wasserstein distance to the true conditional distribution, quantifying the difference in the estimated CDFs, are shown in Figure \ref{fig: wasserstein}.

\begin{figure}[h]
    \centering
    \includegraphics[width=1\linewidth]{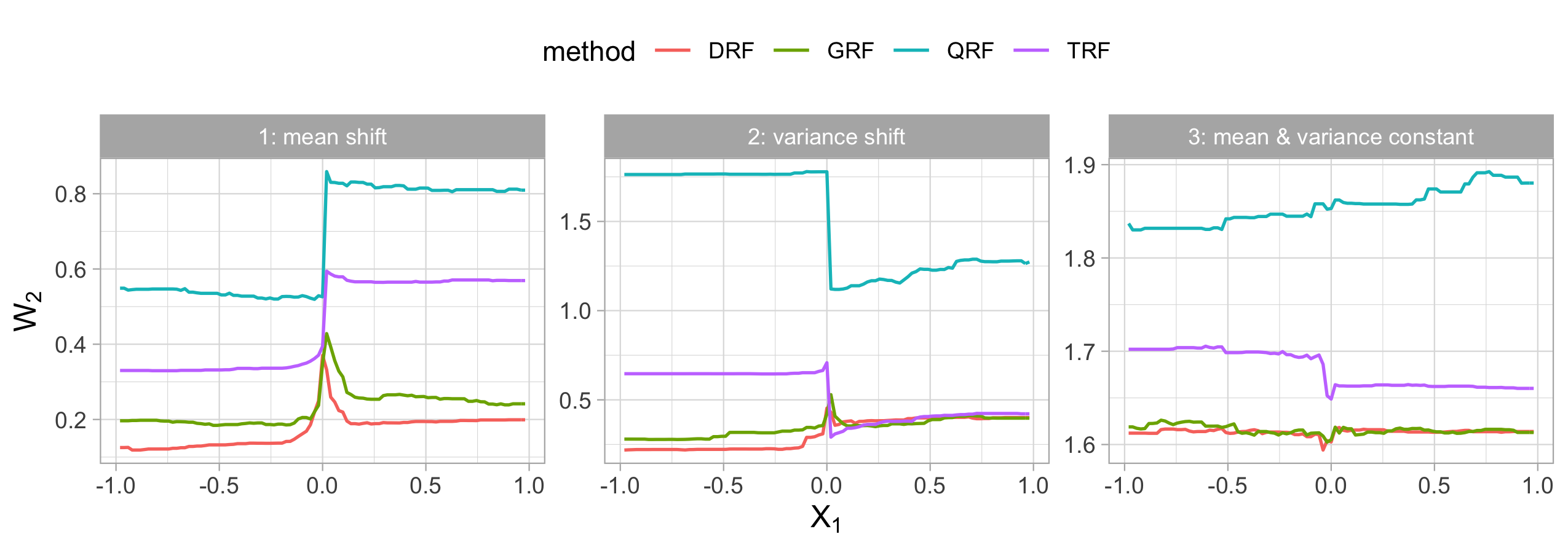}
    \caption{Scatter plot of discrete estimates of the 2-Wasserstein distance between the estimated and true conditional distribution against $X_1$ for a grid of test points of the form $(x_1, 0, \ldots, 0)$. The 2-Wasserstein distance is estimated over a grid of $100$ quantiles with levels equally spaced on $[0,1]$. Different colors corresponds to different methods: DRF (red), GRF (green), QRF (blue), TRF (purple).}
    \label{fig: wasserstein}
\end{figure}

\subsection{Heterogeneous regression and causal effects}
We explore here the performance of DRF on the synthetic data for the setup of heterogeneous regression, where we want to obtain the regression fit of $Y$ on the explanatory (or treatment) variables $\bold{W}$, but where this fit might change depending on values of $\bold{X}$. This can be done by DRF by using $\bold{X}$ as predictors and $(\bold{W}, Y)$ as the response and then using some standard regression method for regressing $Y$ on $\bold{W}$ in the second step, having already obtained the weights that describe the conditional distribution $\P((\bold{W}, Y) \mid \bold{\mathbf{X}=\mathbf{x}})$.

The most important such setup is when the data come from the following causal graph:
\begin{center}
\begin{tikzpicture}[
        > = stealth, 
        shorten > = 1pt, 
        auto,
        node distance = 3cm, 
        semithick 
    ]

    \tikzstyle{every state}=[
        draw = black,
        thick,
        fill = white,
        minimum size = 4mm
    ]

    \node[circle, draw=black] (W) at (0,0) {$\boldsymbol{W}$};
    \node[circle, draw=black] (X) at (2,1.3)  {$\boldsymbol{X}$};
    \node[circle, draw=black] (Y) at (4, 0) {$Y$};
    \path[->,line width=1.1pt] (W) edge node {} (Y);
    \path[->] (X) edge node {} (W);
    \path[->] (X) edge node {} (Y);
\end{tikzpicture}
\end{center}
In the case of such a causal graph, $\bold{X}$ are confounding variables, which we need to adjust for to understand the causal effect of $\bold{W}$ on $Y$. Not only can the marginal distributions of $Y$ and $\bold{W}$ be affected by $\bold{X}$, but also the regression fit (e.g.\ the regression coefficients).

\subsubsection{CATE and ATE}
One special case of this setup that is intensively studied in the causal literature is when $W$ is a (univariate) binary treatment variable. In this case we are interested in the distribution of the potential outcomes $Y(W=0)$ and $Y(W=1)$ and especially in their difference. It is commonly measured by using the Conditional Average Treatment Effect
$$\text{CATE}(\bold{x}) = \E[Y(W=1) - Y(W=0) \mid \bold{\mathbf{X}=\mathbf{x}}]$$
and the Average Treatment Effect 
$$\text{ATE} = \E[Y(W=1) - Y(W=0)] = \E[\text{CATE}(\bold{X})].$$

\paragraph{Competing methods.} We will compare the performance of the DRF with the following methods, specially designed for estimation of the CATE (or ATE)
\begin{itemize}
    \item Double Machine Learning (DML) of \citet{chernozhukov2018double}, which assumes the model $Y = m(\bold{X}) + W\theta + \epsilon$ with constant treatment effect and can thus only be used for estimating ATE and not CATE.
    \item X-learner (XL) introduced in \citet{kunzel2019metalearners} (the version with RF learners)
    \item Causal Forest (CF) introduced in  \citet{wager2018estimation, athey2019generalized} (we use the GRF version \citep{athey2019generalized} with local centering that substantially improves on the version in \citet{wager2018estimation})
\end{itemize}
In order to make the comparison fair, we use the local centering approach for DRF as well.

\paragraph{Data.} We will use the following data models for our simulations, where the first three are taken directly from \citet{athey2019generalized}:
\begin{enumerate}
    \item In this model $X_3$ is a confounder affecting both $W$ and $Y$:
    \begin{align*}
        &\mathbf{X} \sim U(0,1)^p, \quad W \mid \bold{X} \sim \text{Bernoulli}\left(\frac{1}{4}(1+\beta_{2,4}(X_3))\right),\\
        &Y \mid \bold{X}, W \sim 2\left(X_3 - \frac{1}{2}\right) + N(0,1),
    \end{align*}
    where $\beta_a,b(x)$ is the density of the beta random variable with parameters $a$ and $b$.
    
    \item In this model the treatment effect is heterogeneous, i.e. how $W$ affects $Y$ changes with $X_1$ and $X_2$:
    \begin{align*}
        &\mathbf{X} \sim U(0,1)^p, \quad W \mid \bold{X} \sim \text{Bernoulli}(0.5),\\
        & Y \mid \bold{X}, W \sim \left(W - \frac{1}{2}\right)\eta(X_1)\eta(X_2) + N(0,1),
    \end{align*}
    where $\eta(x) = 1+\left(1+e^{-20(x - \tfrac{1}{3})}\right)^{-1}.$

    \item This model is a combination of the previous two, so the treatment effect is heterogeneous and we have confounding:
    \begin{align*}
        &\mathbf{X} \sim U(0,1)^p, \quad W \mid \bold{X} \sim \text{Bernoulli}\left(\frac{1}{4}(1+\beta_{2,4}(X_3))\right), \\
        &Y \mid \bold{X}, W \sim 2\left(X_3 - \frac{1}{2}\right) + \left(W - \frac{1}{2}\right)\eta(X_1)\eta(X_2) + N(0,1).
    \end{align*}
    
    
    \item The following model is similar to above, with slightly different structure, where $X_2$ induces the confounding effects and $X_1$ makes the treatment heterogeneous:
    \begin{align*}
        &\mathbf{X} \sim U(0,1)^p, \quad W \mid \bold{X} \sim \text{Bernoulli}\left(\text{expit}(4X_2 - 2)\right), \\
        &Y \mid \bold{X}, W \sim 100X_2^2 + \left(W - \frac{1}{2}\right)\sin(3X_1) + N(0,1).
    \end{align*}
\end{enumerate}

\paragraph{Results.} For every model we generate $n$ data points $(X_1,\ldots, X_p, W, Y)_{i=1,\ldots,n}$. We run all methods and compute the root mean squared error of the obtained CATE estimate on a randomly generated test set $\mathbf{X}_{\text{test}}$ containing $1000$ data points. CATE corresponds to the coefficient of $W$ in the data generating mechanism of $Y$. We repeat the same procedure $100$ times and report the average result. For methods other than the DML, we estimate ATE by averaging the CATE estimates over the randomly generated test set. The results can be seen in Table \ref{CATE results} and Figure \ref{CATE plot}. Even though DRF is performing less well in general compared to the methods that are specially designed for the task of estimating CATE, we can still see that its estimates are fairly good.

\begin{table}[h]
\footnotesize\setlength{\tabcolsep}{2pt}
\begin{minipage}{.4\linewidth}
\centering

\caption{RMSE for the CATE, averaged over $1000$ test points and $100$ overall repetitions.}
\label{CATE results}

\medskip
\begin{tabular}{ccc@{\hskip 0.2in}ccc}
\toprule model & $n$ & $p$ & \textbf{DRF} & \textbf{CF} & \textbf{XL}\\
\hline
$1$ & $800$ & $10$ & $0.140$ \rule{0pt}{4ex} & $\bf{0.109}$ & $0.149$\\
$1$ & $1600$ & $10$ & $0.119$ \rule{0pt}{4ex}& $\bf{0.085}$ & $0.122$ \\
$1$ & $800$ & $20$ & $0.125$ \rule{0pt}{4ex} & $\bf{0.094}$ & $0.128$ \\
$1$ & $1600$ & $20$ & $0.105$ \rule{0pt}{4ex} & $\bf{0.076}$ & $0.107$ \\
\hline

$2$ & $800$ & $10$ & $0.452$ \rule{0pt}{4ex} & $0.319$ & $\bf{0.288}$\\
$2$ & $1600$ & $10$ & $0.285$ \rule{0pt}{4ex} & $0.234$ & $\bf{0.228}$ \\
$2$ & $800$ & $20$ & $0.568$ \rule{0pt}{4ex} & $0.336$ & $\bf{0.306}$ \\
$2$ & $1600$ & $20$ & $0.341$ \rule{0pt}{4ex} & $0.254$ & $\bf{0.241}$ \\
\hline

$3$ & $800$ & $10$ & $0.621$ \rule{0pt}{4ex} & $0.328$ & $\bf{0.319}$\\
$3$ & $1600$ & $10$ & $0.453$ \rule{0pt}{4ex} & $0.243$ & $\bf{0.237}$ \\
$3$ & $800$ & $20$ & $0.708$ \rule{0pt}{4ex} & $\bf{0.343}$ & $0.346$ \\
$3$ & $1600$ & $20$ & $0.533$ \rule{0pt}{4ex} & $0.257$ & $\bf{0.256}$ \\
\hline

$4$ & $800$ & $10$ & $0.320$ \rule{0pt}{4ex} & $\bf{0.273}$ & $0.682$\\
$4$ & $1600$ & $10$ & $0.285$ \rule{0pt}{4ex} & $\bf{0.228}$ & $0.389$ \\
$4$ & $800$ & $20$ & $0.316$ \rule{0pt}{4ex} & $\bf{0.289}$ & $0.722$ \\
$4$ & $1600$ & $20$ & $0.291$ \rule{0pt}{4ex} & $\bf{0.248}$ & $0.412$ \\
\bottomrule
\end{tabular}
\end{minipage}
\begin{minipage}{.1\linewidth}
\hspace{1cm}
\end{minipage}
\begin{minipage}{.4\linewidth}
\centering
\caption{RMSE for the ATE, averaged over $100$ repetitions.\\}
\label{ATE results}

\medskip
\begin{tabular}{ccc@{\hskip 0.2in}cccc}
\toprule model & $n$ & $p$ & \textbf{DRF} & \textbf{CF} & \textbf{XL} & \textbf{DML}\\
\hline
$1$ & $800$ & $10$ & $0.0841$ \rule{0pt}{4ex} & $\bf{0.0806}$ & $0.0858$ & $0.0843$\\
$1$ & $1600$ & $10$ & $0.0526$ \rule{0pt}{4ex}& $\bf{0.0517}$ & $0.0523$ & $0.0538$ \\
$1$ & $800$ & $20$ & $0.0786$ \rule{0pt}{4ex} & $\bf{0.0762}$ & $0.0802$ & $0.0785$ \\
$1$ & $1600$ & $20$ & $\bf{0.0585}$ \rule{0pt}{4ex} & $0.0588$ & $0.0609$ & $0.0625$ \\
\hline

$2$ & $800$ & $10$ & $\bf{0.0844}$ \rule{0pt}{4ex} & $0.0880$ & $0.0877$ & $0.0891$\\
$2$ & $1600$ & $10$ & $\bf{0.0567}$ \rule{0pt}{4ex}& $0.0587$ & $0.0593$ & $0.0584$ \\
$2$ & $800$ & $20$ & $0.0783$ \rule{0pt}{4ex} & $\bf{0.0767}$ & $0.0788$ & $0.0872$ \\
$2$ & $1600$ & $20$ & $\bf{0.0645}$ \rule{0pt}{4ex} & $0.0657$ & $0.0665$ & $0.0656$ \\
\hline

$3$ & $800$ & $10$ & $\bf{0.0914}$ \rule{0pt}{4ex} & $0.0916$ & $0.0932$ & $0.1116$\\
$3$ & $1600$ & $10$ & $\bf{0.0573}$ \rule{0pt}{4ex}& $0.0581$ & $0.0599$ & $0.0778$ \\
$3$ & $800$ & $20$ & $\bf{0.0858}$ \rule{0pt}{4ex} & $0.0917$ & $0.0922$ & $0.1114$ \\
$3$ & $1600$ & $20$ & $\bf{0.0623}$ \rule{0pt}{4ex} & $0.0673$ & $0.0599$ & $0.0925$ \\
\hline

$4$ & $800$ & $10$ & $\bf{0.1061}$ \rule{0pt}{4ex} & $0.1075$ & $0.2554$ & $0.9734$\\
$4$ & $1600$ & $10$ & $\bf{0.0677}$ \rule{0pt}{4ex}& $0.0665$ & $0.1028$ & $0.5542$ \\
$4$ & $800$ & $20$ & $\bf{0.1008}$ \rule{0pt}{4ex} & $0.1046$ & $0.2308$ & $2.7805$ \\
$4$ & $1600$ & $20$ & $\bf{0.0655}$ \rule{0pt}{4ex} & $0.0660$ & $0.0947$ & $1.834$ \\
\bottomrule
\end{tabular}

\end{minipage} 
\end{table}

\begin{figure}[h]
    \centering
    \includegraphics[width=1\linewidth]{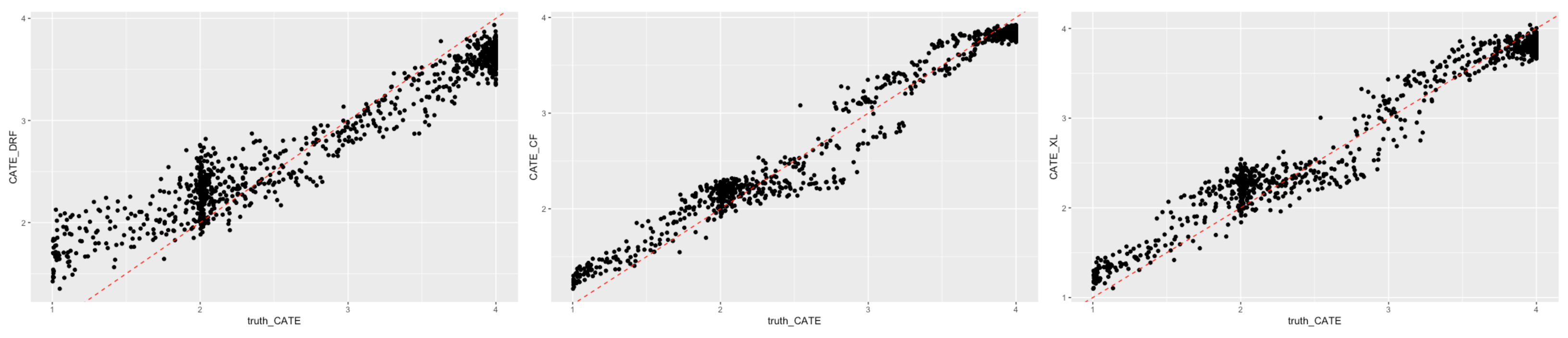}
    \caption{Estimates of the CATE for DRF (left), Causal Forest (middle), and X-learner (right) plotted against the true cate on the x-axis for Model $3$ with $n=1600$, $p=20$.}
    \label{CATE plot}
\end{figure}

\subsubsection{Continuous \texorpdfstring{$W$}{W}, linear treatment effect}
When the treatment variable $W$ is continuous, many methods designed for binary treatments, such as the X-learner \citep{kunzel2019metalearners} cannot be used. However, many important real-word examples fall within this framework. As an example, we might be interested in how the amount of medicine $W$ affects some biological parameter of interest $Y$ (conditionally on $\bold{X}$). When $W$ affects $Y$ linearly conditionally on $\bold{X}$, one can still use the Causal Forest (CF) \citep{athey2019generalized, wager2018estimation} method, which makes the splits based on the slope of the conditional linear fit $Y\sim W$. Due to its generality and versatility, DRF can trivially be used in such setting as well. 

To illustrate this, we consider the Model 3, as described in the previous section, which is also taken from \citet{athey2019generalized}, but where we change the distribution of the binary treatment variable so that it is continuous and it has a normal distribution with the same mean and variance, which depend on $\bold{X}$. In this model $W$ affects $Y$ linearly, which is a crucial assumption for the CF approach to work. We take $n=10000$ and $p=10$. The concept of CATE does not exist in this form in such setup and therefore we consider how the forest obtained by each method estimates both the intercept and the slope of the fit $Y\sim W$, conditionally on $\bold{X}$. The results can be seen in Figure \ref{CATE-cts}. We see that the estimate of the slope for DRF is slightly worse than for the CF, whose forest construction is specially designed for estimating the conditional slope. However, DRF estimates the intercept significantly better than the CF, especially in combination the local centering approach, which uses the centered data $Y - \widehat{Y}(\bold{X})$ and $W - \widehat{W}(\bold{X})$ instead. In this example the slope depends only on $X_2$, whereas the intercept depends on both $X_1$ and $X_2$. Since CF targets only the slope for forest construction, it will split mostly on $X_2$ and not on $X_1$, which leads to poor estimate of the intercept term. On the other hand, DRF splits both on $X_1$ and $X_2$, depending on the size of their effect on the joint distribution of $(W, Y)$. For many applications, especially in causality (see also the example in next section), it is essential to know the whole conditional distribution $\P(Y \mid W, \bold{X})$ so the DRF approach might be more beneficial than the CF. 

\begin{figure}
    \centering
    \includegraphics[width=1\linewidth]{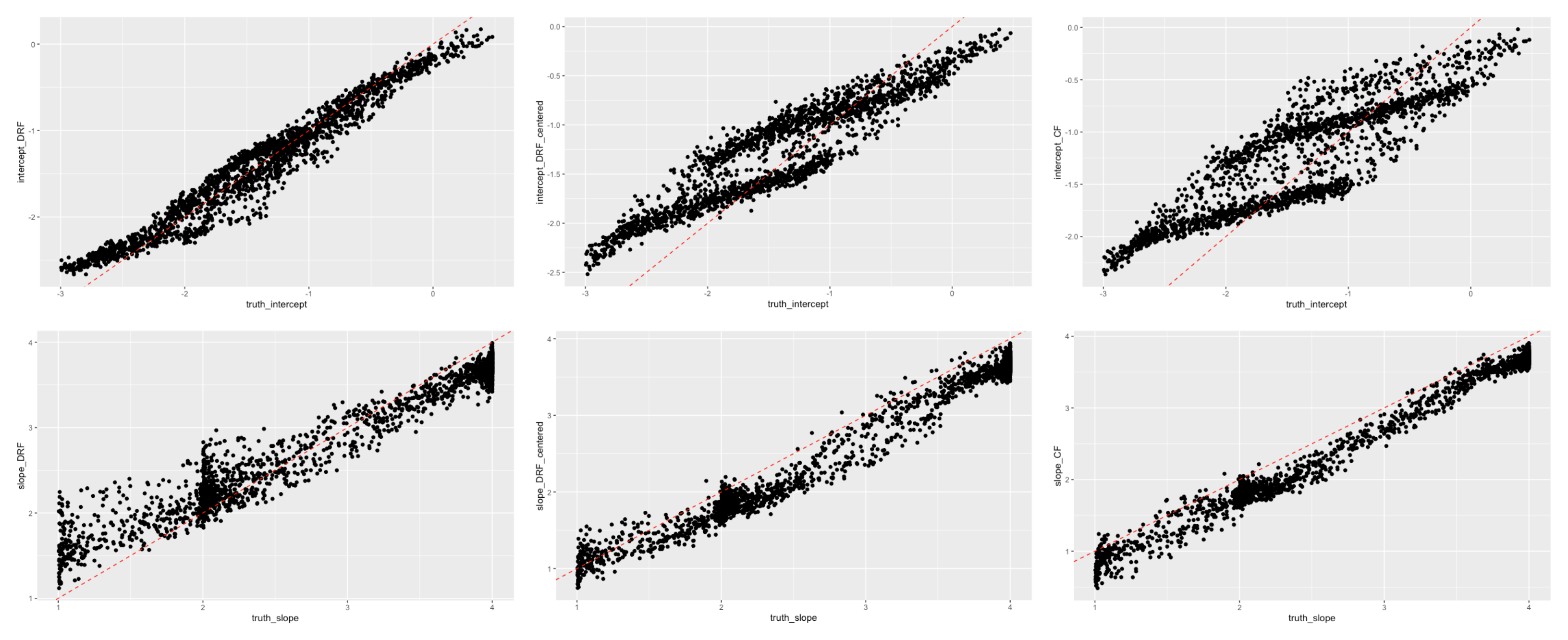}
    \caption{The estimates of the intercept (top row) and the slope (bottom row) for the linear fit conditional on $\bold{X}$, against the true values on the x-axis, obtained for the plain DRF (left), DRF with local centering as in \citet{athey2019generalized} (middle) and CF (right), which uses the local centering approach. The data is generated from Model 3 described in the previous section, with continuous treatment variable $W$ with the same mean and variance conditionally on $\bold{X}$.}
    \label{CATE-cts}
\end{figure}

\subsubsection{Nonlinear treatment effect}
There are very few methods that can estimate the treatment effect when the treatment variable $W$ is continuous and affects $Y$ nonlinearly, as it is commonly the case in real world settings. Here we demonstrate that DRF can be easily used in such setup as well, as opposed to the CF, which assumes a linear, though heterogeneous, treatment effect of $W$ on $Y$. We further show how the obtained regression fits $Y\sim W$ conditional on $\bold{X}$ can be used to estimate the causal effect $\E[Y \mid do(W=w)]$, as it is done in the main paper in the birth data example:
$$\E[Y \mid do(W=w)] = \int \E[Y \mid W=w, \bold{X}=\bold{x}]\P(\bold{X}=\bold{x})d\bold{x}.$$
We compare the performance of DRF with the straightforward and commonly used approach, where we first regress $Y$ on $(W,\bold{X})$ and use this regression fit which estimates $\E[Y \mid W=w, \bold{X}=\bold{x}]$ together with the above formula to estimate the causal effect.

\paragraph{Data} We consider the following example, similar to the previous examples:
\begin{align*}
    &\bold{X} \sim U(0,1)^p, \quad W \mid \bold{X} \sim \frac{1}{2}\left|1 + 4X_3 + N(0,1)\right|,\\ 
    & Y \mid \bold{X}, W \sim 3\left(X_3 - \frac{1}{2}\right) + 3X_1\sin(3W) + X_2N(0,1).
\end{align*}

Therefore, $W$ affects $Y$ highly nonlinearly through a sine function. $X_3$ is a confounding variable that affects the marginal distributions of $Y$ and $W$. $X_2$ regulates the error level for $Y$, whereas $X_1$ makes the treatment effect heterogeneous.
This is illustrated in the following plot:
\begin{figure}[H]
    \centering
    \includegraphics[width=1\linewidth]{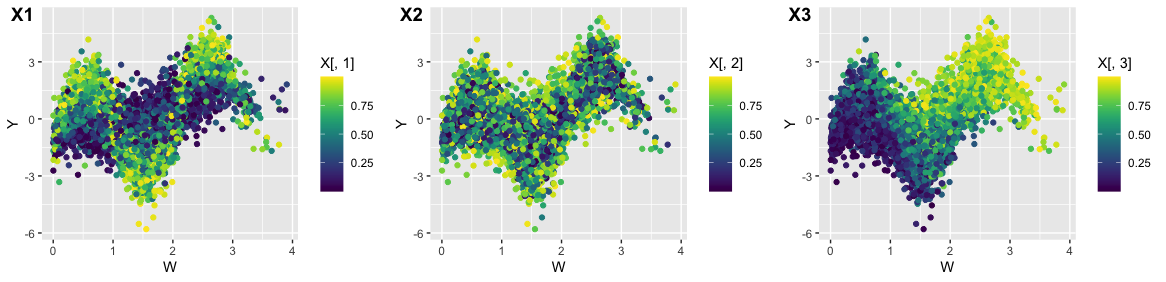}
    \caption{Visualization how $X_1, X_2, X_3$ affect the conditional nonlinear regression fit $Y\sim W$. $X_1$ changes the effect size, $X_2$ changes the noise level, whereas $X_3$ is a confounding variable which affects the means of $W$ and $Y$.}
    \label{heterogeneity}
\end{figure}

\begin{figure}[h]
    \centering
    \includegraphics[width=0.99\linewidth]{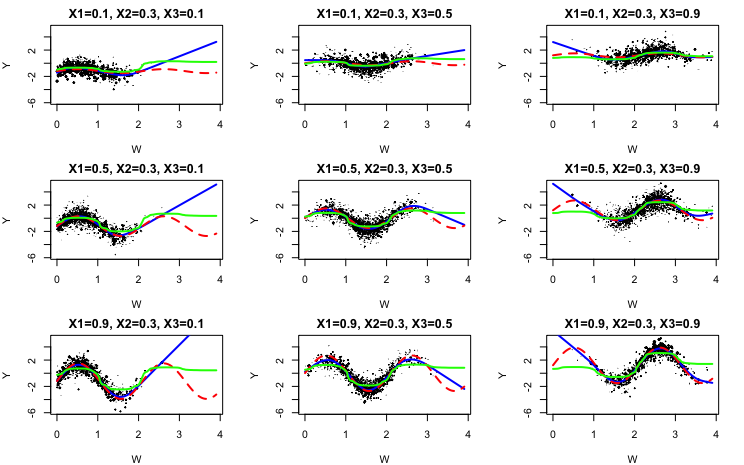}
    \caption{For a grid of values for the test point $\bold{x}$, the scatterplot illustrates the estimated joint distribution $(Y,W)$ by DRF. The subsequent regression fit using smoothing splines is denoted in blue, whereas the true conditional mean $\E[Y\mid W\myeq w, \bold{X}\myeq\bold{x}]$ is denoted with red dashed line. Green line shows the estimate of the conditional mean $\E[Y\mid W\myeq w, \bold{X}\myeq\bold{x}]$ with plain random forest.}
    \label{hetero_regression}
\end{figure}

\begin{figure}[h]
    \centering
    \includegraphics[scale=0.8]{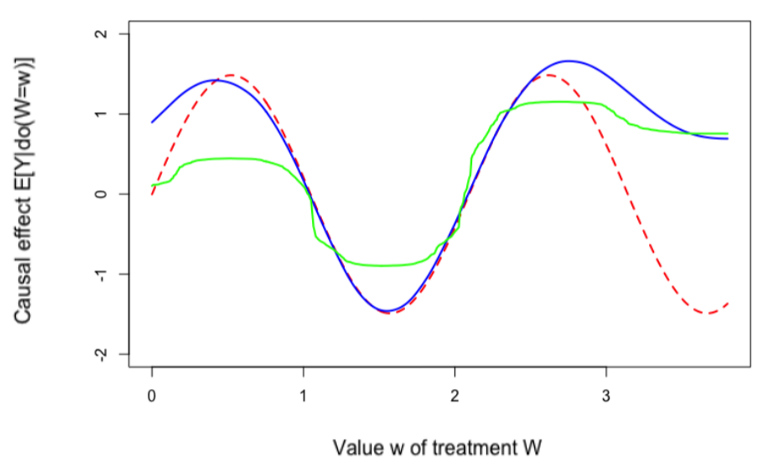}
    \caption{The estimated causal effect $\E[Y \mid do(W\myeq w)]$ with DRF (blue) and with conventional method which regresses $Y$ on $(W, \bold{X})$ using plain Random Forest (green). The true value is denoted by a red dashed line.}
    \label{causal_effect}
\end{figure}

\paragraph{Results}
In Figure \ref{hetero_regression} we can see the estimated joint distribution of $(Y,W)$ conditionally on $\bold{X}\myeq\bold{x}$, where the values of $X_1$ and $X_3$ vary, while the rest are fixed (even though $X_2$ also affects the conditional distribution, the effect is much weaker than for $X_1$ of $X_3$, see Figure \ref{hetero_regression}). We see that the estimated distribution matches the true regression line, denoted in red, very well. The estimated distribution induced by the DRF weights enables us to fit some specialised regression method for regressing $Y$ on $W$ for every fixed value of $\bold{X}$. The blue line indicates the fit obtained by using smoothing splines. Compared to the green line, which shows the predicted values for regression $Y\sim (W, \bold{X})$, it is nicer looking and also is able to extrapolate much better to values of $W$ which have low probability conditionally on $X$.

This extrapolation is crucial for causal applications, since for computing $\E[Y \mid do(W\myeq w)]$ we are interested in what would happen with $Y$ when our treatment variable $W$ is fixed to be $w$, regardless of which values are achieved by $\bold{X}$. However, it can easily happen that for this specific combination of $\bold{X}$ and $W$ there are very few observed data points, which makes the estimation hard \citep{pearl2009causality}. In this example, $W$ tends to be small for small values of $X_3$ and vice-versa and thus is hard to say what would happen with $Y$ when $X_3$ is large and $W$ is set to a small value by an outside intervention. 

In Figure \ref{causal_effect}, we indeed see that the estimates of the causal effect $\E[Y \mid do(W\myeq w)]$ by DRF are much better. One can still see that the error increases for the border values of $W$, which have small probability for some values of $X$, since the estimation there is much harder, but this error is much less pronounced for DRF than for the standard regression approach. 

\clearpage
\small
\bibliography{bibfile}

\end{document}